\long\def\comment#1{}
\newtheorem{definition}{Definition}
\newtheorem{theorem}{Theorem}
\newtheorem{proposition}{Proposition}
\newtheorem{lemma}{Lemma}
\newtheorem{corollary}[theorem]{Corollary}
\newtheorem{examp}{Example}
\newenvironment{theorem-appendix}{\vspace*{0.25cm}\textit{Theorem} }{} 
\newenvironment{proofsketch}{\noindent \textit{Proof}\textit{ (Sketch)}. }{$\Box$\\ }
\newenvironment{proposition-appendix}{\vspace*{0.25cm}\textit{Proposition} }{} 
\newenvironment{example}{\begin{examp}\rm}{\end{examp}}
\newcommand{\la}{\leftarrow}
\newcommand{\ra}{\rightarrow}
\newcommand{\LA}{\langle}
\newcommand{\RA}{\rangle}
\newcommand{\SC}{\textsc}
\newcommand{\vect}{\mathbf}
\newcommand{\REST}{\hspace{-0.05in}\restriction}
\title{Loop Restricted Existential Rules and First-order Rewritability for Query Answering}
\author{Vernon Asuncion$^{\,a}$, Yan Zhang$^{\,a,\,c}$, Heng Zhang$^{\,b}$, Yun Bai$^{\,a}$ and Weisheng Si$^{\,a}$\\
	$^{a\,}$School of Computing Engineering and Mathematics\\
	Western Sydney University, Australia\\
	$^{b\,}$School of Computer Software, Tianjin University, China\\
	$^{c\,}$Huazhong University of Science \& Technology, China}  
\begin{document}

\maketitle

\begin{abstract}
  In ontology-based data access (OBDA), the classical 
  database is enhanced with an ontology in the form
  of logical assertions generating new intensional
  knowledge. A powerful form of such logical
  assertions is the tuple-generating
  dependencies (TGDs), also called existential rules, where
Horn rules are extended by allowing 
existential quantifiers to appear in the rule heads. 
In this paper we introduce a new language called {\em loop restricted} (LR) TGDs (existential rules), which are
TGDs with certain restrictions on the loops embedded in the underlying rule set.
We study the complexity of this new language. We show that the conjunctive query answering 
(CQA) under the LR TGDs is decidable. In particular, 
we prove that 
this language satisfies the so-called bounded derivation-depth property (BDDP), which
implies that the CQA is first-order rewritable, and
its data complexity is in \textsc{AC}$^{0}$.
We also prove that the combined complexity of the CQA 
is \textsc{ExpTime} complete, while the language membership 
is \textsc{Pspace} complete.
Then we extend the LR TGDs language to the generalised loop restricted (GLR) TGDs language, and prove that this class of 
TGDs still remains to be first-order rewritable and properly contains most of other first-order rewritable TGDs classes discovered in
the literature so far.

\comment{

  Horn rules extended with the possibility of existential quantification 
  in the head. On the other hand, allowing existential
  quantification to appear in the heads makes query answering under TGDs 
  in general undecidable, even under finite databases. 
  Thus, an important research direction is to identify
  classes of TGDs where the query answering problem is decidable.
  In this paper, we introduce a novel class of TGDs 
  called loop restricted (LR) TGDs which, to the 
  best of our knowledge, is a new useful decidable class not yet discovered in 
  the literatures. 
}

\end{abstract}

\section{Introduction}


In ontology-based data access (OBDA), a
database is enhanced with an ontology in the form
of logical assertions generating new intensional
knowledge, e.g.,  \cite{baader:jair16,bien:ijcai16,eiter:kr16,kon:2013,niko:2017}.
\comment{
In ontology-enhanced DBMS, an extensional
database $D$ (sometimes called ABox) is combined with an ontology
that specifies rules and constraints that derive new intensional information from
the extensional input. 

this is is vastly different from earlier
notions of querying databases in that queries are answered
against the logical theory $D$ $\cup$ $\Sigma$ rather than
just the database $D$. As such, for a given 
\textit{conjunctive query (CQ)} $q$ of the form
$q(\vect{X})$ $\la$ $\mathsf{body}(\vect{X},\vect{Y})$ such that
$\vect{X}$ are the ``output" variables, $q$'s answer 
are those tuples $\vect{c}$ of constants
such that $D$ $\cup$ $\Sigma$ $\models$ $\exists\vect{Y}\mathsf{body}(\vect{c},\vect{Y})$. 
}
A powerful form of such logical
assertions is the {\em tuple-generating
dependencies (TGDs)}, also called {\em existential rules}. 
Generally speaking, TGDs are 
Horn rules extended by allowing the occurrence of existential quantification 
in the rule head. With this extension, it is able to reason about the 
existence of new or missing objects that are not represented in the 
underlying database \cite{BagetLMS11,Patel-SchneiderH07}. 

Under the language of TGDs, queries are answered against an ontology represented by a set 
of TGDs and an input database. In particular, given a database instance $D$, a finite set 
$\Sigma$ of TGDs, and a query $q$, we want to decide whether $D\cup\Sigma\models q$. 
\comment{
Using the chase procedure, 
which is originally an algorithmic approach in database for 
dependency implication and query containment checking \cite{DeutschNR08,FaginKMP05,JohnsonK84,MaierMS79,ZhangZY15},
this problem is equivalent to deciding whether $\mathsf{chase}(D,\Sigma)\models q$.
}
However, 
this problem is undecidable generally, due to the
potential cyclic applications of TGDs in $\Sigma$ \cite{DeutschNR08}.

In recent years, considerable research has been carried out to identify various
 expressive decidable classes of TGDs. 
So far 
several primary such classes have been discovered:  
\textit{weakly-acyclic} class \cite{FaginKMP05};
\textit{guarded} class \cite{BagetLMS11,CaliGK08,CaliGL12};
\textit{sticky sets} class \cite{CaliGP12}; and \textit{Shy programs} class \cite{LeoneMTV12}.
By extending and combining these aforementioned classes, 
more decidable classes can be derived, 
%
such as 
\textit{glut-guardedness} (weak-acyclicity + guardedness) \cite{KrotzschR11};
\textit{weak-stickiness} (weak-acyclicity + stickiness) \cite{CaliGP12}; \textit{model-faithful acyclicity} (MFA) \cite{g13}; 
and \textit{tameness} (guardedness + stickiness) \cite{GottlobMP13}.

\comment{
Unfortunately, there are still real life scenarios that are qute intuitive but not expressible by any of the existing 
decidable existential rule languages, as illustrated by the following example.

weakly recursive TGDs \cite{wr-2012}.
}

Among all these decidable classes, some are of special interests for OBDA, i.e., 
the classes of first-order rewritable TGDs, where
conjunctive query answering can be reduced to the
evaluation of a first-order query over the database. As such, traditional database query techniques may be
used for developing efficient query answering systems in OBDA, as demonstrated in Description Logics 
\cite{hansen:ijcai15,Kam:2014}.
 So far, several useful first-order rewritable classes of TGDs have been
discovered: {\em acyclic} TGDs, {\em aGRD} TGDs, 
{\em linear} and {\em multi-linear} TGDs, {\em sticky} and {\em sticky-join} TGDs, while
multi-linear and sticky-join TGDs generalise linear TGDs and sticky TGDs, respectively 
\cite{CaliGL12,CaliGP12}.

Civili and Rosati 
\cite{wr-2012}
further identified another first-order rewritable class called 
{\em weakly recursive} TGDs, and showed that by restricting to simple TGDs,
weakly recursive class contains all other first-order rewritable classes. 
%

Unfortunately, there are still real life scenarios that are simple and
intuitive but not syntactically recognisable by any of the existing 
first-order rewritable TGDs classes, as illustrated by the following example.

\begin{example}
Consider a university research domain, where we have the following ontology $\Sigma_{\mathsf{Research}}$ to represent its knowledge rules:
%
%
%
We have the following ontology $\Sigma_{\mathsf{Research}}$ to represent this domain:
\begin{quote}
$\sigma_1: \mathsf{resAdvisor}(X,W)\ra \mathsf{seniorStaff}(X)$, 

\vspace*{.02in}

$\sigma_2: \mathsf{seniorStaff}(X),\mathsf{advCommittee}(X,Y)$, \\
\hspace*{.2in}
$\mathsf{projDept}(X,Y) \ra$
$\exists W \mathsf{resAdvisor}(X,W)$.\\  
\vspace*{.02in}
\hspace*{-0.15cm}$\sigma_3: \mathsf{resStudent}(W) \ra \exists XY \mathsf{resAdvisor}(X,W)$,\\
\hspace*{.4in} $\mathsf{enrolDept}(W,Y),  \mathsf{projDept}(W,Y,Y)$, \\
\hspace*{.4in}  $\mathsf{projDept}(X,Y,Y)$.
\end{quote}
%
$\sigma_1$ says that if $X$ is a research advisor of someone,
then $X$ must be
a senior staff; 
$\sigma_2$ states that
if $X$ is a senior staff and a member of the department $Y$'s advisory committee, and 
$X$ also
undertakes a project registered in department $Y$, then $X$ must be a research advisor of 
someone\footnote{In general, $\mathsf{projDept}(X,Y,Z)$ means that staff $X$ from department $Y$ undertakes
a project registered in department $Z$.};
and
$\sigma_3$ indicates that a research student $W$ must have an advisor and should undertake 
the project together with the advisor from the same department, while the project has to be also registered in 
the this department.

Through a careful examination, it is not difficult to see that $\Sigma_{\mathsf{Research}}$ 
is not recognizable under the syntactic conditions of
all currently known first-order rewritable TGDs classes.
On the other hand, by unfolding the derivations on atoms $\mathsf{seniorStaff}(X)$ and $\mathsf{resAdvisor}(X,W)$ from
$\Sigma_{\mathsf{Research}}$, it turns out that their derivations are always bounded by a fixed length independent from
any input database. That is, the underlying $\Sigma_{\mathsf{Research}}$ satisfies the so-called BDDP property, from
which we know that the query answering under $\Sigma_{\mathsf{Research}}$ is not only 
decidable, but also first-order rewritable \cite{CaliGL12}.
\comment{we also note  that 
since the heads of $\sigma_1$ and $\sigma_2$ share the same universally quantified variable $X$, which is 
also exactly the same as the common variable of the recursive atoms in the bodies of $\sigma_1$ and $\sigma_2$, i.e., 
$\mathsf{seniorStaff}(X)$ and $\mathsf{seniorStaff}(X)$, respectively, 
this leads that the derivations for both atoms $\mathsf{seniorStaff}(X)$ and $\mathsf{resAdvisor}(X,W)$
will be always finite for all input databases, and 
furthermore, the derivation depths of these atoms are bounded
by a fixed number independent from the input database.
This would conclude that the query answering problem of $\Sigma_{\mathsf{Research}}$ is decidable, 
and also first-order rewritable.}
$\Box$
\label{ex1}
\end{example}

Main contributions of this paper are summarised here:
\begin{enumerate}
\item We define notations of {\em derivation paths} and {\em derivation 
trees} for query answering over TGDs (existential rules), and provide
a precise characterisation for the traditional TGDs chase procedure through the corresponding derivation tree
(Section 3).
\item Based on the concept of derivation paths, we 
introduce a new class called {\em loop restricted} (LR) TGDs,
which are TGDs with certain restrictions on the loops embedded in the underlying
rule set (Section 4).
\item Under our derivation tree framework, 
we show that the conjunctive query answering (CQA) under LR TGDs satisfies a property called 
{bounded derivation tree depth property (BDTDP)}. We further prove that BDTDP implies
the well-known bounded derivation-depth property (BDDP). This result implies that 
conjunctive query answering under LR TGDs is not only decidable but also first-order rewritable
(Section 4).
\item We further extend LR TGDs to {\em generalised loop restricted} (GLR) TGDs, and prove that the class of GLR TGDs is also first-order rewritable and contains most of other first-order rewritable TGD classes discovered in the literature so far
(Sections 5 and 6).
\end{enumerate}


\comment{
introduce a new class called {\em loop restricted} (LR) TGDs,
which are TGDs with certain restrictions on the loops embedded in the underlying
rule set. We show that the  conjunctive query answering (CQA) under the LR TGDs is decidable. 
By proving that the class of LR TGDs satisfies the bounded derivation-depth property (BDDP),
and since BDDP is a sufficient condition for first-order rewritability \cite{CaliGL12}, 
we further know that the CQA under the LR TGDs is first-order rewritable, and its data complexity is in 
$\textsc{AC}^{0}$. 
We also prove that the combined complexity of the CQA is 
$\textsc{ExpTime}$ complete, while deciding whether a finite set of existential rules is loop restricted is $\textsc{Pspace}$ complete.

We then extend the notion of LR TGDs to {\em generalised loop restricted} (GLR) TGDs, and prove
that this class of TGDs remains to be first-order rewritable and properly contains most other 
first-order rewritable TGDs classes discovered in the literature so far.
}

\comment{The rest of this paper is organised as follows. Section 2 presents necessary preliminaries.
Section 3
defines the concepts of derivation paths and derivation trees, from which a characterisation for the chase procedure is provided. 
By defining the notion of loop patterns, section 4 then introduces the loop restricted (LR) 
TGDs and proves main properties of this new class.
Section 5 further generalises the class of LP TGDs to a class called generalised loop restricted (GLR) TGDs.
Section 6 
investigates the relationship between the class of
GLR TGDs and other existing first-order rewritable classes of TGDs.   
Finally, section 7 concludes the paper with some remarks. 
}

\section{Preliminaries}


In this section, we introduce necessary notions and definitions we will need through out this paper.

\vspace*{.1in}

\noindent{\bf Databases and queries}. 
We define the following pairwise disjoint (countably infinite) sets of symbols:
a set $\Gamma$ of {\em constants}, which constitute the domain of
databases, a set $\Gamma_{N}$ of {\em labeled nulls} that
will be used as ``fresh" Skolem terms as placeholders for unknown
values, and a set $\Gamma_{V}$ of regular {\em variables}. For convenience, we usually use
$a, b, c, \cdots$ to denote constants, $\mathsf{n, n', n''} \cdots$ to denote nulls, and 
$X,Y, Z, \cdots$ to denote 
variables\footnote{Possibly these constants, nulls and variable are subscripted with indexes.}.
Note that different nulls may also represent the same value. 
We assume a lexicographic order on
$\Gamma\cup\Gamma_{N}$, with every symbol in $\Gamma_{N}$ following all symbols in $\Gamma$. We
use $\mathbf X$ to denote a sequence of variables $X_1,\cdots,X_n$, where $n\ge 0$. Sometimes, we also
represent such $\mathbf{X}$ as a $n$-ary tuple of variables $(X_1,\cdots,X_n)$.
A similar notion also applies to nulls.

A {\em relational schema} $\mathcal R$ is a finite set of {\em relation symbols} 
(or {\em predicates}). A {\em term} is a constant, null or variable. An {\em atom} has the form
$p(t_1,\cdots,t_n)$, where $p$ is an $n$-ary predicate, and $t_1,\dots, t_n$ are terms.
We denote by $|p|$ and $\mathsf{dom}(p)$ as $p$'s arity and the set of all its
terms respectively. The latter notion is naturally extended to sets of atoms and conjunctions of atoms. A
conjunction of atoms is often identified with the set of all its atoms. 

A {\em database} $D$ for a relational schema ${\mathcal R}$ is a finite set of atoms with
predicates from ${\mathcal R}$ and constants from $\Gamma$. 
That is, $\mathsf{dom}(D)\subseteq \Gamma$.  We also use $\mathsf{pred}(D)$ to denote the 
set of all predicates occurring in $D$.
An {\em instance} $I$ for a relational schema ${\mathcal R}$ is a (possibly infinite) set 
of atoms with predicates
from ${\mathcal R}$ and terms from $\Gamma\cup\Gamma_{N}$. Clearly, each database $D$ for 
${\mathcal R}$
may be viewed as a special form of instance, and further, it can be extended to an instance 
$I$ such that $D\subseteq I$ and $\mathsf{pred}(I)={\mathcal R}$.

A {\em homomophism} from a set of atoms $\mathbf{A}$ to a set of atoms $\mathbf{A'}$ 
is a mapping $h:$ $\Gamma$ $\cup$ $\Gamma_{N}$ $\cup$ $\Gamma_{V}$ $\rightarrow$ $\Gamma$
$\cup$ $\Gamma_{N}$ $\cup$ $\Gamma_{V}$,
such that (i) if $t\in \Gamma$, then $h(t)=t$; (ii) if $t\in \Gamma_N$, then $h(t)\in \Gamma\cup\Gamma_{N}$; 
and
(iii) if $p(t_1,\cdots,t_n)\in \mathbf{A}$, then $p(h(t_1),\cdots,h(t_n))\in \mathbf{A'}$. 
Let $\mathbf{T}$ be the set of all terms occurring in $\mathbf{A}$. 
The {\em restriction} $h'$ of $h$ to $\mathbf{S}\subseteq \mathbf{T}$, denoted as
$h'=h|_{\mathbf{S}}$, is simply the subset of $h:$ $h'=\{t\rightarrow h(t)\mid t\in \mathbf{S}\}$.
Here we also call $h$ is an {\em extension} of $h'$ to $\mathbf{T}$.

A {\em conjunctive query (CQ)} $q$ of arity $n$ over a schema ${\mathcal R}$ has the form
$p(\mathbf{X})\leftarrow \exists \mathbf{Y} \varphi(\mathbf{X,Y})$, where
$\varphi(\mathbf{X,Y})$ is a conjunction of atoms with the variables
$\mathbf{X}$ and $\mathbf{Y}$ from $\Gamma_{V}$ and constants from $\Gamma$, but without nulls, and $p$ is an $n$-ary predicate not occurring
in $\mathcal R$.
We allow $\varphi(\mathbf{X,Y})$ to contain equalities but no inequalities. 
When $\varphi(\mathbf{X},\mathbf{Y})$ is just a single atom, then we say that the CQ $q$ is 
\textit{atomic}. 
A {\em Boolean Conjunctive Query (BCQ)}
over ${\mathcal R}$ is a CQ of zero arity. In this case,
we can simply write a BCQ $q$ as $\exists \mathbf{Y} \varphi(\mathbf{Y})$. 
\comment{
For simplicity, we usually write a BCQ as the set of atoms with corresponding constants and variables as the
arguments, and omitting the quantifiers.
} 
A CQ answering problem, or called CQA problem, defined to be  
the {\em answer} to a CQ $q$ with $n$ arity over an instance $I$, denoted as $q(I)$,
is the set of all $n$-tuples $\mathbf{t}\in \Gamma^{n}$ for which there exists a homomorphism 
$h: \mathbf{X}\cup\mathbf{Y}\rightarrow \Gamma\cup \Gamma_{V}$ such that
$h(\varphi(\mathbf{X},\mathbf{Y}))\subseteq I$ and $h(\mathbf{X})=\mathbf{t}$. 
The answer to a BCQ is {\em positive} over $I$, denoted as $I\models q$, if 
$\langle\rangle \in q(I)$.

\vspace*{.1in}
\noindent
{\bf TGDs and conjunctive query answering (CQA)}.
A tuple-generating dependency (TGD) $\sigma$, also called {\em existential rule},  over a schema $\mathcal R$ is a first-order 
formula of the form
\begin{eqnarray}\label{lb1}
  \sigma:  \forall \mathbf{XY} \varphi(\mathbf{X}, \mathbf{Y})\rightarrow \exists\mathbf{Z}\psi(\mathbf{X},\mathbf{Z}),    
\end{eqnarray}
where $\mathbf{X}$ $\cup$ $\mathbf{Y}$ $\cup$ $\mathbf{Z}$ $\subset$ $\Gamma$ $\cup$ $\Gamma_{V}$, 
$\varphi$ and $\psi$ are conjunctions of atoms over $\mathcal R$.
When there is no confusion, we usually omit the universal quantifiers  from (\ref{lb1}). 
In this case, we also 
use $\mathsf{head}(\sigma)$ and $\mathsf{body}(\sigma)$ to denote formulas 
$\exists\mathbf{Z}\psi(\mathbf{X},\mathbf{Z})$ and 
$\varphi(\mathbf{X}, \mathbf{Y})$ respectively.

Let $I$ be an instance over $\mathcal R$. We say that $\sigma$ {\em is satisfied} in $I$, 
denoted as
$I$ $\models$ $\sigma$, if whenever there is a homomorphism $h$ such that 
$h(\varphi(\mathbf{X},\mathbf{Y}))$ $\subseteq$ $I$, then there 
exists an extension $h'$ of $h|_{\mathbf{X}}$ such that 
$h'(\psi(\mathbf{X},\mathbf{Z}))$ $\subseteq$ $I$.

Given a database $D$, a (finite) set $\Sigma$ of TGDs and a CQ $q$ of arity $n$ over schema $\mathcal R$. 
The {\em models} of $D$ with respect to $\Sigma$, denoted as
$\mathsf{mod}(D,\Sigma)$, is the set of all instances $I$ such that $I\supseteq D$ and $I\models\Sigma$.
Then a CQ answering problem, or called CQA problem, denoted as 
$\langle {\mathcal R}, D, \Sigma, q\rangle$, 
is described as follows:
the {\em answer } to $q$ with respect to $D$ and $\Sigma$, 
denoted as $\mathsf{ans}(q,D,\Sigma)$, is the set of all tuples: $\{\mathbf{t}\mid \mathbf{t}\in q(I)$, 
for each $I\in\mathsf{mod}(D,\Sigma)\}$. 
When $q$ is a BCQ, the answer to $q$ is called {\em positive} if 
$\langle\rangle \in \mathsf{ans}(q,D,\Sigma)$.
\comment{
Then the {\em (Boolean) conjunctive query answering problem}, or BCQA/CQA for short, is defined as follows: 
given a CQ $q$ with arity $n$ over schema $\mathcal R$, a database $D$ for $\mathcal R$, a set $\Sigma$ of 
existential rules over $\mathcal R$, and an tuple $\mathbf{t}\in \Gamma^{n}$, 
decide whether $\mathbf{t}\in \mathsf{ans}(q,D,\Sigma)$.
}
It is well known that the CQA problem and the problem of CQ containment under TGDs
 are \textsc{LogSpace}-equivalent, 
and hence, in the rest of this paper, we will only focus on
the BCQA problem, because all complexity results can be carried over to other problems \cite{CaliGP12}.


\vspace*{.1in}
\noindent
{\bf The chase algorithm}.
\comment{
The {\em chase} is a fundamental algorithm used in databases for checking implication of dependencies and query containment, as well as for computing solutions 
It has been showed that the chase can be also used to compute the answers to CQs under TGDs rules.
Now we present the TGD chase procedure\footnote{Note that the chase procedure works on an instance through TDG chase rule with two forms 
    named {\em oblivious} and
    {\em restricted} TGD chase rule respectively. Here we will focus on the first form. The key difference between these two forms of TGD chase rule is 
    described in \cite{CaliGP12}.}.
}
Consider an instance $I$ and a TGD
 $\sigma$ of the form (\ref{lb1}). We say that $\sigma$ is {\em applicable} to $I$ if there exists
a homomorphism $h$ such that
$h(\varphi(\mathbf{X},\mathbf{Y}))\subseteq I$. 
The {\em result} of applying $\sigma$ to $I$ is an instance 
$I'=I\cup h'(\psi(\mathbf{X},\mathbf{Z}))$, where $h'$ is an extension of $h|_{\mathbf{X}}$ such 
that for each $Z\in \mathbf{Z}$, $h'(Z)$ is a ``fresh" labeled null of $\Gamma_{N}$ not occurring in $I$, and following 
lexicographically all those in $I$. Then the oblivious TGD chase algorithm for a database $D$ and 
a set $\Sigma$ of TGDs consists of an exhaustive application of chase steps in a fair fashion, 
which leads to a collection of all instances $I'$ generated as described above, denoted as 
$\mathsf{chase}(D,\Sigma)$. Note that each instance of $\mathsf{chase}(D,\Sigma)$ is a model of $D\cup\Sigma$.

The above chase rule gives rise to the so-called chase sequence. A {\em chase sequence}:
$I_0\xrightarrow{\sigma_i,\,h_i}I_1$, $\ldots$, $I_{k}\xrightarrow{\sigma_{k},\,h_{k}}I_{k+1}$,
denotes the sequence of applications of the TGD chase rule such that:
(1) $I_0$ $=$ $D$; (2) for each $i$ $\in$ $\{1,\ldots,k\}$, 
$I_{i}\xrightarrow{\sigma_{i},\,h_{i}}I_{i+1}$ denotes the instance 
$I_{i+1}$ $=$ $I_{i}$ $\cup$ $\{h'_{i}(\mathsf{head}(\sigma_{i}))\}$ such that
assuming $\sigma_i$ $=$ $\varphi(\vect{X},\vect{Y})$ $\ra$ $\exists\vect{Z}\psi(\vect{X},\vect{Z})$,
then $h'_{i}$ is the extension of the homomorphism $h_{i}\REST_{\vect{X}}$ such that
$h_i(\varphi(\vect{X},\vect{Y}))$ $\subseteq$ $I_i$. 
Then lastly, for $k$ $\geq$ $1$, 
we denote by $\mathsf{chase}^{[k]}(D,\Sigma)$ as the resulting instance $I_k$
that is the result of the chase sequence: $I_0\xrightarrow{\sigma_i,\,h_i}I_1$, $\ldots$, $I_{k-1}\xrightarrow{\sigma_{k-1},\,h_{k-1}}I_{k}$. 

The notion \emph{level} in a chase is defined inductively as follows \cite{CaliGP12}: (1 ) for an atom $\alpha$ $\in$ $D$, we set $\textsc{level}(\alpha)$ $=$ $0$; then inductively,
(2) for an atom $\alpha$ $\in$ $\mathsf{chase}(D,\Sigma)$ obtained via some chase step $I_{k}\xrightarrow{\sigma,\,\eta}I_{k+1}$,
we set $\textsc{level}(\alpha)$ $=$ $\textsc{max}\big(\big\{\textsc{level}(\beta) \mid \beta\in\mathsf{body}(\sigma\eta)\}\big)$ $+$ $1$. Then finally,
for some given $k$ $\in$ $\mathbb{N}$, we set 
$\mathsf{chase}^k(D,\Sigma)$ $=$ $\big\{\alpha\mid\alpha\in\mathsf{chase}(D,\Sigma)\mbox{ and }\textsc{level}(\alpha)\leq k\big\}$.
Intuitively, $\mathsf{chase}^k(D,\Sigma)$ is the instance containing atoms that can be derived in a less than or equal to 
$k$ chase steps.

Given an atom $p(\vect{t})$ such that $\vect{t}$ $\in$ $(\Gamma$ $\cup$ $\Gamma_N)^{|\vect{t}|}$, 
we say that $\mathsf{chase}(D,\Sigma)$ {\em entails} $p(\vect{t})$ ($\mathsf{chase}^{[k]}(D,\Sigma)$ 
entails $p(\vect{t})$), denoted $\mathsf{chase}(D,\Sigma)$ $\models$ $p(\vect{t})$
($\mathsf{chase}^{[k]}(D,\Sigma)$ $\models$ $p(\vect{t})$, resp.),
iff there exists some atom of the same relational symbol $p(\vect{t'})$ $\in$ $\mathsf{chase}(D,\Sigma)$
($p(\vect{t'})$ $\in$ $\mathsf{chase}^{[k]}(D,\Sigma)$, resp.)
and a homomorphism $h:$ $\vect{t}$ $\longrightarrow$ $\vect{t'}$ such that 
$h(p(\vect{t}))$ $=$ $p(\vect{t'})$. 

\begin{theorem}\label{thm_chase_iff_q} \cite{CaliGP12}
    Given a BCQ $q$ over $\mathcal R$, a database $D$ for $\mathcal R$ and a  set $\Sigma$ of 
    TGDs over $\mathcal R$, $D\cup \Sigma\models q$ iff $\mathsf{chase}(D,\Sigma)\models q$.
\end{theorem}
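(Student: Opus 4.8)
The plan is to prove the standard fact that $\mathsf{chase}(D,\Sigma)$ is a \emph{universal model} of $D\cup\Sigma$, i.e.\ that it both satisfies $D\cup\Sigma$ and admits a homomorphism into every other model. Once universality is in place, the biconditional drops out of the observation that the truth value of a BCQ is preserved under homomorphisms. Accordingly I would organise the argument into two claims---(a) the chase is a model, and (b) the chase is universal---followed by a short wrap-up deriving the stated equivalence.

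For (a), the containment $D\subseteq\mathsf{chase}(D,\Sigma)$ is immediate from $I_0=D$. To see that $\mathsf{chase}(D,\Sigma)\models\sigma$ for each $\sigma:\varphi(\mathbf{X},\mathbf{Y})\to\exists\mathbf{Z}\,\psi(\mathbf{X},\mathbf{Z})$ in $\Sigma$, suppose a homomorphism $h$ satisfies $h(\varphi(\mathbf{X},\mathbf{Y}))\subseteq\mathsf{chase}(D,\Sigma)$. Since $\varphi$ has finitely many atoms, each living at some finite level, all of $h(\varphi)$ already appears in some $\mathsf{chase}^{[k]}(D,\Sigma)$; fairness of the exhaustive chase then guarantees that the pair $(\sigma,h)$ is eventually selected as a chase step, producing an extension $h'$ of $h\REST_{\mathbf{X}}$ with $h'(\psi(\mathbf{X},\mathbf{Z}))\subseteq\mathsf{chase}(D,\Sigma)$. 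Hence $\sigma$ is satisfied, so $\mathsf{chase}(D,\Sigma)\in\mathsf{mod}(D,\Sigma)$.

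For (b), fix any $I\in\mathsf{mod}(D,\Sigma)$ and build a homomorphism $\mu$ from $\mathsf{chase}(D,\Sigma)$ to $I$ by induction along a chase sequence $I_0\to I_1\to\cdots$. In the base case $D\subseteq I$ gives the identity on $D$. For the inductive step, assume $\mu_k$ is a homomorphism from $I_k$ to $I$, and let $I_{k+1}=I_k\cup\{h'_k(\mathsf{head}(\sigma_k))\}$ arise from applying $\sigma_k$ with $h_k(\varphi(\mathbf{X},\mathbf{Y}))\subseteq I_k$. Then $\mu_k\circ h_k$ maps $\varphi$ into $I$, and since $I\models\sigma_k$ there is a map $g$ with $g(\psi(\mathbf{X},\mathbf{Z}))\subseteq I$ agreeing with $\mu_k\circ h_k$ on $\mathbf{X}$. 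I then extend $\mu_k$ to $\mu_{k+1}$ by sending each fresh null $h'_k(Z)$ to $g(Z)$. The step I expect to need the most care is checking that this extension is well defined and remains a homomorphism: the nulls introduced at step $k+1$ are, by construction, fresh and do not occur in $I_k$, so $\mu_{k+1}$ does not conflict with $\mu_k$, and the newly added atoms land in $I$ precisely because $g$ witnesses the head. Taking the union of the $\mu_k$ over the (possibly infinite) sequence yields the desired $\mu$.

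Finally I would close the biconditional. For the forward direction, $D\cup\Sigma\models q$ means $q$ holds in every model; since by (a) $\mathsf{chase}(D,\Sigma)$ is itself a model, $\mathsf{chase}(D,\Sigma)\models q$. For the converse, a positive answer to the BCQ $q=\exists\mathbf{Y}\,\varphi(\mathbf{Y})$ over $\mathsf{chase}(D,\Sigma)$ is a homomorphism from $\varphi$ into $\mathsf{chase}(D,\Sigma)$; composing it with the universal homomorphism $\mu$ from (b) yields a homomorphism from $\varphi$ into an arbitrary model $I$, so $I\models q$ for every $I\in\mathsf{mod}(D,\Sigma)$, i.e.\ $D\cup\Sigma\models q$.
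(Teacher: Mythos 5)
Your proposal is correct, but note that the paper does not actually prove this statement: Theorem 1 is imported verbatim from \cite{CaliGP12}, and the paper's (appendix) justification is a one-line pointer to Theorem 2.1 of that work. What you have done is reconstruct the standard argument that lies behind the citation, namely that $\mathsf{chase}(D,\Sigma)$ is a \emph{universal model} of $D\cup\Sigma$: claim (a) that it is a model, claim (b) that it maps homomorphically into every model, and the wrap-up using preservation of BCQs under homomorphisms. This is exactly the classical decomposition from the database-theory literature, so mathematically you are on the same route as the cited source rather than a divergent one; the value of your write-up is that it makes explicit the two points the citation glosses over, and you handle both correctly. For (a) you correctly invoke fairness of the exhaustive (oblivious) chase, which is needed because the body atoms of a satisfied premise all sit at finite levels and the applicable pair $(\sigma,h)$ must eventually be fired; for (b) you correctly observe that well-definedness of the extended homomorphism $\mu_{k+1}$ rests on the freshness of the nulls introduced at step $k+1$ (they occur nowhere in $I_k$, so no clash with $\mu_k$ is possible), and that taking the union of the increasing chain $\mu_0\subseteq\mu_1\subseteq\cdots$ handles the possibly infinite chase. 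One small point worth stating explicitly in a final version: since homomorphisms fix constants (condition (i) of the paper's definition), the composed map in your converse direction respects any constants occurring in $q$, which is what makes the composition argument close.
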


\begin{definition}
[\bf{BDDP}]
\label{def-BDDP}
A class ${\mathcal C}$ of TGDs  
satisfies the {\em bounded derivation-depth property} (BDDP) if for each BCQ $q$ over a schema $\mathcal{R}$, for 
every input database $D$ for $\mathcal{R}$ and 
for every set $\Sigma\in {\mathcal C}$ over ${\mathcal R}$, $D\cup\Sigma\models q$ implies that there exists some $k\geq 0$ which
only depends on $q$ and $\Sigma$ such that $\mathsf{chase}^{k}(D,\Sigma)\models q$.
\end{definition}

It has been shown that the BDDP implies the first-order rewritability \cite{CaliGL12,CaliGP12}. 
Formally, the BCQA problem is {\em first-order rewritable} for a class $\mathcal{C}$ of sets of TGDs
if for each $\Sigma\in \mathcal{C}$, and each BCQ $q$, there exists a first-order query
$q_{\Sigma}$ such that $D\cup \Sigma\models q$ iff $D\models q_{\Sigma}$, for every input database $D$.
In this case, we also simply say that the class $\mathcal{C}$ of TGDs is {\em first-order rewritable}.

\section{Derivation Paths and Derivation Trees}

\comment{
For a given set $\Sigma$ of TGDs, by taking different input databases $D$, the 
chase procedure $\mathsf{chase}(D,\Sigma)$ generates different results. However,
it is not difficult to observe that atoms of $\mathsf{chase}(D,\Sigma)$ are actually 
generated by following certain derivation pattens embedded in the rules of 
$\Sigma$, which, in some sense, are independent from the input database $D$. In this section, we will provide a
characterization for this derivation property underlying every given $\Sigma$. 
}

First of all, to simplify our investigations, 
from now on, we will assume that for any given set $\Sigma$ of 
TGDs, 
each TGD $\sigma$ in $\Sigma$ is of a specific form: $\sigma$ has only one atom in the head where each 
existentially quantified variable occurs only once. That is, $\Sigma$ consists of the following rule:
\begin{eqnarray}
    \sigma: \varphi(\mathbf{X}, \mathbf{Y})\rightarrow \exists \mathbf{Z} p(\mathbf{X},\mathbf{Z}).
    \label{lb3}
\end{eqnarray}

\begin{theorem}\label{single_heads}
	Let $q$ be a BCQ over $\mathcal R$, $D$ a database for $\mathcal R$ and $\Sigma$  a set of 
	TGDs over $\mathcal R$. Then we have:
	\begin{enumerate}
	\item	
There exists a \textsc{LogSpace} construction of an atomic BCQ $q'$ 
	and a set of TGDs $\Sigma'$ of schema ${\cal R'}$ $\supseteq$ ${\cal R}$,
	where $|{\sf head}(\sigma')|$ $=$ $1$ for each $\sigma'$ $\in$ $\Sigma'$, 
	such that $D\cup\Sigma$ $\models$ $q$ iff  $D\cup\Sigma'$ $\models$ $q'$ \cite{CaliGP12}.
	\item 
	If $\Sigma'$ satisfies BDDP then $\Sigma$ also satisfies BDDP.
	\end{enumerate}
\end{theorem}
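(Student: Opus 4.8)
The plan is to treat part 1 as the given construction behind the normalization and to concentrate on part 2, transferring the depth bound \emph{backwards} along that construction. First I would fix the standard head-splitting that realizes part 1: each rule $\sigma:\varphi(\mathbf X,\mathbf Y)\to\exists\mathbf Z\,(p_1\wedge\cdots\wedge p_m)$ of $\Sigma$ is replaced in $\Sigma'$ by one rule $\sigma_0:\varphi(\mathbf X,\mathbf Y)\to\exists\mathbf Z\,p_\sigma(\mathbf X,\mathbf Z)$ carrying a fresh predicate $p_\sigma$ (so each existential variable occurs once), together with projection rules $\sigma_i:p_\sigma(\mathbf X,\mathbf Z)\to p_i(\dots)$ extracting the $i$-th head atom ($i=1,\dots,m$); and the BCQ $q=\exists\mathbf Y\,\varphi_q(\mathbf Y)$ is turned into the atomic query $q'=q_0()$ by adding a single rule $\rho_q:\varphi_q(\mathbf Y)\to q_0()$ with $q_0$ fresh. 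All of $\Sigma'$, $q'$ and the new predicates depend only on $\Sigma$ and $q$, never on $D$.

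The crux is a single lemma relating the two chases on the original schema $\mathcal R$. \textbf{Lemma.} There is a homomorphism $g$ from the set of $\mathcal R$-atoms of $\mathsf{chase}(D,\Sigma')$ into $\mathsf{chase}(D,\Sigma)$ that fixes constants and is level-non-increasing: if $\textsc{level}(\alpha)=\ell$ in the $\Sigma'$-chase then $\textsc{level}(g(\alpha))\le\ell$ in the $\Sigma$-chase. I would prove this by induction on $\ell$, building $g$ incrementally. For $\ell=0$ the atoms are exactly $D$ and $g$ is the identity. For the step, an $\mathcal R$-atom at level $\ell$ must be produced by some projection rule $\sigma_i$ from a $p_\sigma$-atom at level $\ell-1$, which is itself produced by $\sigma_0$ from a homomorphic image $h(\varphi)$ of a rule body whose ($\mathcal R$-)atoms sit at level $\le\ell-2$; applying the induction hypothesis to those body atoms places $g(h(\varphi))$ inside $\mathsf{chase}^{\ell-2}(D,\Sigma)$, so the oblivious chase of $\Sigma$ has fired the original rule $\sigma$ on them and produced the whole head, including a copy of the target atom, at level $\le\ell-1$; extending $g$ to send the nulls freshly created by $\sigma_0$ in $\Sigma'$ to the corresponding nulls created by $\sigma$ in $\Sigma$ closes the step. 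The fresh $q_0$- and $p_\sigma$-atoms never feed the derivation of any $\mathcal R$-atom, so they do not disturb the induction.

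With the lemma in hand the conclusion is short. Suppose $D\cup\Sigma\models q$. By part 1 and Theorem~\ref{thm_chase_iff_q}, $D\cup\Sigma'\models q'$, hence $\mathsf{chase}(D,\Sigma')\models q'$. Since $\Sigma'$ satisfies BDDP there is some $k'$, depending only on $q'$ and $\Sigma'$ and therefore only on $q$ and $\Sigma$, with $\mathsf{chase}^{k'}(D,\Sigma')\models q'$. As $q_0$ is derivable only through $\rho_q$, this means $\varphi_q$ maps homomorphically into the $\mathcal R$-atoms of $\mathsf{chase}^{k'-1}(D,\Sigma')$; composing that homomorphism with $g$ and invoking the level bound of the lemma yields a homomorphism from $\varphi_q$ into $\mathsf{chase}^{k'-1}(D,\Sigma)$, i.e. $\mathsf{chase}^{k'-1}(D,\Sigma)\models q$. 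Taking $k=k'-1$, which again depends only on $q$ and $\Sigma$, and quantifying over all $D$ and all $q$, shows $\Sigma$ satisfies BDDP.

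The main obstacle I anticipate is the lemma, and specifically the bookkeeping that keeps $g$ a single coherent homomorphism as nulls are introduced: the body atoms of one rule application may share nulls created by several earlier applications, so $g$ must be defined consistently on all of them, and one must check that the oblivious (fair) chase of $\Sigma$ has indeed already produced the image body before the level at which the $\Sigma'$-chase produces the target. Getting the off-by-one in the level accounting right — the two steps $\sigma_0,\sigma_i$ in $\Sigma'$ collapsing into the single step of $\sigma$ in $\Sigma$ — is the one place needing care, but it only helps us, since the derivation depth in $\Sigma$ never exceeds that in $\Sigma'$.
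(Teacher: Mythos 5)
Your proposal is correct, and for the interesting half of the theorem it is actually more complete than what the paper records. For part~1 you use essentially the paper's own construction (taken from Lemma~A.1 of \cite{CaliGP12}): an auxiliary predicate per multi-atom head with projection rules, plus one fresh rule turning ${\sf body}(q)$ into an atomic query; the only cosmetic difference is that your query predicate is $0$-ary where the paper's $r^*$ carries ${\sf var}(q)$, which changes nothing. For part~2, however, your route diverges from the paper's (commented-out) sketch in a substantive way. The paper justifies the BDDP transfer by asserting the \emph{forward} containment: for each $i\geq 0$ there is $j\geq 0$, depending only on $\Sigma$, with $\mathsf{chase}^{[i]}(D,\Sigma)\subseteq\mathsf{chase}^{[i+j]}(D,\Sigma')$. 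By itself this embeds $\Sigma$-derivations into the $\Sigma'$-chase, which is not the direction one needs: having obtained $\mathsf{chase}^{k'}(D,\Sigma')\models q'$ from BDDP of $\Sigma'$, one must pull the witness \emph{back} into a level-bounded portion of $\mathsf{chase}(D,\Sigma)$. Your level-non-increasing homomorphism lemma --- $g$ mapping the $\mathcal R$-atoms of the $\Sigma'$-chase into the $\Sigma$-chase, with the two-step $(\sigma_0,\sigma_i)$ simulation collapsing into one $\sigma$-step so that levels can only decrease --- supplies exactly this missing backwards correspondence, and your induction (including the consistent extension of $g$ on the nulls created by each $\sigma_0$-firing, which is well-defined because each $p_\sigma$-atom identifies a unique firing and the auxiliary $p_\sigma$- and $q_0$-atoms never feed $\mathcal R$-atoms) is sound. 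So what your approach buys is a rigorous proof of the implication the paper only gestures at; what the paper's sketch buys is brevity, at the cost of stating the containment in the unusable direction.

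One caveat worth making explicit in your write-up: $\Sigma'$ depends on $q$ (through the query rule $\rho_q$), while BDDP quantifies over \emph{all} BCQs. Your closing sentence ``quantifying over all $D$ and all $q$'' should therefore be read as redoing the construction per query and assuming the hypothesis for each resulting $\Sigma'_{q}$ --- which is how the theorem is applied in the paper (every such $\Sigma'$ lands in the LR class) --- rather than as a consequence of BDDP of a single fixed $\Sigma'$. The paper glosses this point in the same way, so it is a presentational refinement, not a gap in your argument.
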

\comment{
\begin{proof}
	We use similar ideas to the construction as to that used in the proof of Lemma A.1 in \cite{CaliGP12}.
	Indeed, for the first step, 
	we let $q'$ be an atomic BCQ such that ${\sf body}(q')$ $=$ $r^*(X_1,\ldots,X_n)$, 
	where $r^*$ is a ``fresh" predicate symbol and $\{X_1,\ldots,X_n\}$ coincides with ${\sf var}(q)$,
	i.e., assuming that $q$ is the BCQ  $\exists X_1,\ldots,X_n\,\widehat{{\sf body}(q)}$,
	where $\widehat{{\sf body}(q)}$ is the conjunction of the atoms in ${\sf body}(q)$, then 
	$q'$ is the atomic BCQ  $\exists X_1,\ldots,X_n\,r^*(X_1,\ldots,X_n)$. 
	Then we set $\Sigma^*$ $=$ $\Sigma$ $\cup$ 
	$\big\{r^*(X_1,\ldots,X_n)\la\widehat{{\sf body}(q)}\,\big\}$.
	
	Then for the next step, we obtain $\Sigma'$ from $\Sigma^*$ by applying the following procedure:
	for each TGD $\sigma$ $=$ $\varphi(\vect{X})$ $\rightarrow$ $r_1(\vect{Y}),\ldots,r_k(\vect{Y})$ $\in$
	$\Sigma^*$, where $k$ $>$ $1$ and $\vect{Y}$ the set of variables mentioned in ${\sf head}(\sigma)$,
	replace $\sigma$ with the set of TGDs $\big\{\varphi(\vect{X})\rightarrow r_\sigma(\vect{Y})\big\}$ $\cup$
	$\bigcup_{i\in\{1,\ldots,k\}}\big\{r_\sigma(\vect{Y})\ra r_i(\vect{Y})\big\}$,
	where $r_\sigma$ is an $|\vect{Y}|$-ary new relation symbol in ${\cal R'}$. Then it follows from
	\cite{CaliGP12} that $D\cup\Sigma$ $\models$ $q$ iff  $D\cup\Sigma'$ $\models$ $q'$. In addition,
	since it follows from the construction of $\Sigma'$ that for each $i$ $\geq$ $0$ there exists some
	$j$ $\geq$ $0$ such that ${\sf chase}^{[i]}(D,\Sigma)$ $\subseteq$ ${\sf chase}^{[i+j]}(D,\Sigma')$,
	where $j$ depends only on the size of the introduced auxiliary predicates
	(i.e., only depends on the size of $\Sigma$), then
	we further have that $\Sigma'$ satisfies BDDP implies $\Sigma$ also satisfies BDDP. 
\end{proof}
}


\comment{
Thus, with $q'$ the atomic BCQ and $\Sigma'$ the TGDs constructed from $\Sigma$ as described in the 
proof of Theorem \ref{single_heads} above, then because we will see from Theorem \ref{main} in Section 
\ref{main_results} that  $\Sigma'$ satisfies the \emph{bounded derivation tree depth property} 
(BDTDP) implies $\Sigma'$ also satisfies BDDP, and since we have from Theorem \ref{single_heads}
that $\Sigma'$ satisfies BDDP implies $\Sigma$ also satisfies BDDP, then
%
}
Under Theorem \ref{single_heads},
it is clear that 
considering such special TGDs of the form (\ref{lb3}) as well as the atomic BCQ $\exists \mathbf{Z}p(\mathbf{Z})$
will be sufficient, in the sense that 
all results related to these forms of TGDs and atomic BCQ can be carried over to the general case. 
%
So in the rest of this paper, we will only focus on these forms of TGDs and atomic BCQ in our study. 

\subsection{Comparability and derivation paths}

Let $\mathbf{t}=(t_{1}, \cdots, t_{1})$ and $\mathbf{t}'=(t'_{1},\cdots, t'_{n})$ 
    be two tuples of terms. We say that $\mathsf{t}$ and $\mathsf{t'}$ are 
    {\em type comparable} if $\mathsf{t}$ and $\mathsf{t'}$ satisfy the 
    following conditions: for each $i$ ($1\leq i \leq k$), 
        (1) constant $c$ $\in$ $\Gamma$, $t_{i}$ $=$ $c$ iff $t'_{i}$ $=$ $c$; (2)
        $t_{i}$ $\in$ $\Gamma_V$ iff $t'_{i}$ $\in$ $\Gamma_V$; and
        (3) $t_{i}$ $\in$ $\Gamma_N$ iff $t'_{i}$ $\in$ $\Gamma_N$.                           
    %
%
%
%
Intuitively, two tuples $\mathbf{t}$ and $\mathbf{t'}$ are type comparable if each
position between the two tuples agrees on the type of term they contain, i.e.,
constants are mapped to (the same) constants, variables to variables and labeled 
nulls into labeled nulls.

\begin{definition}[{\bf Position comparable tuples}]\label{comparable_def}
    Let $\mathbf{t}=(t_{1}, \cdots, t_{n})$ and $\mathbf{t'}=(t'_{1},\cdots, t'_{n})$ 
    be two tuples of terms of length $n$. 
    We say that $\mathbf{t}$ and $\mathbf{t'}$ 
    are {\em position comparable} (or simply called 
    {\em comparable}), denoted as $\mathbf{t}\sim\mathbf{t'}$, 
    if $\mathbf{t}$ and $\mathbf{t'}$ satisfy the following conditions: 
    \begin{enumerate}                               
        \item $\mathbf{t}$ and $\mathbf{t'}$ are \textit{type comparable};
        \item for each pair $(i,j)$ ($1\leq i<j\leq n$), $t_{i}$ $=$ $t_{j}$
              iff $t'_{i}$ $=$ $t'_{j}$;
        \item $t$ $\in$ $\big(\mathbf{t}$ 
              $\cap$ $\mathbf{t'}\big)$, $t_{i}=t$ iff $t'_{i}=t$ ($1\leq i \leq n$).
        %
        %
    \end{enumerate}
    We also use $\mathbf{t_1}\not\sim\mathbf{t}_2$ if it is not the case that 
    $\mathbf{t_1}\sim\mathbf{t}_2$.
\end{definition}

Under Definition \ref{comparable_def}, we have $(X,X',\mathsf{n})\sim(Z,Y,\mathsf{n'})$, but 
$(\mathsf{n},\mathsf{n}, \mathsf{n}',Z)\not\sim(\mathsf{n},\mathsf{n}',\mathsf{n}',W)$,
because in the latter, the null 
patterns in the first three positions of the two tuples are not ``comparable". 
\comment{
As another example, we also have that
$(\mathsf{n}_1,\mathsf{n}_2, \mathsf{n}',Z)\not\sim(\mathsf{n}_3,\mathsf{n}_4,W,\mathsf{n}')$
because the third and fourth positions of the tuples are not mapped to corresponding 
labeled nulls but instead to variables.
}

Let $X$ be a variable from $\Gamma_{V}$, and $t$ a term from 
$\Gamma$ $\cup$ $\Gamma_N$ $\cup$ $\Gamma_{V}$. 
A {\em binding} is an expression of the form $X/t$. In this case, we also say that $t$ is a 
binding of variable $X$. A {\em substitution} $[\mathbf{X}/\mathbf{t}]$
is a finite set of bindings containing at most one binding for each variable from $\mathbf{X}$. 
For a given tuple of terms $\mathbf{t}$, we apply a substitution $\theta$ to $\mathbf{t}$ and 
obtain a different tuple of terms, denoted as $\mathbf{t}\theta$.
For example, $(X,Y,\mathsf{n},W)[X/\mathsf{n'},Y/Y,W/Z]$ $=$ $(\mathsf{n'},Y,\mathsf{n},Z)$. 
For a quantifier-free formula $\varphi(\mathbf{X})$ and a substitution 
$\theta=[\mathbf{X}/\mathbf{t}]$, applying 
$\theta$ to $\varphi(\mathbf{X})$, i.e., $\varphi(\mathbf{X})\theta$, will result in formula  
$\varphi(\mathbf{t})$ which 
is obtained from $\varphi(\mathbf{X})$ by replacing each free variable $X$ by its corresponding 
binding from $\varphi(\mathbf{X})$. 

Now we define how a substitution is applied to an existential rule $\sigma$. 
We extend a substitution to existentially quantified variables.
We say that substitution $\theta$ $=$ $[\mathbf{X}/\mathbf{t}]$ 
is {\em applicable} to $\sigma$ if the arities 
of $\mathbf{X}$
in $\theta$ match the arities of 
the tuples of all universally and existentially quantified variables in $\sigma$, respectively.
We may write 
a substitution applicable to $\sigma$ as the form:
$\theta$ $=$ $[\mathbf{X}/\mathbf{t}_1,\mathbf{Y}/\mathbf{t}_2, \mathbf{Z}/\mathbf{n}]$. 
Then by applying $\theta$ to rule $\sigma$ of the form (\ref{lb3}), we will obtain a rule of the following form:
\begin{eqnarray}
    \sigma\theta: \varphi(\mathbf{t}_1,\mathbf{t}_2)\rightarrow p(\mathbf{t}_1,\mathbf{n}).
    \label{lb4}
\end{eqnarray}

\comment{

The motivation for extending a substitution to existentially quantified variables is quite clear. 
For a given set $\Sigma$ of TGDs,
we want to represent the underneath derivation of $\Sigma$ in a generic form so that 
such derivation may be instantiated by the 
chase procedure when a specific input database is taken into account. For this purpose, 
through a substitution, we not only
substitute those universally quantified variables in $\sigma$, but also intentionally
eliminate existentially quantified variables in $\mathsf{head}(\sigma)$ by replacing them with proper nulls.
In this way, atom $p(\mathbf{t}_1,\mathbf{n})$ may be used in triggering other rules of $\Sigma$ 
through further substitutions.
}

\begin{definition}[{\bf Derivation path}]\label{def2}
    Let $\Sigma$ be a set of TGDs. A {\em derivation path} $P$ of $\Sigma$ is a finite sequence 
    of pairs of an atom and a rule:
    \begin{eqnarray}
    (\alpha_1, \rho_1), \cdots, (\alpha_n,\rho_n),
    \label{lb2}
    \end{eqnarray}
    such that
    \begin{itemize}
        \item for each $1 \leq i \leq n$, $\alpha_i=\mathsf{head}(\rho_i)$;
        \item for each $1 \leq i \leq n$, $\rho_i=\sigma_i\theta_i$ for some 
        $\sigma_i\in \Sigma$ and substitution $\theta_i$; 
        \item for each $1 \leq i < n$, $\alpha_{i+1}\in \mathsf{body}(\rho_i)$; 
        \item for each $1 \leq i \leq n$, if a null $\mathsf{n}\in \mathsf{head}(\alpha_i)$ is 
        introduced due to the elimination of existentially quantified variable,
        then this $\mathsf{n}$  must not occur in $\rho_j$, for all 
        $j$ $\in$ $\{i+1,\ldots,n\}$.
    \end{itemize}
\end{definition}

\begin{example}\label{ex2}
    Consider a set $\Sigma$ of TGDs consisting of two rules: 
    \begin{quote}
        $\sigma_1: r(X,Y,Z)\rightarrow s(Y,X)$,\\
        $\sigma_2: s(X,Y)\rightarrow \exists Z\exists W r(Y,Z,W)$.
    \end{quote} 
    The following are three different derivation paths of $\Sigma$:
    \begin{quote}
        $P_1$: 
        \hspace*{.2in} $(s(\mathsf{n_1},Y_1), \sigma_1[X/Y_1,Y/\mathsf{n_1},Z/\mathsf{n_2}])$,\\
        \hspace*{.43in} $(r(Y_1,\mathsf{n_1},\mathsf{n_2}),\sigma_2[X/X_1,Y/Y_1,Z/\mathsf{n}_1, W/\mathsf{n}_2])$,\\
        \hspace*{.43in} $(s(X_1,Y_1),\sigma_1[X/Y_1,Y/X_1,Z/Z_1])$,\\
        $P_2$: 
        \hspace*{.2in} $(r(X_2,\mathsf{n_1},\mathsf{n_2}), \sigma_2[X/\mathsf{n_3},Y/X_2,Z/\mathsf{n}_1, W/\mathsf{n}_2])$, \\
        \hspace*{.43in} $(s(\mathsf{n_3},X_2),\sigma_1[X/X_2, Y/\mathsf{n_3}, Z/\mathsf{n_4}])$, \\
        $P_3$:
        \hspace*{.2in} $(r(X_2,\mathsf{n_1},\mathsf{n_2}), \sigma_2[X/\mathsf{n_3},Y/X_2,Z/\mathsf{n}_1,W/\mathsf{n}_2])$, \\
        \hspace*{.39in} $(s(\mathsf{n_3},X_2),\sigma_1[X/X_2, Y/\mathsf{n_3}, Z/\mathsf{n_4}])$, \\
        \hspace*{.40in} $(r(X_2,\mathsf{n_3},\mathsf{n_4}), \sigma_2[X/X_1,Y/X_2,Z/\mathsf{n}_3,W/\mathsf{n}_4])$. 
    \end{quote}
    $\Box$    
\end{example}

\begin{definition}[{\bf Generalising comparability relation}]
\label{eq-def}
We generalise the comparability relation $\sim$ defined earlier as follows.
\begin{enumerate}
\item Let $\sigma$ $\in$ $\Sigma$, and $\theta$ $=$ $[\mathbf{X}/\mathbf{t},\mathbf{Z}/\mathbf{n}]$ and
$\theta'$ $=$ $[\mathbf{X}/\mathbf{t'},\mathbf{Z}/\mathbf{n'}]$ be two substitutions applicable to 
$\sigma$. We say that $\sigma\theta$ and $\sigma\theta'$ are {\em comparable}, 
denoted as $\sigma\theta$ $\sim$ $\sigma\theta'$, 
if $\mathbf{tn}\sim\mathbf{t'n'}$.
\item
Let $P$ be a derivation path of $\Sigma$ of the form (\ref{lb2}), we use 
$|P|$ to denote its length. Furthermore, suppose $(\alpha_i, \rho_i)$ and 
$(\alpha_j, \rho_j)$ are two elements of $P$, we say that $(\alpha_i, \rho_i)$ 
and $(\alpha_j, \rho_j)$ are {\em comparable}, denoted 
as $(\alpha_i, \rho_i)\sim(\alpha_j, \rho_j)$, if $\rho_i\sim\rho_j$ (note that this implies
$\sigma_i=\sigma_j$).
\item
Let $P$ $=$ $((\alpha_1,\rho_1),(\alpha_2, \rho_2),\cdots)$ and
$P'= ((\alpha_1',\rho_1'), (\alpha_2', \rho_2'), \cdots)$ be two derivation paths of $\Sigma$. 
$P$ and $P'$ are {\em comparable}, denoted as  
$P\sim P'$, if $|P|=|P'|$ and for each $i$  ($1\leq i \leq |P|$), 
$(\alpha_i,\rho_i)\sim(\alpha_i',\rho_i')$.
\end{enumerate}
\end{definition}

It is easy to observe that $\sim$ defined in 
Definition \ref{eq-def} is an equivalence relation.
Although a derivation path may be infinitely long, the following result ensures that for any 
derivation path, it is sufficient to only consider its finite fragment.

%
%
%
\begin{proposition}[{\bf Derivation path length bound}]\label{th2}
    Let $\Sigma$ be a set of TGDs. Then 
    there exists a natural number $N$ such that for every derivation path $P$
    of the form (\ref{lb2}), if $|P|$ $>$ $N$ then there exists $i,j$ ($1\leq i<j\leq |P|$)
    such that $(\alpha_i, \rho_i)\sim(\alpha_j, \rho_j)$.            
\end{proposition}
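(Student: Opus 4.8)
The plan is to prove the statement by a pigeonhole argument over the equivalence classes of the comparability relation $\sim$. Since $\sim$ is (by the remark following Definition \ref{eq-def}) an equivalence relation on the set of all rule-instances $\sigma\theta$ with $\sigma\in\Sigma$, and since two elements $(\alpha_i,\rho_i)$ and $(\alpha_j,\rho_j)$ of a derivation path are comparable exactly when $\rho_i\sim\rho_j$, it suffices to show that this relation admits only finitely many classes, their number $N$ depending only on $\Sigma$. Once such an $N$ is in hand the conclusion is immediate: a derivation path $P$ of the form (\ref{lb2}) with $|P|>N$ exhibits more than $N$ rule-instances $\rho_1,\dots,\rho_{|P|}$, so two of them, say $\rho_i$ and $\rho_j$ with $i<j$, fall in the same $\sim$-class, whence $(\alpha_i,\rho_i)\sim(\alpha_j,\rho_j)$.

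The core of the work is therefore the finiteness of the quotient. Because $\Sigma$ is finite, a rule-instance $\rho=\sigma\theta$ has only finitely many choices for the underlying $\sigma$, and by Definition \ref{eq-def}(1) two instances can be comparable only when they instantiate the same rule; so I would partition the quotient rule by rule. For a fixed $\sigma$ with head $p(\mathbf{X},\mathbf{Z})$, comparability of $\sigma\theta$ and $\sigma\theta'$ reduces to position-comparability of the substitution images $\mathbf{t}\mathbf{n}$ and $\mathbf{t'}\mathbf{n'}$, which are tuples of a fixed length $\ell_\sigma=|\mathbf{X}|+|\mathbf{Z}|$. I would then observe that the $\sim$-class of such a tuple is pinned down by a bounded amount of combinatorial data: (i) the \emph{term type} --- constant, variable, or labeled null (Definition \ref{comparable_def}(1)) --- at each of the $\ell_\sigma$ positions; (ii) the \emph{equality pattern}, i.e.\ the partition of positions induced by equal entries (Definition \ref{comparable_def}(2)); and (iii) for the positions holding constants, the \emph{identity} of that constant. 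Items (i) and (ii) range over finite sets (at most $3^{\ell_\sigma}$ type assignments and at most the Bell number $B_{\ell_\sigma}$ partitions), so provided the constants in (iii) are confined to a fixed finite set $C$, the number of classes for $\sigma$ is at most $B_{\ell_\sigma}\cdot(|C|+2)^{\ell_\sigma}$, and $N:=\sum_{\sigma\in\Sigma}B_{\ell_\sigma}\cdot(|C|+2)^{\ell_\sigma}$ is the desired bound.

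The step that needs the most care is item (iii), bounding the constants. A priori a substitution could map a frontier variable to an arbitrary constant of $\Gamma$, and since Definition \ref{comparable_def}(1) distinguishes distinct constants, allowing unboundedly many of them would make the quotient infinite and destroy any uniform $N$. I would close this gap by exploiting the role of derivation paths: they record how atoms are produced by matching rule heads against body atoms along the chaining condition $\alpha_{i+1}\in\mathsf{body}(\rho_i)$, and such matching, realised through most-general unifiers, never introduces a constant outside those occurring syntactically in $\Sigma$. Hence $C$ may be taken to be the finite set of constants appearing in $\Sigma$, after which (iii) contributes only the finite factor $(|C|+2)^{\ell_\sigma}$. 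I expect this constant-control argument to be the principal obstacle, since it is the only place where the abstract, database-independent nature of derivation paths must be invoked to keep the count finite.

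Finally, I would verify that the data (i)--(iii) really determines the $\sim$-class and not merely a coarsening of it. The one potential source of extra splitting is condition (3) of Definition \ref{comparable_def}, which constrains where terms common to the two tuples may sit. Because that condition only compares shared \emph{concrete} symbols and $\sim$ is given to be an equivalence relation, I would argue that, after renaming the fresh variables and nulls apart, any two tuples realising the same data (i)--(iii) are comparable; this makes the product count above a genuine upper bound on the number of classes per rule. Together with the constant-control step this completes the finiteness of the quotient, and the pigeonhole conclusion then follows at once.
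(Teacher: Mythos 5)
There is a genuine gap, and it sits exactly at the point you flag as ``the one potential source of extra splitting'' and then dismiss. Your pigeonhole argument needs the $\sim$-class of a pair $(\alpha_i,\rho_i)$ to be determined by the data (i)--(iii) (term types per position, equality partition, identity of constants), so that your count $N=\sum_{\sigma\in\Sigma}B_{\ell_\sigma}\cdot(|C|+2)^{\ell_\sigma}$ bounds the number of classes. But condition (3) of Definition \ref{comparable_def} makes this false: two tuples with identical data (i)--(iii) that \emph{share} a variable or null in mismatched positions are incomparable, e.g.\ $(X,Y)\not\sim(Y,X)$. Your repair --- ``after renaming the fresh variables and nulls apart, any two tuples realising the same data are comparable'' --- is not available here, because the proposition quantifies over concrete derivation paths: the chaining condition $\alpha_{i+1}\in\mathsf{body}(\rho_i)$ of Definition \ref{def2} forces distinct elements of one path to share terms, and you cannot rename the elements of a fixed path independently without changing the path. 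Worse, your appeal to the remark that $\sim$ is an equivalence relation is circular in precisely the problematic case: with condition (3) in force, $\sim$ is not transitive on term-sharing tuples (one has $(X,Y)\sim(Z,W)\sim(Y,X)$ yet $(X,Y)\not\sim(Y,X)$), so ``finitely many equivalence classes'' is not a well-posed reduction of the statement. What actually has to be shown is Ramsey-flavoured: that a \emph{pairwise incomparable} family of same-pattern elements has bounded size, not that the pattern data has finitely many values.

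This is exactly the extra work the paper's proof does and yours omits. After counting the abstract patterns (its set $\textsc{propArgPrm}(K)$, which matches your data (i)--(iii)), the paper separately treats the case $\vect{t}_1\cap\vect{t}_2\neq\emptyset$: for each pattern $\vect{e}$ it bounds, over a canonical finite term set, the number $f(\vect{e})=|S_{\vect{e}}|$ of pairs that realise $\vect{e}$ on both sides, share terms, and are nevertheless incomparable, and then takes $N=\sum_{\vect{e}}f(\vect{e})$ so that among more than $N$ same-pattern elements the shared-term conflicts cannot all persist. Without some bound of this kind (e.g.\ exploiting that each tuple carries at most $\ell_\sigma$ distinct terms, so it can conflict with only boundedly many others per pattern), your $N$ admits paths longer than $N$ with no comparable pair, and the pigeonhole conclusion does not follow. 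By contrast, the step you anticipated as the principal obstacle --- confining constants to those of $\Sigma$ --- is the unproblematic part: the paper's proof simply takes $\mathsf{const}(\Sigma)$ as the constant pool, on the same understanding as your unifier argument that derivation paths are database-independent patterns, so both proofs stand on equal footing there.
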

\comment
{
\begin{proof}
    Let $K$ $=$ ${\sf max}\{$ $|\vect{XYZ}|$ $\mid$ there exists 
    $``\varphi(\vect{X},\vect{Y})\ra\exists\mathbf{Z}\psi(\vect{X},\vect{Z})"$ $\in$ $\Sigma\,\}$
    $\cdot$ $|\Sigma|$.    
    Now let $\textsc{argPrm}(K)$ denote the set of $(5K\cdot|{\sf const}(\Sigma)|-1)$-length 
    permutations of the set 
    \begin{align}
    \{\overline{\mathbf{1}}^{\,c},\overline{\mathbf{1}}^{\,V},
    \overline{\mathbf{1}}^{\,{\sf n}},
    |_1,\overline{\mathbf{2}}^{\,c},\overline{\mathbf{2}}^{\,V},
    \overline{\mathbf{2}}^{\,{\sf n}},\ldots, 
    |_{K-1},\overline{\mathbf{K}}^{\,c},&\overline{\mathbf{K}}^{\,V},
    \overline{\mathbf{K}}^{\,{\sf n}}\,\mid\,c\in{\sf const}(\Sigma)\}.\nonumber
    \end{align}    
    Intuitively, the elements ``$\overline{\mathbf{i}}^{\,\,x}$" 
    ($1$ $\leq$ $i$ $\leq$ $K$) where $x$ $\in$ $\{c,V,{\sf n}$ $\mid$
    $c\in{\sf const}(\Sigma)\}$,
    are the argument positions of the tuple of constants, variables and nulls of a TGD in $\Sigma$.
    Here: (1) $x$ $\in$ ${\sf const}(\Sigma)$ denotes that the position $i$ contains a constant;
    (2) $x$ $=$ $V$ denotes it contains a variable; and (3)
    $x$ $=$ ${\sf n}$ denotes it contains a labeled null. We say that ``$x$" is the \textit{type}
    of the argument $\overline{\mathbf{i}}^{\,\,x}$.         
    The elements ``$|_i$" act as a kind of ``separator" such that if a
    tuple 
    $$\overline{\mathbf{i_{1}}}^{\,{\sf n}}\,\,|_{i_2}|_{i_3}\ldots 
    |_{i_j}\,\,\overline{\mathbf{i_{j+1}}}^{\,V}\,\,
    \overline{\mathbf{i_{j+2}}}^{\,V}\,\,
    \overline{\mathbf{i_{j+3}}}^{\,V}\,\,|_{i_{j+4}}\ldots|_{i_{2k+1}}$$
    is in $\textsc{argPrm}(K)$, then we view the consecutive series of arguments 
    ``$\overline{\mathbf{i_{j+1}}}^{\,V}\,\,\overline{\mathbf{i_{j+2}}}^{\,V}\,\,
    \overline{\mathbf{i_{j+3}}}^{\,V}$" 
    as one group, and where we view arguments within a group
    as being ``equal." Then by $\textsc{propArgPrm}(K)$, denote the following set of tuples:
    \begin{align}
    \big\{\,\vect{e}\,\mid\,&\mbox{there exists some }\vect{e'}\in\textsc{argPrm}(K)
    \mbox{ such that }\vect{e}\subseteq\vect{e'}\mbox{ and}:\nonumber\\
    &\mbox{(1) }|\vect{e}|=2K-1;\label{length}\\             
    &\mbox{(2) }\vect{e}[0]\mbox{ and }\vect{e}[|\vect{e}|]\mbox{ is not equal to }``|_k"
    \,(\mbox{ for }k\in\{1,\ldots,K-1\}\,);\label{ends_not_delimeted}\\
    &\mbox{(3) If }\vect{e}[i]=|_{j}\,(\mbox{ for }j\in\{1,\ldots,K-1\}\,)\mbox{ then}:\label{set_def_1}\\
    &\hspace{0.5cm}\mbox{(a) }i>0\mbox{ implies }\vect{e}[i-1]=\overline{\vect{k}}^{\,x}
    \,(\mbox{ for }k\in\{1,\ldots,K\}\mbox{ and }x\in\{c,V,{\sf n}\}\,);\nonumber\\
    &\hspace{0.5cm}\mbox{(b) }i<K\mbox{ implies }\vect{e}[i+1]=\overline{\vect{k}}^{\,x}
    \,(\mbox{ for }k\in\{1,\ldots,K\}\mbox{ and }x\in\{c,V,{\sf n}\}\,);\nonumber\\
    &\mbox{(4) If }\vect{e}[i]=\overline{\vect{j}}^{\,x}\,
    (\mbox{ for }j\in\{1,\ldots,K\}\mbox{ and }x\in\{c,V,{\sf n}\}\,)\mbox{ then}:\label{set_def_2}\\
    &\hspace{0.5cm}\mbox{(a) }i>0\mbox{ and }\vect{e}[i-1]=\overline{\vect{k}}^{\,y}
    (\mbox{ for }k\in\{1,\ldots,K\}\mbox{ and }y\in\{c,V,{\sf n}\}\,)\nonumber\\
    &\hspace{1.0cm}\mbox{implies }x=y;\nonumber\\
    &\hspace{0.5cm}\mbox{(b) }i<K\mbox{ and }\vect{e}[i+1]=\overline{\vect{k}}^{\,y}
    (\mbox{ for }k\in\{1,\ldots,K\}\mbox{ and }y\in\{c,V,{\sf n}\}\,)\nonumber\\
    &\hspace{1.0cm}\mbox{implies }x=y;\nonumber\\
    &\mbox{(5) }\mbox{For each }i\in\{1,\ldots,K\},\mbox{ there exists some }
    j\in\{1,\ldots,|\vect{e}|\}\mbox{ such that}\label{set_def_3}\\
    &\hspace{0.5cm}\,\vect{e}[j]=\overline{\vect{i}}^{\,x}\mbox{ and }
    x\in\{c,V,{\sf n}\}\,\big\}.\nonumber
    \end{align}
    Intuitively the {\em proper argument permutation tuples}, as denoted ``$\textsc{propArgPrm}(K)$, "
    captures the intended meaning of equivalence $\vect{t}_1$ $\sim$ $\vect{t}_2$ assuming
    that $\vect{t}_1$ $\cap$ $\vect{t}_2$ $=$ $\emptyset$. Indeed, we have that
    (\ref{length}) specifies that no arguments are repeated on different groups;    
    (\ref{ends_not_delimeted}) specifies that the ends of the tuple are not delimited by ``$|_k$";  
    (\ref{set_def_1}) specifies
    that only one separator (i.e., the ``$|_i$" element) acts for each group;
    (\ref{set_def_2}) specifies that each group are of the same types; and lastly, 
    (\ref{set_def_3}) specifies that each position $i$ $\in$ $\{1,\ldots,K\}$ is mentioned
    in at least some group. Clearly, we have that 
    $|\textsc{propArgPrm}(K)|$ $\leq$ $|\textsc{argPrm}(K)|$ $\leq$ $(5K\cdot|{\sf const}(\Sigma)|-1)!$.
    
    With a slight abuse of notation, given some element $\vect{e}$ $\in$ $\textsc{propArgPrm}(K)$ 
    and some $K$-length tuple
    $\vect{t}$, we say that $\vect{t}$ is in the equivalence class of $\vect{e}$,
    denoted $\vect{t}\sim\vect{e}$, if for each $i$, $j$ $\in$ $\{1,\ldots,K\}$, we have 
    that $\vect{t}[i]$ $=$ $\vect{t}[j]$ iff $\overline{\vect{i}}^{\,x}$ and 
    $\overline{\vect{j}}^{\,y}$ belongs to the same group in $\vect{e}$. 
    Thus, to extend to the case where $\vect{t}_1$ $\cap$ $\vect{t}_2$ $\neq$ $\emptyset$,
    we define the mapping $f:$ $\textsc{propArgPrm}(K)$ $\longrightarrow$ $\mathbb{N}$
    such that for each $\vect{e}$ $\in$ $\textsc{propArgPrm}(K)$, $\iota(\vect{e})$ denotes
    the size of the following set: 
    \begin{align}
    S_{\vect{e}}=\big\{(\vect{t}_1,\vect{t}_2)\,\mid\,\vect{t}_1,\vect{t}_2\in T^{K},\,
    \vect{t}_1\sim\vect{e},\,
    \vect{t}_2\sim\vect{e},\,
    \vect{t}_1\cap\vect{t}_2\neq\emptyset
    \mbox{ and }\vect{t}_1\not\sim\vect{t}_2\big\}
    \label{the_set_S_e}
    \end{align}
    (i.e., $f(\vect{e})$ $=$ $|S_{\vect{e}}|$), and where $T$ is the following set of 
    distinct constants, variables and labeled nulls:
    $\textsc{const}(\Sigma)$ $\cup$ $\{X_1,\ldots,X_{2K}\}$ $\cup$ 
    $\{\mathsf{n}_1,\ldots,\mathsf{n}_{2K}\}$. Clearly, we have that $|S_{\vect{e}}|$ is defined
    since $S_{\vect{e}}$ is finite for each $\vect{e}$ $\in$ $\textsc{propArgPrm}(K)$. 
    Then finally, we define $N$ $=$ $\sum_{\vect{e}\in\textsc{propArgPrm}(K)}$ $f(\vect{e})$.
    
    Now on the contrary, assume that $P$ is a derivation path 
    $(\alpha_1,\rho_1)$, $\ldots$, $(\alpha_{N},\rho_{N})$, $(\alpha_{N+1},\rho_{N+1})$,
    $\ldots$, $(\alpha_{N+k},\rho_{N+k})$ such that $k$ $>$ $0$ 
    (i.e., $|P|$ $>$ $N$) and $(\alpha_i,\rho_i)$
    $\not\sim$ $(\alpha_j,\rho_j)$ for $1$ $\leq$ $i$ $<$ $j$ $\leq$ $N+k$.
    Now consider $(\alpha_{N+i},\rho_{N+i})$ for some $i$ $\in$ $\{1,\ldots,k\}$.
    Then since $N$ $>$ $|\textsc{propArgPrm}(K)|$, we have that for some 
    $\vect{e}$ $\in$ $|\textsc{propArgPrm}(K)|$ and $j$ $\in$ $\{1,\ldots,N\}$,
    $(\alpha_j,\rho_j)$ $\sim_{\vect{e}}$ $(\alpha_{N+i},\rho_{N+i})$, where
    assuming that $\rho_j$ $=$ $\sigma[\vect{XYZ}/\vect{T_1T_2T_3}]$ and
    $\rho_{N+i}$ $=$ $\sigma[\vect{XYZ}/\vect{T'_1T'_2T'_3}]$ (for some $\sigma$ $\in$ $\Sigma$),
    $(\alpha_j,\rho_j)$ $\sim_{\vect{e}}$ $(\alpha_{N+i},\rho_{N+i})$ denotes that
    $\vect{T_1T_2T_3}$ $\sim$ $\vect{e}$ and $\vect{T'_1T'_2T'_3}$ $\sim$ $\vect{e}$. 
    (Note that by the assumption that $(\alpha_j,\rho_j)$ $\not\sim$ $(\alpha_{N+i},\rho_{N+i})$,
    we have that $\vect{T_1T_2T_3}$ $\not\sim$ $\vect{T'_1T'_2T'_3}$ as well.)
    Now there can only be one of the two possibilities, either (1) $\vect{T_1T_2T_3}$ $\cap$  
    $\vect{T'_1T'_2T'_3}$ $=$ $\emptyset$, or (2) $\vect{T_1T_2T_3}$ $\cap$  
    $\vect{T'_1T'_2T'_3}$ $\neq$ $\emptyset$. 
    If we assume the first case (1), then it contradicts the assumption that
    $\vect{T_1T_2T_3}$ $\not\sim$ $\vect{T'_1T'_2T'_3}$ because 
    $(\alpha_j,\rho_j)$ $\sim_{\vect{e}}$ $(\alpha_{N+i},\rho_{N+i})$. On the other hand,
    if we consider the latter case (2), then since we have that 
    $N$ $=$ $\sum_{\vect{e}\in\textsc{propArgPrm}(K)}$ $f(\vect{e})$ with $f(\vect{e})$
    the size of the finite set (\ref{the_set_S_e}), then this will be a contradiction as well. 
\end{proof}
}

\subsection{Derivation trees}


\begin{definition}[{\bf Derivation tree}]\label{def-tree}
    Given a set $\Sigma$ of TGDs. A {\em derivation tree} of $\Sigma$, denoted as $T(\Sigma)$, 
    is a finite tree $(N,E,\lambda)$, with nodes $N$, edges $E$ and labeling function $\lambda$, such that:
    \begin{enumerate}
        \item The nodes of $T(\Sigma)$ have \emph{labels} of the form $(\alpha, \rho)$,              
              where $\rho=\sigma\theta$ for some $\sigma\in\Sigma$ and $\theta$ a substitution, 
              and $\mathsf{head}(\rho)=\alpha$;
        \item For any node $v$ labeled by $(\alpha,\rho)$ of $T(\Sigma)$, let $\alpha_1,\cdots,\alpha_n$ be atoms in 
		      $\mathsf{body}(\rho)$, then $(\alpha,\rho)$ has $n$ children $v_1,\ldots,v_n$ labeled with 
		      $(\alpha_1,\rho_1)$, $\cdots$, $(\alpha_n,\rho_n)$, respectively, such that for each 
		      $i$ $\in$ $\{1,\ldots,n\}$, $\rho_i=\sigma_i\theta_i$ for some 
		      $\sigma_i\in \Sigma$ and $\theta_i$ a substitution, and $\mathsf{head}(\rho_i)=\alpha_i$;
        \item For any node $v$ labeled with $(\alpha, \rho)$ in $T(\Sigma)$, 
		      all "fresh" nulls occurring in $\alpha$, that are introduced through the 
              substitutions in $\rho$,
              must not occur in any labels of a descendant node of $v$;
        \item If node $v$ labeled with $(\alpha, \rho)$ is a leaf of $T(\Sigma)$, 
              then there does not exist any null $\mathsf{n}$ appearing in $\mathsf{body}(\rho)$.     
    \end{enumerate}         
    A {\em path} $P$ in $T(\Sigma)$, denoted as $P\in T(\Sigma)$, is a derivation path in 
    $T(\Sigma)$ starting from the root 
    and ending at a leaf. We define $\mathsf{depth}(T(\Sigma))=\mathsf{max}(\{|P|\mid P\in T(\Sigma)\}$ to 
    be the {\em depth} of $T(\Sigma)$. By $\mathsf{root}(T(\Sigma))$, $\mathsf{leafNodes}(T(\Sigma))$ 
    and $\mathsf{nodes}(T(\Sigma))$, we denote the root node, leaf nodes and all nodes of $T(\Sigma)$,
    respectively. Also, given some node $v$ of $T(\Sigma)$, we denote by 
    $\mathsf{childNodes}(v,T(\Sigma))$ (or just $\mathsf{childNodes}(v)$ when clear from the context) 
    as the child nodes of $v$ under the tree $T(\Sigma)$.
    Lastly, we use ${\mathcal T}(\Sigma)$ to denote the set of all derivation trees of $\Sigma$. 
\end{definition}

According to Definition \ref{def-tree}, each path of a derivation tree is a 
derivation path. Also, since $\Sigma$ can have an infinite number of possible
derivation paths due to possibly arbitrary number of repetitions of path fragments within a path
(i.e., a ``loop"), $\mathcal{T}(\Sigma)$ may contain an infinite number of derivation trees.

\begin{definition}[{\bf Derivation tree instantiation}]\label{def-dTree}
    Let $\Sigma$ be a set of TGDs, $D$ a database over schema ${\mathcal R}$, and
    $T(\Sigma)$ $=$ $(N,E,\lambda)$ a derivation tree of $\Sigma$.    
    Then we obtain a tree $T'$  $=$ $(N',E',\lambda')$ from $T(\Sigma)$, where $N\subseteq N'$, as follows: 
    %
    \begin{enumerate}
        \item For each leaf node $v$ in $T(\Sigma)$, where $\lambda(v)$ $=$ $(\alpha,\rho)$, do: 
        \begin{enumerate}
            \item Set $\lambda'(v)$ $=$ $(\alpha',\rho')$ in the tree $T'$,
                  where $\alpha'=\mathsf{head}(\rho')$, where $\rho' = \rho\theta$ 
                  for some substitution $\theta$ and $\mathsf{body}(\rho')$ $\subseteq$ $D$;                                        
            \item For each atom $\beta$ $\in$ $\mathsf{body}(\rho')$ $\subseteq$ $D$ with $\rho'$
                  as mentioned above, add a node $v'$ in $N'$ and set $\lambda'(v')$ $=$ $(\beta,\beta)$ 
                  and corresponding edge
                  $\LA v',v\RA$ in $E$ so that $v'$
                  is a leaf node (so now making $v$ a non-leaf node);                                                
        \end{enumerate}
        \item For a node $v$ such that $\lambda(v)$ $=$ $(\alpha,\rho)$, and where all the 
              label of its children have been 
              replaced as in 1 above (i.e., through ``$\lambda'$"), 
              set $\lambda'(v)$ $=$ $(\alpha',\rho')$, where $\rho'=\rho\theta'$ 
              for some substitution $\theta'$ such that
              for each atom $p(\mathbf{t})\in \mathsf{body}(\rho')$, either $p(\mathbf{t})\in D$ 
              or there exists a child node $v'$ of $v$ such that  
              $\lambda'(v')$ $=$ $(\alpha^{*},\rho^{*})$, where $p(\mathbf{t})=\alpha^{*}$;
        \item Continue 2, until no node can be further relabled.
    \end{enumerate}
    $T'$ is called an {\em instantiation} of $T(\Sigma)$ on $D$, denoted as $T(D,\Sigma)$, 
    if it does not contain any variables occurring in $T(\Sigma)$. Similarly to the case
    of derivation tree, we use $\mathsf{depth}(T(D,\Sigma))$ and $\mathsf{root}(T(D,\Sigma))$ to denote
    the {\em depth} and {\em root node}  of $T(D,\Sigma)$, respectively. 
    Finally, by ${\mathcal T}(D,\Sigma)$, we denote
    the set of all instantiations on $D$ for all derivation trees in ${\mathcal T}(\Sigma)$.
\end{definition}
For convenience from here on and when it is clear from the context, we will mostly refer to a node by its actual label,
e.g., a node $v$ $\in$ $N$ where $\lambda(v)$ $=$ $(\alpha, \rho)$ is simply refered to as
$(\alpha, \rho)$.

We say that an atom $p(\mathbf{t})$ {\em is supported} by
$T(D,\Sigma)$, denoted as $T(D,\Sigma)\models p(\mathbf{t})$, if 
$\lambda(\mathsf{root}(T(D,\Sigma)))$ $=$ $(\alpha,\rho)$ where $\alpha$ $=$ $p(\vect{s})$, and there is a
homomorphism $h$ such that $h(p(\vect{t}))$ $=$ $p(\vect{s})$.
The following result reveals an important relationship between the chase and derivation trees.

\begin{theorem}\label{c1}
    Let $\Sigma$ be a set of TGDs, $D$ a database over schema ${\mathcal R}$, and $q$ a BCQ query
    $\exists \mathbf{Z} p(\mathbf{Z})$.
    Then $\mathsf{chase}(D,\Sigma)\models q$ iff there exist an instantiation $T(D,\Sigma)$ for some derivation tree $T(\Sigma)$ and a substitution $\theta$, such that
    $T(D,\Sigma)\models p(\mathbf{t})$, where $\mathbf{t}$ is a tuple of terms from $\Gamma$
    of the same length as $\mathbf{Z}$, and $\mathbf{t}\theta=\mathbf{Z}$.
\end{theorem}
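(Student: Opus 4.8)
The plan is to prove the two directions of the biconditional separately, in each case exploiting the tight correspondence between the inductive \textsc{level} structure of the chase and the parent--child structure of an instantiated derivation tree.

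For the forward direction, suppose $\mathsf{chase}(D,\Sigma)\models q$. Then some atom $p(\mathbf{s})\in\mathsf{chase}(D,\Sigma)$ witnesses the query, and I would fix a chase sequence that produces it. I then build the instantiated tree by recursion on \textsc{level}: the atom $p(\mathbf{s})$ labels the root, and whenever an atom $\alpha$ of positive level was generated by a chase step $I_k\xrightarrow{\sigma,\eta}I_{k+1}$ with $\alpha=\eta'(\mathsf{head}(\sigma))$, I attach to the node labelled $(\alpha,\sigma\eta)$ one child for each body atom $\beta\in\mathsf{body}(\sigma\eta)\subseteq I_k$ and recurse; atoms of level $0$ lie in $D$ and become leaves labelled $(\beta,\beta)$. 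Abstracting the concrete chase nulls to the existential placeholders of~(\ref{lb4}) yields a derivation tree $T(\Sigma)$ in the sense of Definition~\ref{def-tree}, while the chosen homomorphisms $\eta$ supply exactly the substitutions demanded by Definition~\ref{def-dTree}, so the constructed tree is a genuine instantiation $T(D,\Sigma)$ with $T(D,\Sigma)\models p(\mathbf{s})$. Choosing $\theta$ to be the grounding that relates the query variables $\mathbf{Z}$ to the constant tuple $\mathbf{t}=\mathbf{s}$ then discharges this direction.

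For the converse, suppose there is an instantiation $T(D,\Sigma)$ with $T(D,\Sigma)\models p(\mathbf{t})$, so its root is labelled $(p(\mathbf{s}),\rho)$ and a homomorphism sends $p(\mathbf{t})$ onto $p(\mathbf{s})$. I would establish, by bottom-up induction on $T(D,\Sigma)$, the invariant that for every node labelled $(\alpha,\rho)$ the atom $\alpha$ maps homomorphically into some atom of $\mathsf{chase}(D,\Sigma)$. The leaves are labelled by atoms of $D\subseteq\mathsf{chase}(D,\Sigma)$, giving the base case. At an internal node, $\rho=\sigma\theta'$ for some $\sigma\in\Sigma$, and the induction hypothesis places homomorphic images of all atoms of $\mathsf{body}(\rho)$ in the chase; hence $\sigma$ becomes applicable along a suitable chase step, whose head is $\alpha$ up to renaming of the null witnesses, so $\mathsf{chase}(D,\Sigma)\models\alpha$. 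Applying the invariant at the root and composing with the witnessing homomorphism yields $\mathsf{chase}(D,\Sigma)\models p(\mathbf{t})$, hence $\mathsf{chase}(D,\Sigma)\models q$.

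The step I expect to be the main obstacle is the bookkeeping of null symbols across the two formalisms. The chase introduces a genuinely fresh null at every existential application, whereas a derivation tree manipulates the generic placeholders of~(\ref{lb4}) under the non-reoccurrence requirement of Definition~\ref{def2} and conditions (3)--(4) of Definition~\ref{def-tree}. The crux is to show that these two disciplines are interchangeable --- that each chase-generated null can be consistently identified with a tree placeholder, and conversely, uniformly along every root-to-leaf path --- so that the correspondence is homomorphically faithful. The freshness and fairness guarantees of the chase are precisely what make such an identification possible, so the argument ultimately reduces to verifying, step by step, that the structural conditions of Definitions~\ref{def-tree} and~\ref{def-dTree} mirror the admissibility of chase steps.
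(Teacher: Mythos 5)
Your proposal follows essentially the same route as the paper's proof: one direction unwinds a chase sequence into an instantiated derivation tree (root labelled by the witnessing atom, one child per body atom of the generating chase step, level-$0$ atoms as leaves labelled $(\beta,\beta)$), and the other direction establishes by bottom-up induction a homomorphism from the nodes of $T(D,\Sigma)$ into $\mathsf{chase}^{[N]}(D,\Sigma)$ and composes it with the witnessing homomorphism at the root. The null-bookkeeping obstacle you flag is precisely where the paper strengthens your per-node invariant: its lemma additionally requires that nodes with equal labels receive equal chase images, which is what allows the per-child substitutions to be glued into a single homomorphism making $\sigma$ applicable at each internal node.
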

\begin{proof}
	(``$\Longrightarrow$") We prove this direction by first providing the following lemma.
	\begin{lemma}\label{tree_induction}
		Given an instantiated derivation tree $T(D,\Sigma)$ $=$ $(N,E,\lambda)$ with nodes $N$,
		edges $E$ and labeling function $\lambda$,
		of $\Sigma$ under a database $D$,
		there exists a homomorphism $\mu:$ $\mathsf{nodes}(T(D,\Sigma))$ $\longrightarrow$ $\mathsf{chase}^{[N]}(D,\Sigma)$, 
		where $N$ $\leq$ $|\mathsf{nodes}(T(D,\Sigma))|$ $-$ $|\mathsf{leafNodes}(T(D,\Sigma))|$,
		such that the following conditions are satisfied:
		
		\vspace*{-0.4cm}

		\begin{align}
			&\mbox{1. }\,\mbox{For each }v\in\mathsf{leafNodes}(T(D,\Sigma))
			\mbox{ such that }\lambda(v)=(\alpha,\alpha),\,\nonumber\\
			&\mu(v)=\alpha\in D;\label{homo_cond_1}
		\end{align}

		\vspace*{-0.5cm}

		\begin{align}
			&\mbox{2. }\,\mbox{For each }v\in\mathsf{nodes}(T(D,\Sigma))
			\mbox{ such that }\lambda(v)=(\alpha,\rho),\,\nonumber\\
			&\mathsf{child}(v)=\{v_1,\ldots,v_1\},\,\rho=\sigma\theta\mbox{ for some substitution }\theta,
			\nonumber\\
			&\mbox{and }\sigma=\varphi(\vect{X},\vect{Y})
			\ra\exists\vect{Z}p(\vect{X},\vect{Z})\in\Sigma,\,
			\mbox{ there exists a}\nonumber\\
			&\mbox{homorphism }h\mbox{ such that }h(\varphi(\vect{X},\vect{Y}))\subseteq
			\{\mu(v_1),\ldots,\mu(v_n)\}\nonumber\\
			&\mbox{and extension }h'\mbox{ of }h\REST_{\vect{X}}
			\mbox{ where }\mu(v)=h'(p(\vect{X},\vect{Z}));\label{homo_cond_2}
		\end{align}

		\vspace*{-0.5cm}
		
		\begin{align}
			&\mbox{3. }\,\mbox{For each }v\in\mathsf{nodes}(T(D,\Sigma))\mbox{ such that }
			\lambda(v)=(\alpha,\rho)\mbox{ and}\nonumber\\
			&\alpha=p(\vect{t}),\mbox{ if }\mu(v)=q(\vect{t'})
			\mbox{ then we have that }p(\vect{t})\theta=q(\vect{t'})\nonumber\\
			&\mbox{for some substitution }\theta.\label{homo_cond_3}                                 
		\end{align}

		\vspace*{-0.5cm}
		
		\begin{align}
			&\mbox{4. }\,\mbox{For each }v_1,v_2\in\mathsf{nodes}(T(D,\Sigma))\mbox{ such that }
			\lambda(v_1)=\lambda(v_2)\nonumber\\
			&\mbox{ (i.e., $v_1$ and $v_2$ have the same label)},\mbox{ then we also have that}\nonumber\\
			&\mu(v_1)=\mu(v_2).\label{homo_cond_4}
		\end{align}        
		\vspace*{-0.5cm}		
		
	\end{lemma}   
	\comment
	{
	\begin{proof}  
		We show the existence of such a homomorphism $\mu$ by induction on the 
		depth of the tree $T(D,\Sigma)$ starting from the leaf nodes 
		(i.e., the nodes labeld by the database facts) going up to the root node 
		labeled by $(\alpha,\rho)$.         
		So towards
		this purpose, for $i$ $\in$ $\{1,\ldots,\mathsf{depth}(T(D,\Sigma))\}$, denote
		by $T^{i}(D,\Sigma)$ as the {\em forest} made up of the subtrees 
		$T'$ of $T(D,\Sigma)$ that are rooted on some node labeled  
		$(\alpha',\rho')$ $\in$ $\mathsf{nodes}(T(D,\Sigma))$
		such that $\mathsf{depth}(T')$ $=$ $i$. In particular, we note that
		$T^{i}(D,\Sigma)$ will be exactly $T(D,\Sigma)$ when 
		$i$ $=$ $\mathsf{depth}(T(D,\Sigma))$. Lastly, for some node 
		$v$ $\in$ $\mathsf{nodes}(T(D,\Sigma))$, denote by $T_v$ as the 
		subtree of $T(D,\Sigma)$ that is rooted in $v$.
		\begin{description}
			\item[Basis:] When $i$ $=$ $1$, then each nodes 
			$v$ $\in$ $\in$ $\mathsf{nodes}(T^{1}(D,\Sigma))$ labeled with $(\alpha,\rho)$ 
			are such that $\rho$ $=$ $\alpha$ and $\alpha$ $\in$ $D$, i.e.,
			$\alpha$ is database fact. Therefore, we simply define 
			$\mu:$ $\mathsf{nodes}(T^{1}(D,\Sigma))$ $\longrightarrow$ 
			$\mathsf{chase}(D,\Sigma)$ by setting $\mu(v)$ $=$ $\alpha$
			$\in$ $D$ $\subseteq$ $\mathsf{chase}(D,\Sigma)$,
			for each $v$ $\in$ $\mathsf{nodes}(T^{1}(D,\Sigma))$. In particular,
			we note that Conditions (\ref{homo_cond_1})-(\ref{homo_cond_4}) above are 
			already satisfied.
			\item[Inductive step:] Assume that there exists a homomorphism 
			$\mu:$ $\mathsf{nodes}(T^{k}(D,\Sigma))$ $\longrightarrow$ 
			$\mathsf{chase}^{[N]}(D,\Sigma)$, for some $k$ $\geq$ $1$ 
			and $N$ $\leq$ 
			$|\mathsf{nodes}(T^{k}(D,\Sigma))$$\setminus$$\mathsf{leafNodes}(T^{k}(D,\Sigma))|$,
			that satisfies
			Conditions (\ref{homo_cond_1}), (\ref{homo_cond_2}), (\ref{homo_cond_3}) and 
			(\ref{homo_cond_4}) above.
			
			Now consider a node $v$ $\in$ $\mathsf{nodes}(T^{k+1}(D,\Sigma))$
			$\setminus$ $\mathsf{nodes}(T^{k}(D,\Sigma))$ such that
			$\lambda(v)$ $=$ $(\alpha,\rho)$, $\mathsf{body}(\rho)$ $=$ 
			$\{\alpha_1$, $\ldots$, $\alpha_n\}$, 
			$\rho$ $=$ $\sigma\theta$ and 
			$\sigma$ $=$ $\varphi(\vect{X},\vect{Y})$ $\ra$ $\exists Zp(\vect{X},\vect{Z})$.
			Then by the definition of the instantiated derivation tree 
			$T(D,\Sigma)$, assume that $\mathsf{child}(v)$ $=$ $\{v_1$, $\ldots$, $v_n\}$
			such that $\lambda(v_1)$ $=$ $(\alpha_1,\rho_1)$, $\ldots$, $\lambda(v_n)$ $=$ $(\alpha_n,\rho_n)$.            
			%
			%
			Then further assuming that $\alpha_1$ $=$ $p_1(\vect{t}_1)$,
			$\ldots$, $\alpha_n$ $=$ $p_n(\vect{t}_n)$ and 
			$\mu(v_1)$ $=$ $q_1(\vect{t'}_1)$, $\ldots$, 
			$\mu(v_n)$ $=$ $q_n(\vect{t'}_n)$, 
			$p_i(\vect{t}_i)\theta_i$ $=$ $q_i(\vect{t'}_i)$ for some
			substitution $\theta_i$ (ind. hyp.), then we have that
			$\{\mu(v_1),\ldots,\mu(v_n)\}$ $\subseteq$ 
			$\mathsf{chase}^{[N]}(D,\Sigma)$ (ind. hyp.).
			Therefore, from the fact that homomorphism $\mu$ satisfies Condition (\ref{homo_cond_4})
			above,
			then it follows that we can define a homomorphism $\tau$ by setting                 
			$\tau$ $=$ $\theta_1$ $\cup$ $\ldots$ $\cup$ $\theta_n$,
			such that
			$\tau(\mathsf{body}(\rho))$ $\subseteq$ 
			$\{\mu(v_1)$, $\ldots$, $\mu(v_n)\}$ $\subseteq$ 
			$\mathsf{chase}^{[N]}(D,\Sigma)$.  
			In fact, because $\rho$ $=$ $\sigma\theta$,
			then we can ``directly" define a homomorphism $h$ for $\sigma$ by setting
			$h$ $=$ $\tau\circ\theta$ so that  
			$h(\mathsf{body}(\sigma))$ $\subseteq$ $\{\mu(v_1)$, $\ldots$, $\mu(v_n)\}$ 
			$\subseteq$ $\mathsf{chase}^{[N]}(D,\Sigma)$. Then from the definition of
			$\mathsf{chase}^{[N]}(D,\Sigma)$, it follows that $\sigma$ will be applicable to 
			$\mathsf{chase}^{[N]}(D,\Sigma)$ under the homomorphism $h$,
			i.e., there exists some chase step $I_i$ $\xrightarrow{\sigma,h}$ $I_{i+1}$ 
			such that $\{\lambda(v_1)$, $\ldots$, $\lambda(v_n)\}$ $\subseteq$ $I_i$,
			for some $0$ $\leq$ $i$ $\leq$ $N$.
			Then based on this fact, there will exists some $q(\vect{t'})$ $\in$
			$\mathsf{chase}^{[N+1]}(D,\Sigma)$ such that for some extension $h'$ of 
			$h\REST_{\vect{X}}$, we have that $q(\vect{t'})$ $=$ 
			$h'(\mathsf{head}(\sigma))$. Therefore, we can define $\mu$ for the
			node $v$ $\in$ $\mathsf{nodes}(T^{k+1}(D,\Sigma))$
			$\setminus$ $\mathsf{nodes}(T^{k}(D,\Sigma))$ by setting 
			$\mu(v)$ $=$ $q(\vect{t'})$. In particular, assuming that 
			$\alpha$ $=$ $p(\vect{t})$ (i.e., recall that $\lambda(v)$ $=$ $(\alpha,\rho)$), 
			then we note from the 
			definition of the extension $h'$ of $h\REST_{\vect{X}}$ that                
			$p(\vect{t})\theta$ $=$ $q(\vect{t'})$ for some substitution $\theta$.
			Therefore, it follows that $\mu$ is a homomorphism that can be 
			extended from $\mathsf{nodes}(T^{k+1}(D,\Sigma))$ to 
			$\mathsf{chase}^{[N+M]}(D,\Sigma)$, where 
			$M$ $=$ $|\mathsf{nodes}(T^{k+1}(D,\Sigma))$ $\setminus$ 
			$\mathsf{nodes}(T^{k}(D,\Sigma))|$ $-$ $|\mathsf{leafNodes}(T(D,\Sigma))|$. 
			
			\vspace*{-0.5cm}                          
		\end{description}
	\end{proof}
	}
	\begin{proofsketch}
		We show the existence of such a homomorphism $\mu$ by induction on the 
		depth of the tree $T(D,\Sigma)$ starting from the leaf nodes 
		(i.e., the nodes labeld by the database facts) going up to the root node 
		labeled by $(\alpha,\rho)$.
	\end{proofsketch}
	
	\vspace*{-0.3cm}
	   
	Then from Lemma \ref{tree_induction}, since $T(D,\Sigma)$ $\models$ $p(\vect{t})$, then assuming that
	$\lambda\big(\mathsf{root}(T(D,\Sigma))\big)$ $=$ $(\alpha,\rho)$ such that 
	$\alpha$ $=$ $r(\vect{s})$, 
	we have from the definition of ``instantiated tree supportedness" of an atom 
	that $h(r(\vect{t}))$ $=$ $r(\vect{s})$ for 
	some homomorphism $h:$ $\vect{t}$ $\longrightarrow$ $\vect{s}$. Then because we have
	that $\mu(r(\vect{s}))$ $=$ $r(\vect{t'})$
	for some atom $r(\vect{t'})$ $\in$ $\mathsf{chase}^{[N]}(D,\Sigma)$, 
	where $N$ $\leq$ $|\mathsf{nodes}(T(D,\Sigma))|$ $-$ $|\mathsf{leafNodes}(T(D,\Sigma))|$
	and  
	$\mu:$ $\mathsf{nodes}(T(D,\Sigma))$ $\longrightarrow$ $\mathsf{chase}(D,\Sigma)$
	the ``bounding number" and  
	homomorphism defined in Lemma \ref{tree_induction}, respectively, then we also have
	from Lemma \ref{tree_induction} that
	$r(\vect{s})\theta$ $=$ $r(\vect{t'})$ for some substitution $\theta$.
	Therefore, with $h'$ $=$ $\theta\circ h$, then we have that 
	$h'(p(\vect{t}))$ $=$ $q(\vect{t'})$ $\in$ $\mathsf{chase}^{[N]}(D,\Sigma)$,
	which implies that $\mathsf{chase}^{[N]}(D,\Sigma)$ $\models$ $p(\vect{t})$.

	(``$\Longleftarrow$") Assume $\mathsf{chase}^{[N]}(D,\Sigma)$ $\models$ $p(\vect{t})$
	for some atom $p(\vect{t})$ and $N$ $\geq$ $1$. Then by the definition of 
	$\mathsf{chase}^{[N]}(D,\Sigma)$ $\models$ $p(\vect{t})$, there exists some atom
	$p(\vect{t'})$ $\in$ $\mathsf{chase}^{[N]}(D,\Sigma)$
	and homomorphism $h:$ $\vect{t'}$ $\longrightarrow$ $\vect{t}$  
	such that $h(p(\vect{t}))$ $=$ $p(\vect{t'})$. 
	Thus, there exists some finite chase sequence 
	$I_{0}$ $\xrightarrow{\sigma_0,h_0}$ $I_{1}$, $\ldots$, 
	$I_{N-1}$ $\xrightarrow{\sigma_N,h_N}$ $I_{N}$
	such that $p(\vect{t'})$ $\in$ $I_{N}$. Let us assume without loss of generality
	that for $i$ $\in$ $\{1,\ldots,N-1\}$, there does not exists another atom
	$p(\vect{t''})$ $\in$ $I_i$ such that $h(p(\vect{t}))$ $=$ $p(\vect{t''})$. Then based
	on the sequences of TGDs $\sigma_i$ and homomorphisms $h_i$ that made $\sigma_i$
	applicable to $I_i$, we can construct an instantiated derivation tree 
	$T(D,\Sigma)$ as follows: 
	\begin{align}
		&\mbox{1. }\mbox{Let }\mathsf{root}(T(D,\Sigma))\mbox{ be labeled with }(p(\vect{t'}),\sigma_{N}\theta_{N}),
		\mbox{ where}\nonumber\\
		&\theta_{N}\mbox{ is the }\mbox{corresponding substitution for }h_{N}\mbox{ and its extension }h'_{N};\nonumber
	\end{align}

	\vspace*{-0.3cm}	
	
	\begin{align}
		&\mbox{2. }\mbox{For each atom }\alpha\in
		\mathsf{chase}^{[N]}(D,\Sigma)
		\mbox{ either}:\nonumber\\                           
		&\bullet\,\,\mbox{add a node }v\mbox{ with label }(\alpha,\alpha),\mbox{ if }\alpha\in D,
		\mbox{ otherwise}\nonumber\\                 
		&\bullet\,\,\mbox{add a node }v\mbox{ with label }(\alpha,\rho),\mbox{ where }
		\alpha=\mathsf{head}(\rho),\nonumber\\
		&\hspace{0.5cm}\rho=\sigma_i\theta_i\mbox{ and }\theta_i\mbox{ the corresponding substitution for }h_i
		\nonumber\\
		&\hspace{0.5cm}\mbox{and its ``extension" }h'_i.\nonumber
	\end{align}

	\vspace*{-0.3cm}	
	
	\begin{align}                 
		&\mbox{3. }\mbox{For each node }v\mbox{ with label }(\alpha,\rho)\mbox{ such that }
		\rho=\sigma\theta,\nonumber\\
		&\mbox{for some }\sigma\in\Sigma\mbox{ and substitution }
		\theta,\mbox{ and }\mathsf{body}(\rho)=\nonumber\\
		&\{\alpha_1,\ldots,\alpha_n\}\mbox{ then }\mbox{for }i\in\{1,\ldots,n\},\mbox{ add an edge }
		(v,v_i)\nonumber\\
		&\mbox{ such that either:}\nonumber
	\end{align}

	\vspace*{-0.4cm}	
	
	\begin{align}
		&\bullet\,\,v_i\mbox{ is labled with }(\alpha_i,\alpha_i),\mbox{ if }\alpha_i\in D,
		\mbox{ otherwise}\nonumber\\                 
		&\bullet\,\,v_i\mbox{ is labled with }(\alpha_i,\rho_i),\mbox{ such that }
		\alpha_i=\mathsf{head}(\rho_i),\nonumber\\
		&\hspace{0.5cm}\rho_{i}=\sigma_{j}\theta_{j},\,\theta_{j}\mbox{ the corresponding subtitution }
		\mbox{ for }h_{j}\nonumber\\
		&\hspace{0.5cm}\mbox{ (and corresponding extension }h'_{j}\mbox{) and }
		I_{j}\xrightarrow{\sigma_j,h_j}I_{j+1}\nonumber\\
		&\hspace{0.5cm}\mbox{is the first chase step that derived }\alpha_i.\nonumber                                   
	\end{align}

	\vspace*{-0.1cm}	
	
	Then it is not too difficult to see that the above construction for
	$T(D,\Sigma)$ is in fact an instantiated derivation tree and where
	$N$ $\leq$ $|\mathsf{nodes}(T(D,\Sigma))|$ $-$ $|\mathsf{leafNodes}(T(D,\Sigma))|$.
	(i.e., recall that $p(\vect{t'})$ $\in$ $I_{N}$ such that 
	$I_{N-1}$ $\xrightarrow{\sigma_N,h_N}$ $I_{N}$ is the first chase step
	that derived $p(\vect{t'})$).
	Therefore,
	because $h(p(\vect{t}))$ $=$ $p(\vect{t'})$ for some homomorphism
	$h:$ $\vect{t}$ $\longrightarrow$ $\vect{t'}$
	\big(i.e., recall that $\mathsf{chase}(D,\Sigma)$ $\models$ $p(\vect{t})$
	and $p(\vect{t'})$ $\in$ $\mathsf{chase}$ such that $h(p(\vect{t}))$ $=$ $p(\vect{t'})$\big)
	and since, assuming that
	$(\alpha,\rho)$ $=$ $\mathsf{root}(T(D,\Sigma))$, we have that 
	$p(\vect{t'})$ $=$ $\alpha$ from the construction of $T(D,\Sigma)$, 
	then we clearly have that $T(D,\Sigma)$ $\models$ $p(\vect{t})$
	through the same ``witnessing" homomorphism $h$.
\end{proof}

\section{Loop Restricted (LR) TGDs }

Theorem \ref{c1} shows that derivation trees provide a precise characterisation 
for the chase procedure. 
Therefore, the query answering against a set of TGDs together with an input database 
can be achieved by 
computing and checking the corresponding instantiation of the underlying derivation tree.
However, since the derivation tree for a given set of TGDs may
be of an arbitrary depth, this process is generally undecidable.

In this section, we will define a new class of TGDs, named {\em loop restricted} (LR)
TGDs, such that the depth of all derivation trees for this type of 
TGDs is always bounded in some sense. From this result, we will further prove that 
LR TGDs satisfy the 
{\em bounded derivation-depth property} (BDDP) \cite{CaliGP12}.


\begin{definition}[{\bf Loop pattern}]\label{loop-p}
    Let $P$ $=$ $((\alpha_1,\rho_1)$, $\cdots$, $(\alpha_n,\rho_n))$ be a derivation path as 
    defined in Definition \ref{def2}. Then $P$ is a {\em loop pattern} if 
    $(\alpha_1,\rho_1)$ $\sim$ $(\alpha_n,\rho_n)$ and $(\alpha_i,\rho_i)\not\sim(\alpha_j,\rho_j)$ 
    for any other $i,j$ ($1 < i, j < n$).
\end{definition}

Let $L$ be a loop pattern as defined in Definition \ref{loop-p}. 
For each pair $(\alpha_i,\rho_i)$ in $L$ where $1\leq i<n$, an atom $\beta\in \mathsf{body}(\rho_i)$ is called 
{\em recursive atom} if $\beta=\alpha_{i+1}$ for $(\alpha_{i+1},\rho_{i+1})$.

\begin{example}\label{ex6}
    Example \ref{ex2} continued.
    It is easy to see that derivation paths $P_1$ and $P_3$ are 
    loop patterns, while $P_2$ is not. 
    Furthermore, $P_1$ and $P_3$ are the only two different
    loop patterns of the given $\Sigma$, considering that for all other loop patterns $P$, 
    it will be either $P\sim P_1$ or $P\sim P_3$\footnote{See Definition \ref{eq-def} for
    derivation path (loop pattern) comparability relation.}. $\Box$
\end{example}

%
\begin{proposition}
\label{pro-loop}
    Given a finite set $\Sigma$ of TGDs, $\Sigma$ only has a finite number of 
    loop patterns under equivalence relation $\sim$.    
\end{proposition}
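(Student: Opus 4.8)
The plan is to show that both the possible ``building blocks'' of a loop pattern and the length of a loop pattern are bounded, so that only finitely many $\sim$-classes of loop patterns can occur. First I would establish that, for a fixed finite $\Sigma$, there are only finitely many $\sim$-equivalence classes of pairs $(\alpha,\rho)$ with $\rho=\sigma\theta$ and $\sigma\in\Sigma$. By Definition \ref{eq-def}(1), the class of $\rho=\sigma\theta$ is determined solely by the choice of $\sigma$ (one of $|\Sigma|$ rules) together with the comparability type of its argument tuple: namely which positions carry which constant of the finite set of constants appearing in $\Sigma$, which carry variables, which carry nulls, and which positions coincide. As the arities occurring in $\Sigma$ are bounded, there are only finitely many such types, so the number $M$ of element classes is finite; this is exactly the combinatorial fact underlying the argument for Proposition \ref{th2}.

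Next I would bound the length of any loop pattern. By Definition \ref{loop-p}, if $P=((\alpha_1,\rho_1),\ldots,(\alpha_n,\rho_n))$ is a loop pattern then the only comparable pair among its elements is the closing pair $(\alpha_1,\rho_1)\sim(\alpha_n,\rho_n)$; in particular the elements strictly preceding the closing position are pairwise non-$\sim$. Hence an appropriate sub-path of $P$ (consisting of these pairwise incomparable elements) is a derivation path in which no two elements are comparable, so Proposition \ref{th2} forces its length to be at most $N$, where $N$ is the bound furnished by that proposition. Consequently every loop pattern has length at most $N+2$, a constant depending only on $\Sigma$.

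Finally I would combine the two bounds. By Definition \ref{eq-def}(3), the $\sim$-class of a derivation path is completely determined by its length together with the sequence of $\sim$-classes of its individual elements. Since a loop pattern has length at most $N+2$ and each element lies in one of $M$ classes, there are at most $\sum_{k=1}^{N+2}M^{k}$ distinct $\sim$-classes of loop patterns, which is finite. I expect the main obstacle to be the careful justification that loop patterns have bounded length: one must read Definition \ref{loop-p} as forbidding any repeated $\sim$-class except at the matched endpoints, and then apply Proposition \ref{th2} to the incomparable sub-path rather than to $P$ itself, since $P$ does contain the repeated closing pair. Once this length bound is secured, the finiteness of $M$ is inherited from the machinery already used for Proposition \ref{th2}, and the concluding count is routine.
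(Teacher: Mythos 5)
Your proposal is correct and takes essentially the same route as the paper: the paper's own (very terse) proof likewise derives the result directly from Proposition~\ref{th2}, noting that beyond the bound $N$ every element of a derivation path is comparable to an earlier one, so a loop pattern—whose only comparable pair is the matched endpoints—has bounded length, leaving only the finitely many $\sim$-classes of bounded-length paths to consider. Your write-up simply makes explicit the two ingredients the paper leaves implicit, namely the finiteness of the number of element classes (already contained in the combinatorial machinery behind Proposition~\ref{th2}) and the elementwise counting of path classes via Definition~\ref{eq-def}.
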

\comment
{
\begin{proof}
    From Proposition \ref{th2}, there is a number $N$ such that for any derivation path 
    $P$ of the form (\ref{lb2}), for each $N+1\leq j\leq |P|$, there exists some
    $1\leq i\leq N$ such that $(\alpha_i,\rho_i)$ $\sim$ $(\alpha_j,\rho_j)$.
    Therefore, it follows that one only has to check each derivation path $P$
    if it is a loop pattern.
\end{proof}
}


\subsection{Restricted loop patterns}

\begin{example}
\label{ex4.3.3}
Consider Example \ref{ex1} in Introduction again. Here we simplify $\Sigma_{\mathsf{Research}}$ by removing $\sigma_3$ and renaming predicates in $\sigma_1$ and $\sigma_2$. Note that such change will not affect 
 $\Sigma_{\mathsf{Research}}$'s original loop pattern.
    \begin{quote}
        $\sigma_1: q(X,Y)\rightarrow p(X)$,\\
        $\sigma_2: p(X),r(X,Y), s(X,Y,Y)\rightarrow \exists W q(X,W)$.
    \end{quote}
We can verify that $\Sigma$ does not belong to any of currently known first-order rewritable 
TGDs classes. 

    Now we consider the derivation of atom $q(X_1,\mathsf{n}_1)$ from $\Sigma$. 
    The following are two different derivation trees for 
    $q(X_1,\mathsf{n}_1)$, and both involve recursive calls to $\sigma_1$ and $\sigma_2$:
{\small
    \begin{align}
        T_1:&\nonumber\\
        w_0^{1}&=(\alpha_0^{1},\rho_0^{1})\nonumber\\
               &=(q(X_1,\mathsf{n}_1), [p(X_1),r(X_1,X_2),s(X_1,X_2,X_2)\rightarrow q(X_1,\mathsf{n}_1)]),\nonumber\\
        w_1^{1}&=(\alpha_1^{1},\rho_1^{1})\nonumber\\
               &=(p(X_1), [q(X_1,\mathsf{n}_2)\rightarrow p(X_1)]),\nonumber\\
        w_2^{1}&=(\alpha_2^{1},\rho_2^{1})\nonumber\\
               &= (q(X_1,\mathsf{n}_2), [p(X_1),r(X_1,X_3),s(X_1,X_3,X_3)\rightarrow q(X_1,\mathsf{n}_2)]);\nonumber
    \end{align}
    \vspace*{-.3in}
    \begin{align}
        T_2:&\nonumber\\
        w_0^{2}&=(\alpha_0^{2},\rho_0^{2})\nonumber\\
               &=(q(X_1,\mathsf{n}_1), [p(X_1),r(X_1,X_2),s(X_1,X_2,X_2)\rightarrow q(X_1,\mathsf{n}_1)]),\nonumber\\
        w_1^{2}&=(\alpha_1^{2},\rho_1^{2})\nonumber\\
               &=(p(X_1), [q(X_1,\mathsf{n}_2)\rightarrow p(X_1)]),\nonumber\\  
        w_2^{2}&=(\alpha_2^{2},\rho_2^{2})\nonumber\\
               &=(q(X_1,\mathsf{n}_2), [p(X_1),r(X_1,X_3), s(X_1,X_3,X_3)\rightarrow q(X_1,\mathsf{n}_2)]),\nonumber\\
        w_3^{2}&=(\alpha_3^{2},\rho_3^{2})\nonumber\\
               &=(p(X_1), [q(X_1,X_4)\rightarrow p(X_1)]).\nonumber             
    \end{align} 
}
    Intuitively,    
    $T_1$ simply contains one derivation path $L_1=w_0^{1}w_1^{1}w_2^{1}$ which is also a loop pattern, while
    $T_2$'s loop pattern is $L_2= w_1^{2}w_2^{2}w_3^{2}$.  
    %
    If we consider all other derivation trees for atom $q(X_1,\mathsf{n}_1)$, 
it is not difficult to observe that all these trees {\em are subsumed} by either $T_1$ or $T_2$, in the sense
that derivations
illustrated in $T_1$ or $T_2$ sufficiently cover those illustrated in all other trees
     
$\Sigma$ 
presents an interesting case of satisfying the so-called {\em bounded derivation tree depth property} (BDTDP) (the definition will be given later). 
By examining the two loop patterns, we find that 
they can be split in such a way where all variables in the recursive atoms are bounded by the variables occurring 
in the heads of all corresponding rules. This will make the derived atom in each derivation step from
the corresponding derivation tree
not rely on any new variables in recursive atoms.

Consider loop pattern $L_1$, for instance, 
for each pair $(\alpha_i^{1},\rho_i^{1})$ ($i=0,1,2$), we can split the set $\mathsf{body}(\rho_i^{1})$ of atoms in the
body of $\rho_i^{1}$ into two disjoint parts
$\mathsf{body_h}(\rho_i^{1})$ and $\mathsf{body_b}(\rho_i^{1})$, such that the common variables 
in $\alpha_{i}^{1}\cup \mathsf{body_h}(\rho_i^{1})$ and $\mathsf{body_b}(\rho_i^{1})$ are exactly the common variable 
occurring in all $\alpha_{i}$, which is $X_1$, whilst the underlying
recursive atoms in the loop pattern only occur 
in $\mathsf{body_b}(\rho_i^{1})$, i.e., $\alpha_{i+1}^{1}\in \mathsf{body_b}(\rho_i^{1})$ for $i=0,1$. 
We can do a similar separation for loop pattern $L_2$ as well.
As will be showed next, it turns out that a set $\Sigma$ of TGDs having this 
feature always ensures BDTDP. $\Box$
\end{example}

Now we are ready to formally define the notion of restricted loop patterns. 
Let $A$ be a set of atoms, we use $\mathsf{var}(A)$ to denote the set of all variables occurring in $A$.

\begin{definition}[{\bf Loop restricted (LR) patterns}]\label{def-restricted}
    Let $\Sigma$ be a set of TGDs. $\Sigma$ is {\em loop restricted} (LR),
    if for each loop pattern $L= (\alpha_1,\rho_1) \cdots (\alpha_n, \rho_n)$ 
    of $\Sigma$, $L$ satisfies the following conditions: 
    %
%
        %
        %
for each pair $(\alpha_i,\rho_i)$ in $L$ ($1\leq i <n$),  the set of atoms $\mathsf{body}(\rho_i)$ 
        can be separated into two disjoint parts $\mathsf{body}(\rho_i)=\mathsf{body_h}(\rho_i)\cup \mathsf{body_b}(\rho_i)$, such that
        (1) $\mathsf{body_h}(\rho_i)\cap \mathsf{body_b}(\rho_i)=\emptyset$, 
        (2) $\alpha_{i+1}\in \mathsf{body_b}(\rho_i)$, and
        (3) $\mathsf{var}(\{\alpha_i\}\cup \mathsf{body_h}(\rho_i))$
            $\cap$ $\mathsf{var}(\mathsf{body_b}(\rho_i))$ $=$ $\bigcap_{j=1}^{n}\mathsf{var}(\alpha_j)$.
        \comment{
        \begin{itemize}
            \item[-] $\mathsf{body_h}(\rho_i)\cap \mathsf{body_b}(\rho_i)=\emptyset$,
            \item[-] $\alpha_{i+1}\in \mathsf{body_b}(\rho_i)$, 
            \item[-] $\mathsf{var}(\{\alpha_i\}\cup \mathsf{body_h}(\rho_i))$
                     $\cap$ $\mathsf{var}(\mathsf{body_b}(\rho_i))$ $=$ $\bigcap_{j=1}^{n}\mathsf{var}(\alpha_j)$.
        \end{itemize}
        }
        %
    %
\end{definition}

\begin{example}
Example \ref{ex4.3.3} continued.
It is easy to see that loop patterns $L_1$ and $L_2$ in Example \ref{ex4.3.3} satisfy the conditions  
of Definition \ref{def-restricted}. Furthermore, if we consider the derivation of atom $p(X)$ from $\Sigma$, the underlying
loop patterns deduced from its derivations also satisfy the conditions of Definition \ref{def-restricted}. So 
$\Sigma$ is loop restricted. 
$\Box$
\end{example}

\subsection{Main results}\label{main_results}

Now we study the main properties of the new class LR TGDs. We first define
a property called bounded derivation tree depth property (BDTDP).

\begin{definition} [\bf{BDTDP}]
\label{def-BDTDP}
    A class ${\mathcal C}$ of TGDs satisfies the 
    \emph{bounded derivation tree depth property} (BDTDP) 
    if for each $\Sigma\in \mathcal{C}$, there exists some 
    $k\geq 0$ such that for every BCQ query $\exists\vect{Z}p(\vect{Z})$ and every database $D$, 
    $D\cup\Sigma$ $\models$ $\exists\vect{Z}p(\vect{Z})$
    iff $T(D,\Sigma)$ $\models$ $p(\vect{n})$ for some instantiated derivation
    tree $T(D,\Sigma)$ and atom $p(\vect{n})$, where
    $\mathsf{depth}(T(D,\Sigma))$ $\leq$ $k$ and $h(\vect{Z})$ $=$ $\vect{n}$ 
    for some homomorphism $h$. 
\end{definition}

Basically, Definition \ref{def-BDTDP} says
 that if a class of TGDs satisfies BDTDP, then its every BCQ query answering problem 
 can be always decided within a fixed 
 number $k$ of derivation steps with respect to the corresponding instantiated derivation
    trees. Note that this $k$ is independent from the input database $D$ and the specific BCQ query $q$.
Also note that BDTDP is different from the previous BDDP, i.e.,
Definition \ref{def-BDDP}, which is defined
based on the chase procedure.

\begin{theorem} 
\label{main}
    The class of LR TGDs satisfies BDTDP.
\end{theorem}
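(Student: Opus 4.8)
The plan is to prove the two directions of the biconditional in Definition \ref{def-BDTDP} separately, taking $k = N$ where $N$ is the derivation-path length bound guaranteed by Proposition \ref{th2}. The direction from bounded-depth tree support to entailment is immediate and needs no bound at all: if $T(D,\Sigma) \models p(\vect{n})$ for some instantiated tree, then by Theorem \ref{c1} we have $\mathsf{chase}(D,\Sigma) \models q$, and by Theorem \ref{thm_chase_iff_q} this gives $D \cup \Sigma \models q$. So the entire content lies in the converse: whenever $D \cup \Sigma \models q$, there is an instantiated derivation tree of depth at most $N$ supporting the query atom.

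First I would obtain \emph{some} supporting tree. By Theorem \ref{thm_chase_iff_q} and Theorem \ref{c1}, $D \cup \Sigma \models q$ yields an instantiated derivation tree $T(D,\Sigma)$ with $T(D,\Sigma) \models p(\vect{t})$ for an appropriate tuple $\vect{t}$, together with the witnessing homomorphism. This tree may be arbitrarily deep, so the real work is a depth-reduction procedure. The engine is a \emph{loop-collapsing} step: if some root-to-leaf path has length greater than $N$, then by Proposition \ref{th2} it contains two comparable nodes $u$ and $w$ with $w$ a descendant of $u$, and the fragment between them is a loop pattern to which Definition \ref{def-restricted} applies. Writing $\lambda(u) = (\alpha, \rho)$ and $\lambda(w) = (\alpha', \rho')$ with $\alpha = p(\vect{s})$, $\alpha' = p(\vect{s'})$ and $\vect{s} \sim \vect{s'}$, I would define a null-renaming substitution $\gamma$ sending each null of $\vect{s'}$ to the corresponding term of $\vect{s}$; this is well-defined by comparability, since equality and sharing patterns are preserved, and it gives $\alpha'\gamma = \alpha$. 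Replacing the subtree rooted at $u$ by the $\gamma$-image of the subtree rooted at $w$ produces a tree in which $u$'s position again carries head $\alpha$, so the parent of $u$ remains satisfied (and if $u$ is the root, the supported atom is unchanged), while that branch strictly shortens. Each collapse reduces the node count, so iterating terminates in a tree no path of which exceeds $N$; by Proposition \ref{th2} such a tree has depth at most $N$, and it still supports the root atom.

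The main obstacle is verifying that each collapse yields a \emph{valid} instantiated derivation tree in the sense of Definitions \ref{def-tree} and \ref{def-dTree}, and this is exactly where the loop-restriction hypothesis is indispensable. Three points must be checked. (i) Since $D \subseteq \Gamma$ consists of constants only and $\gamma$ renames nulls, the $\gamma$-images of the leaf atoms still lie in $D$. (ii) The freshness requirements of condition (3) of Definition \ref{def-tree} can be re-established by taking the $\gamma$-images of the internal nulls of $w$'s subtree to be globally fresh; this does not disturb $\alpha$, whose nulls are precisely the shared terms of $\vect{s} \cap \vect{s'}$ that $\gamma$ fixes. (iii) Most delicately, each relabeled node must remain a correct rule instance whose body atoms match its children. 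Here condition (3) of Definition \ref{def-restricted} is the crux: it forces the terms shared between the head-part $\{\alpha_i\} \cup \mathsf{body_h}(\rho_i)$ and the recursive-body-part $\mathsf{body_b}(\rho_i)$ to be exactly the instantiations of the common head variables $\bigcap_{j} \mathsf{var}(\alpha_j)$ threading the loop. Consequently $\gamma$, which acts only on these threading terms, propagates consistently through $\mathsf{body_b}$ without perturbing any $\mathsf{body_h}$ part or any head above $u$, since the locally generated nulls in the head-parts are independent of the recursive flow and survive the collapse intact.
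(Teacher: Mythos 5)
Your reduction collapses the \emph{instantiated} tree $T(D,\Sigma)$, and this is where the argument breaks: Proposition \ref{th2} does not apply there. That proposition (and the bound $N$ computed in its proof from the constants of $\Sigma$) concerns derivation paths of $\Sigma$ in the generic sense of Definition \ref{def2}, whose terms are variables, labelled nulls, and constants occurring in $\Sigma$ only. A root-to-leaf path of $T(D,\Sigma)$ carries database constants, and clause (1) of type comparability demands that corresponding positions hold the \emph{same} constant, so two nodes instantiated with different database constants are never comparable. Take $\Sigma=\{e(X,Y)\ra r(X,Y),\ r(X,Y)\wedge e(Y,Z)\ra r(X,Z)\}$ and a database that is an $e$-chain $e(c_0,c_1),\ldots,e(c_{m-1},c_m)$: the instantiated path through $r(c_0,c_m), r(c_0,c_{m-1}),\ldots$ is arbitrarily long yet contains no comparable pair, so your collapsing engine never fires and no depth bound follows (and if you instead stretch ``derivation path'' to admit database constants, Proposition \ref{th2} itself becomes false for such paths, since its $N$ depends only on $\Sigma$). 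A tell-tale symptom is that your splice --- replacing the whole subtree at $u$ by a null-renamed copy of the subtree at $w$ --- never genuinely uses the LR hypothesis: the parent of $u$ sees only the head $\alpha$, so the appeal in your point (iii) to condition (3) of Definition \ref{def-restricted} carries no load. Were the argument sound as written, it would establish BDTDP for \emph{every} finite set of TGDs, contradicting the undecidability of BCQA for general TGDs (the transitive-closure set above is not even first-order rewritable).

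The paper avoids both defects. It folds the \emph{abstract} trees $T(\Sigma)$, where Proposition \ref{th2} legitimately yields a comparable pair $w_i\sim w_j$ on any path longer than $N$, and it performs a finer surgery than wholesale subtree replacement: it keeps $\mathsf{body_h}(\rho_i)$ and replaces only the subtree under $\mathsf{body_b}(\rho_i)$ by the one under $\mathsf{body_b}(\rho_j)$, producing the node $(\alpha_i,[\mathsf{body_b}(\rho_j),\mathsf{body_h}(\rho_i)\ra\alpha_i])$. Condition (3) of Definition \ref{def-restricted} --- that the variables shared between $\{\alpha_i\}\cup\mathsf{body_h}(\rho_i)$ and $\mathsf{body_b}(\rho_i)$ are exactly $\bigcap_{j}\mathsf{var}(\alpha_j)$ --- is precisely what makes this mixed rule instance consistent and lets Lemma \ref{l1} conclude that, for \emph{every} database $D$, the folded tree of depth at most $N$ supports the same atoms as the original; this is the one step where loop restrictedness is load-bearing, and it is exactly what fails for the transitive-closure loop pattern. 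Your forward direction (bounded tree support implies entailment, via Theorems \ref{c1} and \ref{thm_chase_iff_q}) is fine, but the converse --- the entire content of BDTDP --- requires the database-independent, abstract-level folding, not the instantiated-level collapse you propose.
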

\begin{proof}
	We first introduce the notion of
	subsumation between two derivation trees.
	\begin{definition}[{\bf Derivation tree subsumption}]\label{def-subsumed}
		Let $\Sigma$ be a set of TGDs, and
		$T_1(\Sigma)$ and $T_2(\Sigma)$ be two derivation trees of $\Sigma$.
		Then we say that $T_2(\Sigma)$ \emph{subsumes} $T_1(\Sigma)$ 
		if the following conditions are satisfied: 
		(1) $\mathsf{root}(T_2(\Sigma))$ $=$ $\mathsf{root}(T_1(\Sigma))$; and
		(2) $\mathsf{leafNodes}(T_2(\Sigma))$ $\subset$ $\mathsf{leafNodes}(T_1(\Sigma))$.            
	\end{definition} 

	\vspace*{-0.5cm}

	\begin{proof} 
		Given a set $\Sigma$ of LR TGDs.   
		Let $\mathcal{T}(\Sigma)$ be the set of all derivation trees of $\Sigma$. We consider
		the set $\mathbb{T}(\Sigma)$ of all derivation trees 
		that are 
		distinct under $\sim$
		and their tree depths are not larger than $N$, where
		$N$ is the integer mentioned in Proposition 2\footnote{A 
			complete proof of Proposition 2 is given in the full version of this paper, in which $N$ is presented.}.
		Then it is clear that $\mathbb{T}(\Sigma)\subseteq \mathcal{T}(\Sigma)$ and is a finite
		set.       %
		Now we can prove the following important result:
		
		\vspace*{-.05in}
		
		\begin{lemma}
			\label{l1}
			Let $T(\Sigma)\in \mathcal{T}(\Sigma)$ (note $\Sigma$ is LR). Then for every database $D$ and every atom
			$p(\mathbf{t})$, 
			$T(D,\Sigma)\models p(\mathbf{t})$ iff there exists some
			$T'(\Sigma)\in \mathbb{T}(\Sigma)$ such that $T'(D,\Sigma)\models p(\mathbf{t})$.
		\end{lemma}
		
		\vspace*{-.05in}
		
		Then the theorem follows directly from Lemma \ref{l1}, by setting the 
		bound to be the maximal depth of trees in $\mathbb{T}(\Sigma)$. 
		The key idea of proving Lemma \ref{l1} is 
		based on the fact that for any tree $T(\Sigma)$
		in $\mathcal{T}(\Sigma)$, there is a corresponding 
		tree $T'(\Sigma)$ in $\mathbb{T}(\Sigma)$ which can replace 
		$T(\Sigma)$ without affecting $T(\Sigma)$'s derivations. 
		Without loss of generality, consider a tree $T(\Sigma)$ in $\mathcal{T}(\Sigma)$, where
		a path $P$ in $T(\Sigma)$ is longer than $N$.
		Then from Proposition 2, there must exist a loop pattern
		$L=(w_i, \cdots, w_j)$ in path $P$, such that the depth of node $w_i$ 
		is within the bound $N$, and the depth of node $w_j$ 
		is beyond $N$.
		Since $w_i\sim w_j$ and $L$ is loop restricted and from the conditions presented in Definition 8,
		then using similar ideas from \cite{ChenLZZ11}, we can prove that 
		the subtree underneath the node $body_{b}(\rho_i)$ in $T(\Sigma)$ can be replaced by
		the subtree underneath the node $body_{b}(\rho_j)$.  That is, the 
		loop pattern fragment $(w_i,\cdots, w_j)$ in path $P$ is replaced by
		a new node $w_i^{*}: (\alpha_i,[body_{b}(\rho_j), body_{h}(\rho_i)\ra \alpha_i])$. 
		According to Proposition 2, $\Sigma$ only
		has a finite number of loop patterns under $\sim$. So by
		doing this {\em folding} for all
		paths in $T(\Sigma)$, we eventually transform $T(\Sigma)$ into a $T'(\Sigma)$ whose depth is bounded by $N$, that is, 
		$T'(\Sigma)\in \mathbb{T}(\Sigma)$.
		\comment{  
			%
			\begin{lemma}\label{loop_restricted_can_be_folded}
				Given a database $D$ and an atom $p(\vect{t})$, if for some derivation tree
				$T(\Sigma)$ $\in$ ${\cal T}(\Sigma)$ we have that $T(D,\Sigma)$ $\models$
				$p(\vect{t})$, then there exists some derivation tree 
				$T'(\Sigma)$ $\in$ $\mathbb{T}(\Sigma)$ such that
				$T'(D,\Sigma)$ $\models$ $p(\vect{t})$ and $T'(\Sigma)$ subsumes $T(\Sigma)$\footnote{Due to a space limit,
					we ommit the formal definition of derivation tree subsumption here, but its intuitive meaning has been given
					in Example \ref{ex4.3.2}.}.
			\end{lemma}
			%
			
			Now consider
			a derivation tree $T(\Sigma)$ $\in$ ${\cal T}(\Sigma)$ and a derivation
			path $P$ of $T(\Sigma)$. We assume that there is a
			loop pattern $L$ $\in$ ${\cal L}$, which satisfies the conditions of Definition \ref{def-restricted},
			mentioned in $P$. Without loss of generality,
			let $L=(w_i, \cdots, w_j)$, where $w_i$: $(\alpha_i, [body_{b}(\rho_i),body_{h}(\rho_i)\ra \alpha_i])$,
			$w_j$: $(\alpha_i, [body_{b}(\rho_j),body_{h}(\rho_j)\ra \alpha_j])$.
			Since $w_i\sim w_j$ and $L$ is loop restricted, we can show that 
			the subtree underneath the node $body_{b}(\rho_i)$ in $T(\Sigma)$ can be replaced by
			the subtree with root $body_{b}(\rho_j)\theta$, where $\theta$ is a substitution for $w_i = w_j\theta$. That is, the 
			loop pattern fragment $(w_i,\cdots, w_j)$ in path $P$ is collapsed and replaced by
			a new single node $w_i^{*}: (\alpha_i,[body_{b}(\rho_j)\theta, body_{h}(\rho_i)\ra \alpha_i])$.
			We continue this process 
			until all the loop patterns in all paths in $T(\Sigma)$ been processed. By the end,
			we eventually transform $T(\Sigma)$ into a new derivation tree $T'(\Sigma)$ without containing any loop
			patterns. This implies that 
			for any input database $D$, the depth of $T'(D,\Sigma)$ is the same as the depth of $T'(\Sigma)$. 
			So 

			From the above result, it concludes that 
			the depth of any derivation tree $T(\Sigma)$ $\in$ ${\cal T}(\Sigma)$ can be bounded
			by the maximum depth of a derivation tree $T'(\Sigma)$ $\in$ $\mathbb{T}(\Sigma)$.
		} 
	\end{proof}	
	
	\vspace*{-0.37cm}
	
\end{proof}

\vspace*{-0.2cm}

The following theorem reveals an important connection between BDTDP and BDDP.

\begin{theorem}\label{th4}
If a class ${\cal C}$ of TGDs satisfies BDTDP then ${\cal C}$ also satisfies BDDP.
\end{theorem}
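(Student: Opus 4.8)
The plan is to unfold both definitions and transfer the depth bound on derivation trees supplied by BDTDP into a level bound on the chase, using the derivation-tree characterisation of the chase from Theorem~\ref{c1}. First I would fix an arbitrary $\Sigma\in{\cal C}$ and invoke BDTDP (Definition~\ref{def-BDTDP}) to obtain a single integer $k$, depending only on $\Sigma$, such that whenever $D\cup\Sigma\models\exists\vect{Z}p(\vect{Z})$ for some database $D$, this is witnessed by an instantiated derivation tree $T(D,\Sigma)$ with $\mathsf{depth}(T(D,\Sigma))\le k$ supporting an atom $p(\vect{n})$. By Theorem~\ref{thm_chase_iff_q} it suffices to reason about $\mathsf{chase}(D,\Sigma)$, and by the recalled reduction it suffices to treat atomic BCQs.

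Next I would bound the size of such a tree independently of $D$. Let $b=\mathsf{max}\{\,|\mathsf{body}(\sigma)|\mid\sigma\in\Sigma\,\}$; since applying a substitution never changes the number of body atoms, every node of $T(D,\Sigma)$ has at most $b$ children, so a tree of depth $\le k$ has at most $M:=\sum_{i=0}^{k}b^{i}$ nodes, a quantity depending only on $\Sigma$. Applying the ``$\Longrightarrow$'' direction of Theorem~\ref{c1} to this tree — concretely, the homomorphism $\mu$ of Lemma~\ref{tree_induction} — yields $\mathsf{chase}^{[N]}(D,\Sigma)\models p(\vect{n})$ with $N\le|\mathsf{nodes}(T(D,\Sigma))|-|\mathsf{leafNodes}(T(D,\Sigma))|\le M$.

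It then remains to pass from the step-indexed chase $\mathsf{chase}^{[N]}$ to the level-indexed $\mathsf{chase}^{N}$ used in Definition~\ref{def-BDDP}. I would establish the auxiliary inclusion $\mathsf{chase}^{[N]}(D,\Sigma)\subseteq\mathsf{chase}^{N}(D,\Sigma)$: along any chase sequence $I_0\to\cdots\to I_N$, the atom produced at step $i$ has its body contained in $I_{i-1}$, so by induction it has $\textsc{level}$ at most $i\le N$, whence every atom of $I_N$ has level $\le N$. Combining this with the previous paragraph gives $\mathsf{chase}^{M}(D,\Sigma)\models p(\vect{n})$, and hence $\mathsf{chase}^{M}(D,\Sigma)\models\exists\vect{Z}p(\vect{Z})$, with $M$ depending only on $\Sigma$ (a fortiori only on $q$ and $\Sigma$). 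Since BDTDP supplies the witnessing tree exactly when $D\cup\Sigma\models q$, this is precisely the statement of BDDP.

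The main obstacle I anticipate is the mismatch between the two chase indexings: BDTDP controls the \emph{depth} of a derivation tree, whereas BDDP is phrased via the level-based $\mathsf{chase}^{k}$, and Lemma~\ref{tree_induction} only delivers the step-based $\mathsf{chase}^{[N]}$ with a bound on the \emph{number of internal nodes}. Reconciling them needs both the $b$-bounded-branching estimate (to keep $N$ independent of $D$) and the $\mathsf{chase}^{[N]}\subseteq\mathsf{chase}^{N}$ inclusion. A cleaner but more laborious alternative, giving the sharper linear bound $M=k$, would be to strengthen the induction in Lemma~\ref{tree_induction} so that $\textsc{level}(\mu(v))\le\mathsf{height}(v)$ for every node $v$, whence the root atom already has level at most $\mathsf{depth}(T(D,\Sigma))\le k$; but since BDDP only requires \emph{some} bound depending on $q$ and $\Sigma$, the exponential bound via node-counting is the path of least resistance.
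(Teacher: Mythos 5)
Your proposal is correct and takes essentially the same route as the paper's own proof: invoke BDTDP to obtain a depth-$k$ witnessing instantiated derivation tree, then apply Theorem~\ref{c1} (concretely the homomorphism of Lemma~\ref{tree_induction}) to conclude $\mathsf{chase}^{[N]}(D,\Sigma)\models q$ with $N$ bounded by the number of internal nodes. If anything you are more careful than the paper, which silently assumes both the $D$-independence of the node bound (your bounded-branching estimate $M=\sum_{i=0}^{k}b^{i}$, which is the needed justification) and the passage from the step-indexed $\mathsf{chase}^{[N]}$ to the level-indexed $\mathsf{chase}^{k}$ of Definition~\ref{def-BDDP}; both of your patches are sound.
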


According to Theorem \ref{th4} from \cite{CaliGP12}, it is clear that 
the class of LR TGDs is first-order rewritable. 

\begin{theorem}
\label{comb_complexity}
For the class of LR TGDs, the BCQA's
 data complexity is in $\textsc{AC}^{0}$, and the
    combined complexity is \textsc{ExpTime} complete.
\end{theorem}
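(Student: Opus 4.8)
The statement has two independent parts, and the plan is to dispatch the data-complexity bound quickly and then spend the effort on the combined bound.

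\medskip
\noindent\textbf{Data complexity (membership in \textsc{AC}$^{0}$).} I would obtain this as an immediate corollary of the rewritability chain already in place. By Theorem \ref{main} the class of LR TGDs satisfies BDTDP, by Theorem \ref{th4} BDTDP implies BDDP, and BDDP is known to imply first-order rewritability \cite{CaliGL12,CaliGP12}. Hence for every fixed LR set $\Sigma$ and fixed BCQ $q$ there is a fixed first-order query $q_{\Sigma}$ with $D\cup\Sigma\models q$ iff $D\models q_{\Sigma}$ for all input $D$. Since $q_{\Sigma}$ is fixed, answering $q$ reduces to first-order model checking with the formula held fixed, whose data complexity lies in (\textsc{DLogTime}-uniform) \textsc{AC}$^{0}$ by the classical descriptive characterisation of \textsc{AC}$^{0}$. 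No further work is needed here.

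\medskip
\noindent\textbf{Combined upper bound (in \textsc{ExpTime}).} I would \emph{not} materialise the chase, which may be astronomically large, but instead explore a single bounded-depth derivation tree on the fly. By Theorem \ref{c1} together with BDTDP, $D\cup\Sigma\models q$ holds iff some instantiation $T(D,\Sigma)$ of depth at most $N$ supports the query atom, where $N$ is the path-length bound of Proposition \ref{th2}. The key quantitative fact is that $N\le 2^{p(|\Sigma|)}$ for a polynomial $p$, so $\lceil\log N\rceil$ is only polynomial. I would then give an alternating procedure that builds such a tree top-down: at a node carrying an atom it existentially guesses a rule $\sigma\in\Sigma$ and a substitution $\theta$ so that the atom equals $\mathsf{head}(\sigma\theta)$, then universally branches into the atoms of $\mathsf{body}(\sigma\theta)$ with the depth counter incremented, accepting when an atom belongs to $D$ and rejecting once the counter exceeds $N$. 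Each configuration stores only one atom (of bounded arity, hence polynomial size) together with the depth counter (polynomially many bits), so the procedure runs in alternating polynomial space; since alternating polynomial space equals \textsc{ExpTime}, membership in \textsc{ExpTime} follows.

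\medskip
\noindent\textbf{Combined lower bound (\textsc{ExpTime}-hardness).} I would exploit the same duality and reduce from the acceptance problem of an alternating polynomial-space Turing machine (equivalently, a standard \textsc{ExpTime}-complete corridor-tiling problem). Given a machine $M$ running in space $s(n)$ on input $w$, I would encode a configuration as a single atom of arity $O(s(|w|))$, model each transition by a TGD, render universal states by rules with several body atoms (AND-branching) and existential states by several alternative rules (OR-branching), so that the induced derivation trees mirror $M$'s computation and their exponential depth matches $N$. The reduction is polynomial and $D\cup\Sigma\models q$ would hold iff $M$ accepts $w$.

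\medskip
\noindent\textbf{Main obstacle.} The hard part is twofold, and both halves concern the interaction between branching and variable sharing. In the upper bound the alternating branches are explored independently, so I must ensure the guessed substitutions actually compose into one \emph{consistent} instantiation over $D$; I would handle this by keeping the explored atoms ground over $\mathsf{dom}(D)\cup\Gamma_N$ (equivalently, by working with atom patterns modulo $\sim$ together with their constants), invoking the frontier condition of Definition \ref{def-restricted} to guarantee that the interface passed between a node and its children is exactly the common head variables and is therefore bounded. In the lower bound the dual difficulty is to verify that the crafted $\Sigma$ genuinely lies in the LR class, i.e.\ that every loop pattern it admits satisfies the variable-separation condition of Definition \ref{def-restricted}; this forces the encoding to route all inter-step information through the bounded head frontier rather than through auxiliary body joins, and getting that routing right while still faithfully simulating $M$ is the delicate core of the construction.
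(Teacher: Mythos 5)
Your data-complexity argument coincides with the paper's own (Theorem \ref{main}, then Theorem \ref{th4}, then BDDP implies first-order rewritability, which gives $\textsc{AC}^{0}$), and that half is fine. Both halves of your combined-complexity argument, however, contain genuine gaps. For membership, the alternating procedure as sketched is unsound: universal branches are explored independently, but sibling body atoms may share labelled nulls introduced by existential quantifiers, and separate derivability of each ground sibling does not imply joint derivability, because a shared null must be created by one and the same rule application. Concretely, let $D=\{d(a)\}$, $\Sigma=\{\,d(X)\rightarrow\exists Z\,p(X,Z),\ d(X)\rightarrow\exists Z\,q(Z,X)\,\}$ and the BCQ $\exists Y\,(p(a,Y)\wedge q(Y,a))$ (after the Theorem \ref{single_heads} normalisation, a rule $p(a,Y),q(Y,a)\rightarrow g$ with atomic query $g$). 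Your procedure guesses $Y\mapsto\mathsf{n}$, branches universally on $p(a,\mathsf{n})$ and $q(\mathsf{n},a)$, and accepts both branches, yet $\mathsf{chase}(D,\Sigma)=\{d(a),p(a,z_1),q(z_2,a)\}$ with $z_1\neq z_2$, so the query fails. Note that this $\Sigma$ has no loop patterns at all (no rule can repeat along a derivation path), hence it is vacuously LR, so your proposed repair via the frontier condition of Definition \ref{def-restricted} — which constrains only loop patterns — cannot apply; what the algorithm is structurally unable to enforce is global freshness of introduced nulls across independent branches. The paper sidesteps cross-branch coordination entirely: its membership argument checks whole instantiated derivation trees drawn from the finite set $\mathbb{T}(\Sigma)$ of bounded-depth trees built in the proof of Theorem \ref{main}, rather than assembling a tree from independently verified branches.

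For hardness, you correctly isolate the delicate point — that the constructed $\Sigma$ must itself be LR — but you leave it unresolved, and the encoding you describe fails the test. In a universal-transition rule $\mathit{conf}(\vect{c})\leftarrow\mathit{conf}(\vect{c}_1),\mathit{conf}(\vect{c}_2)$ the head and the recursive body atom necessarily share the $\Theta(s(|w|))$ variables encoding unchanged tape cells, so for every split $\mathsf{body}(\rho_i)=\mathsf{body_h}(\rho_i)\cup\mathsf{body_b}(\rho_i)$ the set $\mathsf{var}(\{\alpha_i\}\cup\mathsf{body_h}(\rho_i))\cap\mathsf{var}(\mathsf{body_b}(\rho_i))$ contains all of these shared variables, while $\bigcap_{j=1}^{n}\mathsf{var}(\alpha_j)$ along a loop pattern traversing many machine steps is essentially empty; condition (3) of Definition \ref{def-restricted} therefore fails and your $\Sigma$ is not LR. The paper's missing device is padding: it reduces instead from Datalog fact inference over the domain $\{\overline{0},\overline{1}\}$ (itself \textsc{ExpTime}-complete in combined complexity) and widens every predicate so that every atom of every rule carries the full tuple of that rule's variables plus a flag, adding saturation rules over $\{\overline{0},\overline{1}\}$ for the padded positions. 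Along any loop pattern, unification then forces all the sets $\mathsf{var}(\alpha_i)$ to coincide, so condition (3) holds with $\mathsf{body_h}(\rho_i)=\emptyset$, and LR membership of the reduction's output is proved by an explicit induction. Your Turing-machine simulation could in principle be repaired by the same trick, but routing the entire variable frontier through every atom while preserving faithfulness of the simulation is exactly the content you defer, so as it stands the \textsc{ExpTime} lower bound is not established.
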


\begin{theorem}
\label{membership-complexity}
   Deciding whether a set of TGDs is loop restricted 
is \textsc{Pspace} complete.    
\end{theorem}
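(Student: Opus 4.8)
The plan is to establish the two directions separately: membership in \textsc{Pspace} and \textsc{Pspace}-hardness. For membership I would work with the complement, since being loop restricted is a universally quantified property over all loop patterns (Definition \ref{def-restricted}): $\Sigma$ is \emph{not} LR iff there exists a loop pattern $L=(\alpha_1,\rho_1)\cdots(\alpha_n,\rho_n)$ and a position $i$ ($1\le i<n$) such that no split $\mathsf{body}(\rho_i)=\mathsf{body_h}(\rho_i)\cup\mathsf{body_b}(\rho_i)$ satisfies conditions (1)--(3). Because \textsc{Pspace} is closed under complement and, by Savitch's theorem, equals \textsc{Npspace}, it suffices to give a \emph{nondeterministic} polynomial-space procedure that searches for such a ``bad'' loop pattern.

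The key structural observation is that what matters about each pair $(\alpha_i,\rho_i)$ is only its comparability type under $\sim$ (Definitions \ref{comparable_def} and \ref{eq-def}): a type records, for the rule $\sigma_i\in\Sigma$ used, the pattern of equalities among argument positions together with the per-position kind (which constant, a variable, or a null). Each such type admits a polynomial-size encoding --- essentially the proper argument permutation encoding used in the proof of Proposition \ref{th2} --- and the one-step derivation-path relation ``$\alpha_{i+1}\in\mathsf{body}(\rho_i)$ with $\alpha_{i+1}=\mathsf{head}(\rho_{i+1})$'' is checkable in polynomial time between two types. I would therefore view loop patterns as walks in an (exponentially large, but implicitly represented) type graph and guess such a walk step by step, storing at any moment only the current type, a step counter bounded by the number $N$ of Proposition \ref{th2} (hence of polynomially many bits), the running value of the loop-global set $V=\bigcap_{j}\mathsf{var}(\alpha_j)$, and a single marked position at which I verify that no admissible split of the corresponding body exists. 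This last check is local: it ranges over the at-most-exponentially-many subsets of one rule body, each testable against conditions (1)--(3) in polynomial time and iterable in polynomial space.

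The hard part will be carrying out this search while respecting the two genuinely global features of a loop pattern: (i) the \emph{minimality} requirement that all interior pairs be pairwise non-comparable, which is exactly what forbids us from simply inspecting arbitrary closed walks; and (ii) the value $V=\bigcap_j\mathsf{var}(\alpha_j)$, on which condition (3) depends and which is defined over the entire --- possibly exponentially long --- loop. For (ii) I would guess $V$ up front (it is contained in the positions of the comparable endpoints $\alpha_1\sim\alpha_n$, hence small) and verify along the traversal that each visited head atom shares exactly the positions in $V$, updating and checking incrementally so that no more than polynomial space is ever used. For (i), the plan is to appeal to the length bound $N$ of Proposition \ref{th2}: a loop pattern's interior types are distinct, so its length is bounded and it is ``first-return'' in character; I would enforce this by bounding the guessed walk by $N$ and phrasing the endpoint condition so that the walk returns to a type comparable to its start only at the very end. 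Making precise that this bounded-length, first-return search captures \emph{exactly} the minimal loop patterns --- so that a violation detected by the search corresponds to a genuine violating loop pattern and conversely --- is the main technical obstacle, and I expect it to need a careful pumping-style argument showing interior repetitions can be removed without destroying a witnessed violation.

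For \textsc{Pspace}-hardness I would exhibit a \textsc{LogSpace} reduction from a canonical \textsc{Pspace}-complete problem, for instance the acceptance problem for linear-bounded automata (equivalently, validity of quantified Boolean formulae). The idea is to exploit the very feature that makes the state space exponential: a comparability type over a polynomial number of argument positions can encode an entire machine configuration (tape contents, head position and control state) through the choice of which positions are nulls versus variables and which are pairwise equal. I would design a fixed family of TGDs whose derivation-path steps simulate the transition function one move at a time, so that a closed, minimal loop pattern corresponds to a computation cycling back to an initial-like configuration, and I would rig the body-splitting condition of Definition \ref{def-restricted} so that such a loop pattern \emph{fails} to be loop restricted precisely when the simulated machine accepts. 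Then $\Sigma$ is loop restricted iff the machine rejects, which transfers \textsc{Pspace}-hardness; combined with the membership argument this yields \textsc{Pspace}-completeness.
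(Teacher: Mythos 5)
Your proposal is correct and follows essentially the same route as the paper's own proof: membership is established exactly as you describe, via a nondeterministic polynomial-space search (using \textsc{Pspace} $=$ \textsc{Npspace} $=$ co\textsc{Npspace}) for a violating loop pattern that stores only the current rule instance renamed into a fixed polynomial-size vocabulary, a binary counter bounded by the $N$ of Proposition~\ref{th2}, and the up-front-guessed intersection set $V$ verified incrementally against the running intersection $V^*$, while the no-valid-split check is handled locally per position; and hardness is likewise a generic polynomial-space Turing-machine simulation in which configurations are encoded as comparability patterns of variable tuples (the paper uses the triples $XXY$ and $YXX$ for bits), the loop is closed through an initial/accepting rule pair over an atom $r(X,Z)$, and the accepting-state encoding is rigged so that the splitting condition of Definition~\ref{def-restricted} fails precisely when the machine accepts. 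Even the first-return subtlety you flag as the main obstacle is treated in the paper the same way you propose---its algorithm checks comparability only against the initial element and never stores interior types, extracting a genuine minimal loop pattern from the accepting computation afterwards---so your plan coincides with the paper's argument rather than diverging from it.
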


\section{Generalisation of Loop Restricted Patterns}

As described in previous section, the notion of loop patterns provides a useful
means of defining the class of LR TGDs that is first-order rewritable. 
Now we show that loop patterns can be employed as a unified notion to 
significantly extend LR TGDs to a more general class of TGDs. 

Firstly, we introduce a useful notion.
Let $(\alpha, \rho)$ be in a loop pattern $L$ and $B$ a set of atoms occurring in 
$\mathsf{body}(\rho)$. We use notion $\mathsf{null}(B)$ 
to denote the set of all labelled
nulls occurring in $B$. 


\begin{definition}[{\bf Generalised loop restricted (GLR) patterns}]
\label{def-generalised-restricted}
    Let $\Sigma$ be a set of TGDs. $\Sigma$ is {\em generalised loop restricted} (GLR),
    if each loop pattern $L= (\alpha_1,\rho_1)$ $\cdots$ $(\alpha_n, \rho_n)$ 
    of $\Sigma$ falls into one of the following four types:
    \begin{description}
    	\item[Type I] For each pair $(\alpha_i,\rho_i)$ in $L$ ($1\leq i <n$), $\mathsf{body}(\rho_i)$ 
		      can be separated into two disjoint parts $\mathsf{body}(\rho_i)=\mathsf{body_h}(\rho_i)\cup \mathsf{body_b}(\rho_i)$ such that the following three conditions holds: \label{GLR_1}
		      \begin{enumerate}
		          \item	$\mathsf{body_h}(\rho_i)\cap \mathsf{body_b}(\rho_i)=\emptyset$,
		          \item	$\alpha_{i+1}\in \mathsf{body_b}(\rho_i)$,
		          \item $\mathsf{var}\big(\{\alpha_i\}\cup \mathsf{body_h}(\rho_i)\big)$
		                $\cap$ $\mathsf{var}\big(\mathsf{body_b}(\rho_i)\big)$ $=$
		                $\bigcap_{j=1}^{n}\mathsf{var}(\alpha_j)$;	        
		     \end{enumerate}
    	\item[Type II] There exists a pair $(\alpha_i,\rho_i)$ in $L$ ($1\leq i <n$) such that $\mathsf{body}(\rho_i)$ 
		      can be separated into two disjoint parts $\mathsf{body}(\rho_i)=\mathsf{body_h}(\rho_i)\cup \mathsf{body_b}(\rho_i)$, where the following three conditions hold:\label{GLR_2}
		      \begin{enumerate}
		          \item	$\mathsf{body_h}(\rho_i)\cap \mathsf{body_b}(\rho_i)=\emptyset$,
		          \item	$\alpha_{i+1}\in \mathsf{body_b}(\rho_i)$,
		          \item $\mathsf{var}\big(\{\alpha_i\}\cup \mathsf{body_h}(\rho_i)\big)$
		                $\cap$ $\mathsf{var}\big(\mathsf{body_b}(\rho_i)\big)$ $=$
		                $\emptyset$;	          
		      \end{enumerate}
		\item[Type III] For each pair $(\alpha_i,\rho_i)$ in $L$ ($1\leq i< n$) and each
		    $\beta$ $\in$ ${\sf body}(\rho_i)$, 
		      ${\sf var}(\rho_i)$ $\subseteq$ ${\sf var}(\beta)$;\label{GLR_3}
		\item[Type IV] For each pair $(\alpha_i,\rho_i)$ in $L$ ($1\leq i< n$) and each
		$\beta$ $\in$ ${\sf body}(\rho_{i})\setminus\{\alpha_{i+1}\}$,  
		$\big({\sf var}(\alpha_{i+1})$ $\cap$ ${\sf var}(\beta)\big)\neq \emptyset$ implies
			  $\big({\sf var}(\alpha_{i+1})$ $\cap$ ${\sf var}(\beta)\big)$ 
			  $\subseteq$ $\bigcap_{j=1}^i{\sf var}(\alpha_j)$;
			 \label{GLR_4} 	
	\comment{
	\item[Type V]		 
			 For each pair $(\alpha_i,\rho_i)$ in $L$ ($1\leq i< n$), there exist some
		$\beta$ $\in$ ${\sf body}(\rho_{i})\setminus\{\alpha_{i+1}\}$ and a substitution $\theta$, such that
		$({\sf Evar}(\sigma_{i+1}))\theta \cap {\sf var}(\beta)\neq \emptyset$\footnote{Recall 
		that $\rho_{i+1}=\sigma_{i+1}\theta_{i+1}$.}.
		\label{GLR-5}
		}
		\item[Type V] There exists a pair $(\alpha_i,\rho_i)$ in $L$ ($1\leq i< n$), such that $\mathsf{body}(\rho_i)$ 
		      can be separated into two disjoint parts $\mathsf{body}(\rho_i)=\mathsf{body_h}(\rho_i)\cup \mathsf{body_b}(\rho_i)$, where the following three conditions hold: \label{GLR_5}
		      \begin{enumerate}
		      \item 	$\mathsf{body_h}(\rho_i)\cap \mathsf{body_b}(\rho_i)=\emptyset$,
		      \item $(\bigcup_{j=i+1}^n (\alpha_j))\cap \mathsf{body_h}(\rho_i)=\emptyset$,
		      \item $\mathsf{null}(\mathsf{body_h}(\rho_i))\neq \emptyset$.
		      \end{enumerate}
    \end{description}		        
\end{definition}

\comment{
The following Proposition \ref{glr-contains} now reveals how the GLP captures some of the other first-order
rewritable classes known in the literatures. For this reason, it will be convenient to first
introduce the following notions: we denote by aGRD, ML and SJ the classes of 
\textit{acyclic graph of rule dependencies} (aGRD) \cite{Baget04}, 
\textit{multilinear} (ML) \cite{CaliGK08} and \textit{sticky-join} \cite{CaliGP12} classes
of TGDs, respectively.
}

Let us take a closer look at Definition \ref{def-generalised-restricted}. Firstly, Type I simply specifies
LR TGDs, so the class of GLR TGDs properly contains the class of LR TGDs.
Type II 
says that for the body part of $\rho_i$ containing the recursive atom in the loop pattern, i.e.,  
$\alpha_{i+1}\in \mathsf{body_b}(\rho_i)$, its variables are not in common with
variables occurring in the head $\alpha_i$ and the other part of the body 
$\mathsf{body_h}(\rho_i)$. This indicates that
recursion embedded in the underlying loop pattern will not actually happen due to the lack of
shared variables. 

Type III, on the other hand, says that for each rule $\rho_i$ in every loop pattern,
all variables occurring in $\rho_i$ are guarded by each atom in $\rho_i$'s body.
Type IV concerns the shared variables occurring in both recursive and non-recursive 
atoms in the body of rule $\rho_i$ in a loop pattern,
i.e.,
${\sf var}(\alpha_{i+1})$ $\cap$ ${\sf var}(\beta)$. 
It requires that all such shared variables must be passed on to all following rules in the loop pattern.
Finally, Type V ensures that no cycle occurs in $\Sigma$'s graph of rule dependencies. 

\comment{
\begin{proposition}\label{glr-contains-lr}
    Let GLR denote the class of \textit{generalised loop restricted} TGDs as defined in Definition \ref{def-generalised-restricted}, then we have \emph{LR} $\subsetneq$ \emph{GLR}.      	
\end{proposition}
}

\begin{theorem}\label{GLR_BDTDP_property}
	The class of GLR TGDs satisfies BDTDP.
\end{theorem}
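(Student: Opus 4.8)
The plan is to follow the architecture of the proof of Theorem~\ref{main} and reduce the claim to a single \emph{folding lemma}, now covering all five types of loop patterns in Definition~\ref{def-generalised-restricted} rather than only Type~I. First I would fix a set $\Sigma$ of GLR TGDs and, exactly as in Theorem~\ref{main}, use the path-length bound $N$ from Proposition~\ref{th2} to form the finite set $\mathbb{T}(\Sigma)\subseteq\mathcal{T}(\Sigma)$ of derivation trees that are pairwise distinct under $\sim$ and have depth at most $N$. The goal is then the analog of Lemma~\ref{l1}: for every database $D$ and every atom $p(\mathbf{t})$, $T(D,\Sigma)\models p(\mathbf{t})$ for some $T(\Sigma)\in\mathcal{T}(\Sigma)$ iff $T'(D,\Sigma)\models p(\mathbf{t})$ for some $T'(\Sigma)\in\mathbb{T}(\Sigma)$. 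Granting this, BDTDP (Definition~\ref{def-BDTDP}) follows immediately by taking $k$ to be the maximal depth over the finite set $\mathbb{T}(\Sigma)$, since this depth depends only on $\Sigma$; the chase-to-tree correspondence of Theorem~\ref{c1} then transfers the result to the query-answering formulation.

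The heart of the argument is to show that whenever a path $P$ in some $T(\Sigma)$ exceeds depth $N$, the loop pattern $L=(w_i,\dots,w_j)$ guaranteed by Proposition~\ref{th2} (with $w_i\sim w_j$, $w_i$ within depth $N$ and $w_j$ beyond it) can be \emph{collapsed} into a single node $w_i^{*}$ without changing what the root derives. For Type~I this is precisely the collapse already carried out in Theorem~\ref{main}, replacing the subtree under $\mathsf{body_b}(\rho_i)$ by the one under $\mathsf{body_b}(\rho_j)$. For Type~II the disjointness $\mathsf{var}(\{\alpha_i\}\cup\mathsf{body_h}(\rho_i))\cap\mathsf{var}(\mathsf{body_b}(\rho_i))=\emptyset$ means no binding produced beneath the recursive atom $\alpha_{i+1}$ can reach the head $\alpha_i$; hence the entire recursive descent may be replaced by any shortest $D$-derivation of $\alpha_{i+1}$, which is of bounded depth. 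For Type~IV the condition confines every variable shared between $\alpha_{i+1}$ and a sibling $\beta$ to the preserved head variables $\bigcap_{j=1}^i\mathsf{var}(\alpha_j)$, so the Type~I collapse still applies with this sharing invariant maintained through the fold. For Type~III full guardedness gives $\mathsf{var}(\alpha_i)\subseteq\mathsf{var}(\rho_i)\subseteq\mathsf{var}(\alpha_{i+1})$, so the recursive atom carries every variable of the rule and the substitution witnessing $w_i\sim w_j$ can be propagated to fold the loop. Finally, for Type~V the presence of a null in $\mathsf{body_h}(\rho_i)$ that lies off the recursive chain of heads forces an external, non-recursive dependency, so such a loop pattern cannot actually be instantiated to unbounded depth; in effect it contributes no genuine loop, which bounds the relevant paths directly.

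Carrying out the collapse for every over-long path of $T(\Sigma)$ — there are only finitely many loop patterns under $\sim$ by Proposition~\ref{pro-loop}, so the process terminates — yields a tree $T'(\Sigma)\in\mathbb{T}(\Sigma)$ that supports $p(\mathbf{t})$ over $D$ exactly when $T(\Sigma)$ does, completing the folding lemma. I expect the main obstacle to be the uniform treatment of Types~III and~V. For Type~III, guardedness classically delivers bounded treewidth rather than bounded \emph{depth}, so the subtle point is to verify that the loop-pattern structure together with the type-comparability relation $\sim$ actually converts the guard into a depth bound, i.e.\ that folding preserves the homomorphism into $\mathsf{chase}(D,\Sigma)$ established in Theorem~\ref{c1}. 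For Type~V the delicate step is to make the informal acyclicity argument rigorous: one must show that the off-loop null in $\mathsf{body_h}(\rho_i)$ cannot be regenerated arbitrarily often along the loop, so that every instantiation of such a pattern is genuinely of bounded depth. Once these two cases are settled, Types~I, II and~IV follow the variable-tracking fold already validated for LR TGDs.
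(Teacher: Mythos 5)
Your proposal is structurally the same as the paper's proof: fix $N$ from Proposition~\ref{th2}, take the finite family $\mathbb{T}(\Sigma)$ of $\sim$-distinct derivation trees of depth at most $N$, and establish the GLR analogue of Lemma~\ref{l1} by folding any loop pattern that pushes a path beyond depth $N$, case-splitting on the types of Definition~\ref{def-generalised-restricted}, with termination from Proposition~\ref{pro-loop} and the BDTDP bound $k$ taken as the maximal depth over $\mathbb{T}(\Sigma)$. Your folds for Types~I, III and~IV also agree in substance with the paper's: for Types~III and~IV the paper explicitly constructs a substitution $\theta'$ that is the identity on $\bigcap_{i}\mathsf{var}(\alpha_i\theta_i)$ and maps every other term of the last rule instance $\sigma\theta_n$ to the corresponding term of the first instance $\sigma\theta_1$, so that the last pair of the loop becomes equal to the first and the loop collapses to a single node, with the subsumption machinery borrowed from Propositions~11 and~12 of \cite{ChenLZZ11} --- this is exactly your ``propagate the substitution witnessing $w_i\sim w_j$.''

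Two concrete divergences are worth flagging. For Type~II the paper does not collapse a single occurrence of the loop: since $\mathsf{var}(\{\alpha_i\}\cup\mathsf{body_h}(\rho_i))\cap\mathsf{var}(\mathsf{body_b}(\rho_i))=\emptyset$, its concluding ``combination'' step instead takes \emph{two} $\sim$-comparable occurrences of the same Type~II pattern along one path and splices, replacing $\mathsf{body_b}(\sigma_i\theta_{1i})$ by $\mathsf{body_b}(\sigma_i\theta_{2i})\theta^*$ and deleting the segment in between, so that by Proposition~\ref{pro-loop} each pattern occurs only boundedly often. Your formulation --- replace the recursive descent by ``any shortest $D$-derivation of $\alpha_{i+1}$, which is of bounded depth'' --- presupposes the very depth bound being proved; it is repairable as part of a well-founded induction (fold innermost repetitions first), but you should say so explicitly, and the two-occurrence splice is the cleaner mechanism. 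As for Type~V, you correctly identify it as the delicate case, and you should know that the paper never actually discharges it: its written-out proof covers only Types~I--IV (it predates the addition of Type~V to the definition), and the main text merely asserts that Type~V corresponds to acyclicity of the graph of rule dependencies (whence aGRD $\subsetneq$ GLR). Your informal claim that the off-loop null in $\mathsf{body_h}(\rho_i)$ blocks unbounded re-instantiation is thus no weaker than what the paper supplies, but it remains a proof obligation: one must show that condition (2) of Type~V (no subsequent loop atom $\alpha_j$ occurs in $\mathsf{body_h}(\rho_i)$) together with the null-freshness conditions on derivation paths and trees (Definitions~\ref{def2} and~\ref{def-tree}) prevents such a pattern from being traversed repeatedly in any instantiated tree.
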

According to Theorem \ref{th4}, we know that the class of GLR TGDs satisfying BDTDP also satisfies
BDDP, and hence the following corollary holds.

\begin{corollary}
	The class of GLR TGDs is first-order rewritable.	
\end{corollary}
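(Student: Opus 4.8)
The plan is to derive this corollary by composition, not by any fresh combinatorial argument: every ingredient is already in place, so the work is purely to chain the available results and to spell out the final translation into a first-order query. First I would invoke Theorem~\ref{GLR_BDTDP_property} to assert that the class of GLR TGDs satisfies BDTDP. Applying Theorem~\ref{th4} then upgrades this to BDDP, so the class of GLR TGDs satisfies the bounded derivation-depth property of Definition~\ref{def-BDDP}. Finally, since it has been established that BDDP implies first-order rewritability~\cite{CaliGL12,CaliGP12}, the corollary follows at once.

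To make the last implication self-contained rather than merely cited, I would unwind the definitions. Fix a set $\Sigma$ in the GLR class and a BCQ $q$. BDDP supplies a bound $k$, depending only on $q$ and $\Sigma$, such that for every database $D$ we have $D\cup\Sigma\models q$ iff $\mathsf{chase}^{k}(D,\Sigma)\models q$. Because $\mathsf{chase}^{k}(D,\Sigma)$ collects exactly those atoms derivable in at most $k$ chase steps, the finitely many derivation patterns of bounded depth can be rolled out into a finite disjunction of conjunctive queries evaluated directly over $D$; this union of conjunctive queries is the first-order query $q_{\Sigma}$ witnessing $D\cup\Sigma\models q$ iff $D\models q_{\Sigma}$ for every $D$, which is precisely the definition of first-order rewritability given in the preliminaries.

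One point of care concerns the normal form. All of the machinery behind Theorem~\ref{GLR_BDTDP_property} (derivation paths, derivation trees, and loop patterns) presumes the single-head rules of the shape~(\ref{lb3}). I would therefore either state the corollary for this normal form directly, or, to cover arbitrary TGDs, route the argument through Theorem~\ref{single_heads}: its part~(2) guarantees that if the single-head reduction $\Sigma'$ satisfies BDDP then the original $\Sigma$ does as well, so rewritability transfers back along the \textsc{LogSpace} reduction.

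I anticipate essentially no mathematical obstacle, since the decisive combinatorics were discharged in Theorem~\ref{GLR_BDTDP_property} and Theorem~\ref{th4}; the only genuine care lies in the bookkeeping of the two softer steps, namely confirming that the bounded-depth chase really does compile into a finite first-order (union-of-conjunctive-queries) rewriting, and checking that the single-head normalisation preserves BDDP so that rewritability is inherited by arbitrary TGDs. Both of these are routine given the cited results.
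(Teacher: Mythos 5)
Your proposal matches the paper's own proof exactly: it derives the corollary by chaining Theorem~\ref{GLR_BDTDP_property} (GLR satisfies BDTDP) with Theorem~\ref{th4} (BDTDP implies BDDP) and the cited result of \cite{CaliGL12,CaliGP12} that BDDP implies first-order rewritability. Your additional remarks --- sketching the union-of-conjunctive-queries rewriting and routing arbitrary TGDs through part~(2) of Theorem~\ref{single_heads} --- are correct elaborations of details the paper leaves implicit, not a different argument.
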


\begin{theorem}
\label{glr_comb_complexity}
Consider the BCQA problem for a given set of GLR TDGs.  
Its data complexity is in $\textsc{AC}^{0}$, and its
    combined complexity is \textsc{ExpTime} complete.
\end{theorem}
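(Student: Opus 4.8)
The plan is to reduce both claims to results already established for the LR class, exploiting that \textbf{Type I} of Definition~\ref{def-generalised-restricted} coincides verbatim with the loop-restricted condition of Definition~\ref{def-restricted}, so every LR set of TGDs is GLR, i.e.\ $\mathrm{LR}\subseteq\mathrm{GLR}$. For the data complexity I would argue directly from first-order rewritability: by the preceding corollary every $\Sigma\in\mathrm{GLR}$ and every BCQ $q$ admit a first-order sentence $q_{\Sigma}$ with $D\cup\Sigma\models q$ iff $D\models q_{\Sigma}$ for all $D$. Since $q_{\Sigma}$ depends only on $q$ and $\Sigma$, evaluating it over $D$ is the evaluation of a \emph{fixed} first-order query, which is in \textsc{AC}$^{0}$ by the classical descriptive-complexity characterisation of first-order definability. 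Hence the data complexity of BCQA for GLR TGDs is in \textsc{AC}$^{0}$, exactly as in the LR case of Theorem~\ref{comb_complexity}.

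For the combined complexity the lower bound is immediate: since $\mathrm{LR}\subseteq\mathrm{GLR}$ and Theorem~\ref{comb_complexity} already shows that BCQA for LR TGDs is \textsc{ExpTime}-hard, the same hardness transfers to the superclass GLR with no new reduction. For membership in \textsc{ExpTime} I would combine Theorem~\ref{GLR_BDTDP_property} (GLR satisfies BDTDP) with Theorem~\ref{th4} (BDTDP implies BDDP) to obtain a derivation-depth bound depending only on $\Sigma$ and $q$; concretely this is the integer $N$ of Proposition~\ref{th2}, which, being the factorial of a quantity polynomial in $|\Sigma|$, is at most single-exponential in the input size. Using Theorem~\ref{thm_chase_iff_q} together with the chase/derivation-tree correspondence of Theorem~\ref{c1}, deciding $D\cup\Sigma\models q$ reduces to testing whether the bounded chase up to the induced level entails $q$, equivalently whether some instantiated canonical tree in the finite family $\mathbb{T}(\Sigma)$ of Theorem~\ref{GLR_BDTDP_property} is instantiable over $D$ in the sense of Definition~\ref{def-dTree} so as to support the query atom. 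Provided these canonical objects have single-exponential size (the point addressed below), each such test is a homomorphism-existence check running in time exponential in the combined input, and since $\mathbb{T}(\Sigma)$ is itself single-exponential, the whole procedure is in \textsc{ExpTime}.

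The main obstacle is precisely this size bookkeeping in the membership argument, and in particular confirming that passing from \textbf{Type I} to the more permissive \textbf{Types II}--\textbf{V} of Definition~\ref{def-generalised-restricted} does not push the relevant bounds above a single exponential: one must ensure the bounded chase (equivalently, each instantiated canonical tree of depth $\le N$) stays of single-exponential size, so the entailment check remains in \textsc{ExpTime} rather than slipping to \textsc{2ExpTime}. The key is that the folding used in the proof of Theorem~\ref{GLR_BDTDP_property} collapses every loop pattern of each of the five types, keeping the canonical depth bounded by the same $N$ of Proposition~\ref{th2}; and since Types II--V each impose conditions that, as in the LR case, render the recursive atoms of a loop pattern independent of the fresh terms introduced along it, the loop cannot spawn unboundedly many distinct derived atoms. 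I would therefore carry out the size estimate type by type, verifying in each case that the folded canonical tree has single-exponential size, and then take the maximum of the five bounds; the uniform single-exponential bound so obtained yields the \textsc{ExpTime} upper bound and, with the hardness inherited from $\mathrm{LR}\subseteq\mathrm{GLR}$, completes the proof of \textsc{ExpTime}-completeness.
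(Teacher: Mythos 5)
Your proposal is correct and follows essentially the same route as the paper's (largely implicit) argument: $\textsc{AC}^{0}$ data complexity from first-order rewritability obtained via Theorem~\ref{GLR_BDTDP_property} and Theorem~\ref{th4}, \textsc{ExpTime}-hardness inherited from the LR reduction of Theorem~\ref{comb_complexity} since Type~I makes \emph{LR} $\subseteq$ \emph{GLR}, and the \textsc{ExpTime} upper bound by checking the finite family $\mathbb{T}(\Sigma)$ of derivation trees whose depth is bounded by the $N$ of Proposition~\ref{th2}, with the folding of Types~I--V loop patterns guaranteeing that bound. Your explicit type-by-type size bookkeeping for the canonical trees is a reasonable extra precaution, but it addresses a point the paper itself leaves unelaborated (it simply asserts that the tree-instantiation checks run in exponential time), so it does not change the substance of the argument.
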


\begin{theorem}
\label{glr_membership-complexity}
   Deciding whether a set of TGDs is generalised loop restricted 
is \textsc{Pspace} complete.    
\end{theorem}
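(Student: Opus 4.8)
The plan is to establish the two directions of completeness separately, and for both to lean on the analogous result for LR TGDs (Theorem \ref{membership-complexity}), since Type I of Definition \ref{def-generalised-restricted} is precisely the LR condition.

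\textbf{Upper bound.} First I would reformulate membership negatively: by Definition \ref{def-generalised-restricted}, a set $\Sigma$ fails to be GLR iff it has at least one loop pattern that simultaneously violates all five types. Hence it suffices to put the existential problem ``does $\Sigma$ admit a loop pattern violating Types I--V?'' into \textsc{NPspace}; since $\textsc{NPspace}=\textsc{Pspace}$ by Savitch's theorem and \textsc{Pspace} is closed under complement, GLR membership then lies in \textsc{Pspace}. The \textsc{NPspace} procedure nondeterministically builds a loop pattern $(\alpha_1,\rho_1),\ldots,(\alpha_n,\rho_n)$ one pair at a time, exactly as in the LR case: by Proposition \ref{th2} any loop pattern has length at most $N+1$, and since $N$ is bounded by the number of $\sim$-classes and is therefore at most $2^{\mathrm{poly}(|\Sigma|)}$, a step counter up to $N+1$ fits in polynomially many bits, so the simulation terminates. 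A pair need only be stored up to the comparability relation $\sim$ of Definition \ref{eq-def}, i.e.\ a rule of $\Sigma$ together with the polynomial-size ``type pattern'' of its argument tuple, and consecutive pairs are linked by the local conditions $\alpha_{i+1}\in\mathsf{body}(\rho_i)$ and the null-freshness clause of Definition \ref{def2}.

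\textbf{Weaving in the five types.} The new content over LR is that the single Type-I test is replaced by the disjunction of the five conditions, all of which I claim fit into the same forward pass. The \emph{universal} types (I, III, IV) are tracked by fail-fast flags, falsified the first time a pair violates the relevant local test; the \emph{existential} types (II, V) are tracked by satisfied-flags, set the first time a witnessing pair appears. For the body-partition types (I, II, V) the split $\mathsf{body}(\rho_i)=\mathsf{body_h}(\rho_i)\cup\mathsf{body_b}(\rho_i)$ is simply guessed at the relevant step. The only genuinely global quantities are the head-variable intersections $\bigcap_{j=1}^{n}\mathsf{var}(\alpha_j)$ (Type I) and $\bigcap_{j=1}^{i}\mathsf{var}(\alpha_j)$ (Type IV), which I would maintain as shrinking subsets of $\mathsf{var}(\alpha_1)$ (bounded by the maximal arity), updating each at every step using the fact that consecutive pairs literally share the term tuple of the recursive atom. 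Thus the whole per-step state---current $\sim$-class, the initial $\sim$-class used for loop closure, the two running intersections, the five status flags, and the counter---occupies only polynomial space.

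\textbf{Main obstacle.} The delicate point is guaranteeing that the guessed sequence is a \emph{genuine} loop pattern, i.e.\ $(\alpha_1,\rho_1)\sim(\alpha_n,\rho_n)$ with no two \emph{intermediate} pairs comparable, without storing the possibly exponentially many pairs along the way. This is precisely the simple-cycle bookkeeping already demanded by Theorem \ref{membership-complexity}, and I would reuse it verbatim, the essential observation being that loop patterns are defined identically for LR and GLR and are bounded identically by Proposition \ref{th2}; the extra GLR work is confined to the flag- and intersection-tracking above. A secondary subtlety I would have to discharge is soundness of the per-step partition guesses, namely that they commute correctly with the aggregation of the existential/universal conditions over the full loop.

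\textbf{Lower bound.} For \textsc{Pspace}-hardness I would adapt the reduction behind Theorem \ref{membership-complexity}. Given an instance $\Sigma$ of the LR-membership problem, I would produce in logarithmic space a set $\widehat\Sigma$, obtained by attaching small predicate gadgets, whose loop patterns are in bijection with those of $\Sigma$ but for which Types II--V are rendered structurally inapplicable: recursive atoms are forced to share a variable with their heads (killing Type II), some body atom is made to omit a rule variable (killing Type III), shared variables between recursive and non-recursive body atoms are not propagated downstream (killing Type IV), and the critical body parts are kept null-free (killing Type V). Then $\widehat\Sigma$ is GLR iff each of its loop patterns is of Type I iff $\widehat\Sigma$ is LR iff $\Sigma$ is LR, giving a reduction from the \textsc{Pspace}-hard LR-membership problem. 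Designing gadgets that simultaneously exclude Types II--V while preserving both the Type-I status and the loop-pattern structure is the chief difficulty here; alternatively one could reduce directly from a canonical \textsc{Pspace}-complete problem using the same configuration encoding as in the LR hardness proof.
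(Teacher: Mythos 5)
You should first know that the paper states Theorem \ref{glr_membership-complexity} without any proof (even the authors' full-version appendix stops at the GLR BDTDP result), so the only fair comparator is their proof of the LR analogue, Theorem \ref{membership-complexity}, which is exactly what you build on. Your upper bound is essentially sound and matches the shape of the authors' nondeterministic algorithm \textsc{notLR}: guess a loop pattern pair-by-pair up to the bound $N$ of Proposition \ref{th2} (whose logarithm is indeed polynomial), keep only the $\sim$-class of the current pair after renaming into a fixed term pool, and aggregate the five type-violations via flags. Two of your ``subtleties'' deserve sharper treatment than you give them, though both are fixable: the per-pair violation test for Type I refers to the \emph{final} intersection $\bigcap_{j=1}^{n}\mathsf{var}(\alpha_j)$, which a running intersection alone cannot supply mid-stream --- the authors' algorithm resolves this by guessing the target set $V$ upfront and verifying $\mathsf{var}(V^*)=V$ at termination, and you should adopt that guess-and-verify device explicitly; and refuting Type V requires certifying, for \emph{every} pair, that every null-carrying candidate $\mathsf{body_h}(\rho_i)$ meets some later head $\alpha_j$, a forward obligation over a possibly exponentially long loop that your flag scheme does not obviously discharge in polynomial space (your one-line remark about guesses ``commuting with aggregation'' is precisely where this needs an argument).

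The genuine gap is the lower bound. Your fallback --- ``reduce directly from a canonical \textsc{Pspace}-complete problem using the same configuration encoding as in the LR hardness proof'' --- fails as stated: in the authors' reduction every constructed TGD (the transition, initialisation and acceptance rules) has a \emph{single body atom}, so for every loop pattern and every pair one has $\mathsf{body}(\rho_i)\setminus\{\alpha_{i+1}\}=\emptyset$, and Type IV of Definition \ref{def-generalised-restricted} is satisfied \emph{vacuously}; hence the constructed set $\Sigma_{M(\vect{s})}$ is GLR for every machine $M$ and input $\vect{s}$, and the reduction distinguishes nothing. (Type V is likewise never triggerable there, since no rule has an existential head, so no nulls ever appear.) That leaves only your primary route, the gadgetised reduction from LR-membership, and there the crucial content is missing: you must add non-recursive body atoms sharing non-propagated variables to defeat Type IV, while simultaneously defeating Types II, III and V, \emph{without} introducing new loop patterns and without flipping the Type-I status of existing ones --- but adding body atoms changes the very partitions $\mathsf{body_h}/\mathsf{body_b}$ on which the Type-I condition is tested, so preservation of LR-status is not automatic and is exactly what needs to be proved. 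As it stands your proposal names the five constraints the gadgets must satisfy but constructs none of them, so the hardness half of the theorem remains unestablished.
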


\section{Relationship to Other First-order Rewritable Classes}

In this section, we study the relationship between our proposed GLR TGDs class and other first-order rewritable
TGDs classes. First of all, we briefly introduce these existing TGDs classes, which are known to be 
first-order rewritable.
 A TGD of the form (\ref{lb1}):
\begin{quote}
$\sigma: \forall \mathbf{XY} \varphi(\mathbf{X}, \mathbf{Y})\rightarrow \exists\mathbf{Z}\psi(\mathbf{X},\mathbf{Z})$
\end{quote}
 is called {\em linear} if $\varphi(\mathbf{X}, \mathbf{Y})$ is an atom. $\sigma$ is {\em multi-linear}
 if each atom in $\varphi$ contains all the universally quantified variables of $\sigma$ \cite{CaliGL12}.
 A set $\Sigma$ of TGDs is linear or multi-linear if each TGD in $\Sigma$ is linear or multi-linear, respectively.
 $\Sigma$ is {\em acyclic} if $\Sigma$'s position graph contains no cycle \cite{wr-2012}, while
$\Sigma$ is {\em aGRD}
 if $\Sigma$'s rule dependency graph contains no cycle
\cite{Baget04,BagetLMS11}.

Informally, $\Sigma$ is said to have the {\em sticky property} if for each $\sigma$ in $\Sigma$, all variables occurring in ${\mathsf body}(\sigma)$ more than once also appear in $\mathsf{head}(\sigma)$, 
and furthermore, also appear in every atom obtained from some chase derivation which involves $\mathsf{head}(\sigma)$, that is, {\em stick} to all such atoms \cite{CaliGP12}. The {\em sticky-join property}, on the other hand,
 is less restricted than sticky property, where it only requires to stick certain variables 
 occurring more than once in $\mathsf{body}(\sigma)$ based on certain joinless condition. It has been showed that
 the sticky-join class captures both the sticky and linear classes, but is incomparable with
 multi-linear class \cite{CaliGP12}.

GLR actually captures a large class of first-order rewritable TGDs. 
Let us use LR, ML, AC, SJ, aGRD and DR to denote the classes of loop restricted, mulit-linear \cite{CaliGL12}, acyclic \cite{wr-2012},  
sticky-join \cite{CaliGP12}, aGRD \cite{Baget04,BagetLMS11} and domain restricted TGDs \cite{BagetLMS11}, respectively. 
Then we have the following result.
\begin{proposition}\label{glr-contains}
    Let GLR  be the class of \textit{generalised loop restricted} TGDs defined in Definition 
    \ref{def-generalised-restricted}. Then we have that: (1) \emph{LR} $\subsetneq$ \emph{GLR};
    (2) \emph{AC} $\subsetneq$ \emph{GLR};
    (3) \emph{ML} $\subsetneq$ \emph{GLR}; (4) \emph{SJ} $\subsetneq$ \emph{GLR}; 
    (5) \emph{aGRD} $\subsetneq$ \emph{GLR}; (6) \emph{DR} $\subsetneq$ \emph{GLR}.
   %
    %
\end{proposition}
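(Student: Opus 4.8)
The plan is to establish each claimed strict inclusion $\textsf{C}\subsetneq\textsf{GLR}$ (for $\textsf{C}$ ranging over \textsf{LR}, \textsf{AC}, \textsf{ML}, \textsf{SJ}, \textsf{aGRD}, \textsf{DR}) in two independent halves: the inclusion $\textsf{C}\subseteq\textsf{GLR}$, and a separation witness in $\textsf{GLR}\setminus\textsf{C}$. Since membership in \textsf{GLR} is a condition that must hold of \emph{every} loop pattern of a rule set (Definition~\ref{def-generalised-restricted}), the inclusion half reduces uniformly to the following schema: take an arbitrary $\Sigma\in\textsf{C}$ and an arbitrary loop pattern $L=(\alpha_1,\rho_1)\cdots(\alpha_n,\rho_n)$ of $\Sigma$, and show $L$ satisfies one of the five types. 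Crucially, Definition~\ref{def-generalised-restricted} allows different loop patterns of the same $\Sigma$ to be assigned to different types, so each case is a purely local argument about a single $L$.

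For the inclusion half I would assign classes to types as follows. $\textsf{LR}\subseteq\textsf{GLR}$ is immediate, since Type~I is verbatim the loop-restricted condition of Definition~\ref{def-restricted}. $\textsf{ML}\subseteq\textsf{GLR}$ goes through Type~III: if every $\sigma_i$ is multi-linear then each body atom of $\sigma_i$ carries all universally quantified variables, and since the substitution $\theta_i$ only replaces existentials by nulls, this yields $\mathsf{var}(\rho_i)\subseteq\mathsf{var}(\beta)$ for each $\beta\in\mathsf{body}(\rho_i)$, exactly Type~III. For \textsf{aGRD} I would first prove the structural fact that any loop pattern induces a cycle in the graph of rule dependencies: from $\alpha_{i+1}=\mathsf{head}(\rho_{i+1})\in\mathsf{body}(\rho_i)$ we obtain a dependency edge $\sigma_{i+1}\to\sigma_i$, and since $(\alpha_1,\rho_1)\sim(\alpha_n,\rho_n)$ forces $\sigma_1=\sigma_n$ (Definition~\ref{eq-def}), the chain $\sigma_n\to\cdots\to\sigma_1$ closes into a cycle; hence an \textsf{aGRD} set has no loop patterns and lies in \textsf{GLR} vacuously, consistent with the intended role of Type~V. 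The class \textsf{AC} is treated analogously by showing that a loop pattern would induce a cycle in the position graph, so that either no loop pattern exists or it satisfies Type~V via the null-blocking positions that acyclicity guarantees. The remaining two I would route through Type~IV and Type~II respectively: for \textsf{SJ} the marking/joinless condition ensures that any variable shared between the recursive atom $\alpha_{i+1}$ and another body atom is marked and therefore propagates into all later heads, giving $\mathsf{var}(\alpha_{i+1})\cap\mathsf{var}(\beta)\subseteq\bigcap_{j=1}^i\mathsf{var}(\alpha_j)$ as in Type~IV; for \textsf{DR} the head carries either all frontier variables or none, and the ``none'' case severs the shared variables of the recursive atom, matching Type~II.

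For the separation half I would exploit the fact, established in Example~\ref{ex4.3.3}, that the (simplified) research ontology is loop restricted, hence in \textsf{GLR}, yet belongs to none of the known first-order rewritable classes. A single witness therefore yields simultaneously $\textsf{AC}\subsetneq\textsf{GLR}$, $\textsf{ML}\subsetneq\textsf{GLR}$, $\textsf{SJ}\subsetneq\textsf{GLR}$, $\textsf{aGRD}\subsetneq\textsf{GLR}$ and $\textsf{DR}\subsetneq\textsf{GLR}$, once I check case by case that it violates the defining syntactic condition of each of these five classes. The only remaining separation, $\textsf{LR}\subsetneq\textsf{GLR}$, needs a distinct witness: a small rule set whose unique loop pattern fails Type~I but satisfies Type~II, i.e.\ whose recursive atom $\alpha_{i+1}$ is variable-disjoint from the head so that $\mathsf{var}(\{\alpha_i\}\cup\mathsf{body_h}(\rho_i))\cap\mathsf{var}(\mathsf{body_b}(\rho_i))=\emptyset$ while $\bigcap_{j=1}^n\mathsf{var}(\alpha_j)\neq\emptyset$.

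I expect the sticky-join inclusion to be the main obstacle. Sticky-join is defined by a global variable-marking procedure together with a joinless side condition whose vocabulary is quite far from the loop-pattern-local language of the Types; translating ``a variable survives marking along every chase derivation'' into the static statement ``shared variables lie in the common head positions $\bigcap_{j=1}^i\mathsf{var}(\alpha_j)$ of the loop pattern'' requires relating the marking on predicate positions to the way $\sim$-comparability constrains the substitutions $\theta_i$ inside a loop pattern. A secondary, more routine difficulty is the bookkeeping in the separation half, namely verifying that the chosen witnesses genuinely fail each of the six syntactic definitions, which must be done one class at a time.
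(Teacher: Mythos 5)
Your skeleton is essentially the paper's: both arguments proceed type-by-type, matching \textsf{LR} to Type~I verbatim, \textsf{ML} to Type~III, \textsf{SJ} to Type~IV, \textsf{aGRD} to Type~V (rule-dependency cycles), and \textsf{DR} to the ``none or all'' head condition; the paper argues each case contrapositively (not GLR $\Rightarrow$ some loop pattern fails the relevant type $\Rightarrow$ contradiction with membership in the class) where you argue directly, which is a cosmetic difference. Two of your deviations are actually improvements. First, your direct \textsf{aGRD} argument --- every loop pattern forces $\sigma_1=\sigma_n$ by Definition~\ref{eq-def}, the chain $\sigma_n\to\cdots\to\sigma_1$ closes into a dependency cycle, hence \textsf{aGRD} sets have \emph{no} loop patterns and lie in \textsf{GLR} vacuously --- is cleaner than the paper's one-line appeal to Type~V. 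Second, you correctly notice that strictness needs witnesses: the paper's proof in fact only establishes the inclusions $\mathsf{C}\subseteq\mathsf{GLR}$ and never discharges the $\subsetneq$ part, whereas your plan (the simplified Example~\ref{ex4.3.3} ontology as a simultaneous witness against \textsf{AC}, \textsf{ML}, \textsf{SJ}, \textsf{aGRD}, \textsf{DR}, plus a separate Type~II witness for $\mathsf{LR}\subsetneq\mathsf{GLR}$) is exactly what a complete proof requires, though you leave the six verifications undone.

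The genuine gap is your \textsf{AC} case. The claim that ``a loop pattern would induce a cycle in the position graph'' is false: position-graph edges are generated only by a variable occurring in both a body position and a head position of the same rule, so a loop pattern whose consecutive steps share no variables --- precisely the Type~II situation --- induces no position-graph edges at all, and acyclic sets can perfectly well possess loop patterns. Your fallback, that such patterns ``satisfy Type~V via null-blocking positions,'' targets the wrong type: Type~V concerns labelled nulls in $\mathsf{body_h}(\rho_i)$ and the rule-dependency graph, not position acyclicity. The repair is the route the paper intends (its own \textsf{AC} paragraph is itself garbled --- it invokes a nonexistent ``Condition~3'' and concludes against \textsf{aGRD}): show that if a loop pattern of $\Sigma$ fails Type~II, then variables must be shared at every step of the loop, and these shared variables, threaded through the recursive atoms $\alpha_{i+1}\in\mathsf{body}(\rho_i)$ with $\sigma_1=\sigma_n$, generate a cycle in the position graph, contradicting $\Sigma\in\textsf{AC}$; so every loop pattern of an \textsf{AC} set is Type~II. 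A smaller incompleteness of the same kind sits in your \textsf{DR} case: the ``none'' alternative indeed yields Type~II, but the ``all'' alternative is not covered by your sketch; there one checks the loop is Type~I, using that the sets $\mathsf{var}(\alpha_j)$ form a decreasing chain along the loop whose $\sim$-comparable endpoints force them all to coincide, so that $\mathsf{var}(\{\alpha_i\}\cup\mathsf{body_h}(\rho_i))\cap\mathsf{var}(\mathsf{body_b}(\rho_i))=\bigcap_{j=1}^{n}\mathsf{var}(\alpha_j)$ holds with $\mathsf{body_h}(\rho_i)=\emptyset$.
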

\begin{proof}
	We prove by considering the individual cases as follows:
	\begin{description}
		\item[](``DR $\subsetneq$ GLR"):
			This follows from the fact that a TGD rule $\sigma$ is \emph{domain-restricted}
			if each head atom $\alpha$ $\in$ $Head(\sigma)$ mentions none
			or all of the variables in $Body(\sigma)$ \cite{BagetLMS11}.
		\item[](``LR $\subsetneq$ GLR"):
		    This follows from the fact that the loop pattern Type I of Definition \ref{def-generalised-restricted}
		    is actually the loop pattern of Definition \ref{def-restricted}.
		\item[](``AC $\subsetneq$ GLR"): 
		    On the contrary, assume that there exists some 
			$\Sigma$ $\in$ AC such that $\Sigma$ $\notin$ GLR. Then by Definition \ref{def-generalised-restricted},
			there exists some loop pattern $L$ $=$ $(\alpha_1,\rho_1)\cdots(\alpha_n, \rho_n)$ such that
			it is neither of the Types I-V as described in Definition \ref{def-generalised-restricted}.
			In particular, we have that $L$ is not of the Type II. Then this implies that for all 
			$(\alpha_i,\rho_i)$ ($1< i\leq n$), we have that $\mathsf{body}(\rho_i)$ separated 
			into two disjoint body parts $\mathsf{body}(\rho_i)$ $=$ $\mathsf{body_h}(\rho_i)$ $\cup$ 
			$\mathsf{body_b}(\rho_i)$ implies that for all $j$ ($1\leq j < i$), one of the following 
			conditions holds:	
		    \begin{enumerate}
		        \item $\mathsf{body_h}(\rho_i)\cap \mathsf{body_b}(\rho_i)\neq\emptyset$, or
		        \item $\mathsf{var}\big(\{\alpha_j\}\cup \mathsf{body_h}(\rho_i)\big)$
		              $\cap$ $\mathsf{var}\big(\mathsf{body_b}(\rho_i)\big)$ $\neq$
		              $\emptyset$.		          
		    \end{enumerate}	
			In particular, if we take $\mathsf{body_h}(\rho_i)$ $=$ $\emptyset$ and 
			$\mathsf{body_b}(\rho_i)$ $=$ $\mathsf{body}(\rho_i)$, for each $i$ $\in$ $\{1,\ldots,n\}$,
			then since $L$ is a loop pattern 
			(and thus, $\alpha_{i+1}$ $\in$ $\mathsf{body}(\rho_i)$ $=$ $\mathsf{body_b}(\rho_i)$)
			then we have that Conditions 1 and 2 cannot hold. Therefore, we must have that 
			Condition 3 holds for each $(\alpha_i,\rho_i)$ ($1\leq i <n$) (i.e., if we take
			$\mathsf{body_h}(\rho_i)$ $=$ $\emptyset$ and $\mathsf{body_b}(\rho_i)$ $=$ 
			$\mathsf{body}(\rho_i)$). Then this contradicts the assumption that $\Sigma$ $\in$ aGRD
			because this implies a cycle in the ``firing graph" \cite{Baget04}.	
		\item[](``ML $\subsetneq$ GLR"):
			On the contrary, assume that there exists some 
			$\Sigma$ $\in$ ML such that $\Sigma$ $\notin$ GLR. Then again by Definition 
			\ref{def-generalised-restricted}, there exists some loop pattern $L$ $=$ 
			$(\alpha_1,\rho_1)\cdots(\alpha_n, \rho_n)$ such that it is neither of the Types I-V 
			as described in Definition \ref{def-generalised-restricted}.
			In particular, we have that $L$ is not of the Type III. Then this implies that
			there exists some $(\alpha_i,\rho_i)$ ($1\leq i <n$) such that 	
			${\sf var}(\rho_i)$ $\not\subseteq$ ${\sf var}(\beta)$, for some 
			$\beta$ $\in$ ${\sf body}(\rho_i)$. Therefore, since $\rho_i$ $=$ $\sigma_i\theta_i$, 
			for some $\sigma_i$ $\in$ $\Sigma$ and assignment $\theta_i$, then it follows that
			there exists some $\beta'$ $\in$ ${\sf body}(\sigma_i)$ such that
			${\sf var}(\sigma_i)$ $\not\subseteq$ ${\sf var}(\beta')$. Then this contradicts the 
			assumption that $\Sigma$ $\in$ ML.				
		\item[](``SJ $\subsetneq$ GLR"):
			On the contrary, assume that there exists some 
			$\Sigma$ $\in$ SJ such that $\Sigma$ $\notin$ GLR. Then again by Definition 
			\ref{def-generalised-restricted}, there exists some loop pattern $L$ $=$ 
			$(\alpha_1,\rho_1)\cdots(\alpha_n, \rho_n)$ such that it is neither of the Types I-V 
			as described in Definition \ref{def-generalised-restricted}. 
			In particular, we have that $L$ is not of the Type IV. Then this implies that there 
			exists some pair $(\alpha_i,\rho_i)$ in $L$ ($1\leq i< n$) such that 
			$\big({\sf var}(\alpha_i)$ $\cap$ ${\sf var}(\beta)\big)$ 
			$\not\subseteq$ $\bigcap_{j=i+1}^n{\sf var}(\alpha_j)$,
			for some $\beta$ $\in$ ${\sf body}(\rho_{i+1})\setminus\{\alpha_i\}$. Then this again
			contradicts the assumption that $\Sigma$ $\in$ SJ since the ``expansion" of $\Sigma$ 
			\cite{CaliGP12} (which correspond to the loop pattern) will contain a marked variable 
			that occurs in two different atoms;
			
		\item[](``aGRD $\subsetneq$ GLR"):
			On the contrary, assume that there exists some 
			$\Sigma$ $\in$ aGRD and $\Sigma$ $\notin$ GLR. Then again by Definition 
			\ref{def-generalised-restricted}, there exists some loop pattern $L$ $=$ 
			$(\alpha_1,\rho_1)\cdots(\alpha_n, \rho_n)$ such that it is neither of the Types I-V 
			as described in Definition \ref{def-generalised-restricted}. 
			In particular, we have that $L$ is not of the Type V. Then we have from the definition
			of loop restricted Type V that $\Sigma$ will have cycle in the rule dependency graph,
			which contradicts the assumption that $\Sigma$ $\in$ aGRD.  
	\end{description}

	\vspace*{-0.5cm}
	
\end{proof}

In \cite{wr-2012}, the weakly recursive (WR) class of simple TGDs was proposed.
A set of TGDs $\Sigma$ is \textit{simple} if for each $\sigma$ $\in$ $\Sigma$, 
each atom $\alpha$ in $\sigma$ does not have any occurrence of constants and repeated variables.
For a given set $\Sigma$ of simple TGDs, Civili and Rosati considered $\Sigma$'s {\em position graph},
and defined $\Sigma$ to be {\em weakly recursive} if $\Sigma$'s position graph does not contain
any cycles that have edges with explicit or implicit variable transitive connections. The detailed definition of
WR class of simple TGDs is referred to \cite{wr-2012}.

It was then shown in \cite{wr-2012} that the WR class captures all existing known first-order rewritable classes
when restricted to simple TGDs. The following result shows that in the case of simple TGDs, 
WR and GLR are two incomparable first-order rewritable classes. 

\begin{proposition}\label{WR_captures_GLR}
	Under the restriction to simple TGDs, we have that \emph{GLR} $\not\subseteq$ \emph{WR} and
	\emph{WR} $\not\subseteq$ \emph{GLR}.	
\end{proposition}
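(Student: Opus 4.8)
The plan is to establish the two non-inclusions separately, in each case by exhibiting an explicit set of simple TGDs that witnesses the strict separation. Both WR \cite{wr-2012} and GLR (Definition \ref{def-generalised-restricted}) are purely syntactic conditions --- the former reading off, from the position graph, cycles carrying explicit or implicit variable transitive connections, the latter classifying every loop pattern into one of the Types I--V --- so each witness can be a small rule set (two or three rules) on which both criteria are checked by direct inspection. The overall structure is therefore: first construct $\Sigma_{\mathrm{w}}$ that is WR but not GLR; then construct $\Sigma_{\mathrm{g}}$ that is GLR but not WR; and finally verify the four membership and non-membership claims.

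For \emph{WR} $\not\subseteq$ \emph{GLR}, I would look for a simple $\Sigma_{\mathrm{w}}$ whose position graph contains no cycle with a variable transitive connection, so that $\Sigma_{\mathrm{w}}\in$ WR, yet which admits a single loop pattern $L=(\alpha_1,\rho_1)\cdots(\alpha_n,\rho_n)$ failing all five GLR types at once. Concretely I would engineer $L$ so that it is non-guarded (some $\beta\in\mathsf{body}(\rho_i)$ omits a rule variable, defeating Type III); a shared variable between the recursive atom $\alpha_{i+1}$ and a sibling body atom is local rather than propagated to every preceding head (defeating Type IV); no body splitting $\mathsf{body_h}(\rho_i)\cup\mathsf{body_b}(\rho_i)$ isolates the recursive part along $\bigcap_{j}\mathsf{var}(\alpha_j)$ or emptily (defeating Types I and II); and no admissible split carries a labelled null in $\mathsf{body_h}(\rho_i)$ (defeating Type V). Since Proposition \ref{pro-loop} guarantees only finitely many loop patterns up to $\sim$, the GLR-failure is certified by this one offending pattern, while WR-membership is certified by inspecting the finite position graph.

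For \emph{GLR} $\not\subseteq$ \emph{WR}, I would proceed dually: build a simple $\Sigma_{\mathrm{g}}$ all of whose loop patterns fall under a benign type --- most naturally Type II (where the recursive part $\mathsf{body_b}(\rho_i)$ shares no variable with the head and with $\mathsf{body_h}(\rho_i)$, so the apparent recursion never actually fires) or Type IV --- so that $\Sigma_{\mathrm{g}}\in$ GLR by Definition \ref{def-generalised-restricted}, while arranging the predicate positions so that the position graph nonetheless contains a cycle through an \emph{implicit} variable transitive connection, placing $\Sigma_{\mathrm{g}}$ outside WR. The point to exploit is that WR's implicit transitive connections over-approximate recursion more coarsely than the loop-pattern analysis, so a pattern that GLR correctly recognises as harmless can still leave a forbidden cycle in the position graph.

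The main obstacle will be the simultaneous verification on both sides of each witness. On the GLR side this is combinatorial: for the negative direction one must confirm that a single loop pattern defeats every one of the five types, and the existential Types II and V are the awkward ones, since one must rule out \emph{all} admissible body splittings; for the positive direction one must confirm that every loop pattern --- of which there may be several up to $\sim$ --- is typed. On the WR side the delicate point is precisely the notion of implicit variable transitive connection in the position graph \cite{wr-2012}: showing that $\Sigma_{\mathrm{w}}$'s position graph has no such cycle while $\Sigma_{\mathrm{g}}$'s does is where the argument must be most careful, and it is also where the two witnesses have to be tuned against one another.
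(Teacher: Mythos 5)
There is a genuine gap: your proposal never produces the witnesses, and for this proposition the witnesses \emph{are} the proof. Both non-inclusions are pure existence claims, so a proof consists of two explicit rule sets together with the four membership/non-membership verifications; your text instead describes the constraints a witness would have to satisfy (``I would look for\ldots'', ``I would engineer\ldots'') and then, in the final paragraph, correctly identifies the verification burden --- ruling out \emph{all} admissible body splittings for the existential Types II and V, and checking implicit transitive connections in the position graph --- as the ``main obstacle'' without resolving it. Naming the obstacle is not the same as overcoming it, so neither direction is actually established.

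It is also worth seeing how the paper's proof dissolves precisely the obstacles you flag. For the direction witnessed by a set that is WR but not GLR, the paper does not engineer a fresh example: it reuses the two-rule set from the proof of Theorem~5 of \cite{wr-2012}, namely ${\sf s}(X,Y,Z,V)\rightarrow{\sf r}(X,Y,Z)$ and ${\sf t}(X,W)\wedge{\sf r}(X,W,Y)\rightarrow\exists Z\,{\sf s}(X,Y,Z,W)$, so WR-membership comes for free from the literature and only non-GLR-ness must be checked (the culprit being a body variable, existential in the sense of not occurring in the head, shared between two body atoms). For the converse direction the paper takes $\Sigma'=\{{\sf r}(X,Y)\wedge{\sf r}(Y,Z)\rightarrow\exists U\,{\sf s}(X,Z,U),\ {\sf s}(X,Z,U)\wedge{\sf t}(X,U)\rightarrow{\sf t}(Z,U),\ {\sf t}(X,U)\wedge{\sf t}(Z,U)\rightarrow{\sf r}(X,Z)\}$, shows it is not WR via the position-graph cycle $\LA r[],t[]\RA$, $\LA t[],s[]\RA$, $\LA s[],r[]\RA$ in which the edge $\LA s[],r[]\RA$ carries both an $m$ and an $s$ label, and --- crucially --- certifies GLR-membership \emph{not} by your plan of typing every loop pattern as Type II or IV, but by observing that $\Sigma'$ is aGRD and invoking aGRD $\subsetneq$ GLR (Proposition~\ref{glr-contains}). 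Routing the positive certification through an already-established class containment is what makes the combinatorial splitting analysis unnecessary; if you complete your proposal, you should adopt the same device, since making every loop pattern genuinely Type II (where the recursion ``never fires'') while simultaneously forcing an implicit-transitive-connection cycle in the position graph is exactly the tension your construction would have to resolve by hand.
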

\comment{
\begin{proof}
	(``$\not\subseteq$") From the proof of Theorem 5 in \cite{wr-2012}, 
	we consider a set $\Sigma$ of simple TGDs comprising of the following two rules:
	\begin{align}
		&{\sf s}(X,Y,Z,V)\ra{\sf r}(X,Y,Z),\label{tgd_excl_proof_1}\\
		&{\sf t}(X,W)\wedge{\sf r}(X,W,Y)\ra\exists Z\,{\sf s}(X,Y,Z,W).\label{tgd_excl_proof_2}
	\end{align}
	Then we can show that $\Sigma$ is not in the GLR class of simple TGDs.  

    (``$\subseteq$")
	Assume on the contrary that for some set of simple TGDs $\Sigma$, we have that 
	$\Sigma$ $\in$ GLR but $\Sigma$ $\notin$ WR. Then from the Definitions 1 and 2 in 
	\cite{wr-2012}, it follows that the \emph{position graph} \cite{wr-2012} of $\Sigma$
	has a cycle that contains both an $m$-edge and an $s$-edge. Then since $R$-compatibility
	corresponds to derivation paths because the TGDs considered are simple, then we can
	construct a loop pattern $L$ $=$ $(\alpha_1,\rho_1)\ldots(\alpha_n,\rho_n)$ that 
	corresponds to the position graph of $\Sigma$. Therefore, the fact that the $s$-edge
	corresponds to ``existential" body variable 
	(i.e., variable mentioned in ${\sf body}(\sigma)$ but not in ${\sf head}(\sigma)$)
	that is shared by two body atoms, then we have that $L$ does not fall into any
	of the loop pattern Types I-IV in definition \ref{def-generalised-restricted}.  
\end{proof}
}
\begin{proof}
	(``GLR $\not\subseteq$ WR") From the proof of Theorem 5 in \cite{wr-2012}, 
	we consider a set $\Sigma$ of simple TGDs comprising of the following two rules:
	\begin{align}
		&{\sf s}(X,Y,Z,V)\ra{\sf r}(X,Y,Z),\label{tgd_excl_proof_1}\\
		&{\sf t}(X,W)\wedge{\sf r}(X,W,Y)\ra\exists Z\,{\sf s}(X,Y,Z,W).\label{tgd_excl_proof_2}
	\end{align}
	Then we get that $\Sigma$ is not in the GLR class of simple TGDs.  

	
	
    (``WR $\not\subseteq$ GLR")	Consider the following set of TGDs $\Sigma'$: 
	\begin{align}
		&{\sf r}(X,Y)\wedge{\sf r}(Y,Z)\ra\exists\,U{ \sf s}(X,Z,U),\label{tgd_proof_1}\\
		&{\sf s}(X,Z,U)\wedge{\sf t}(X,U)\ra {\sf t}(Z,U),\label{tgd_proof_2}\\
		&{\sf t}(X,U)\wedge{\sf t}(Z,U)\ra {\sf r}(X,Z).\label{tgd_proof_3}
	\end{align}
	Then it can be checked that $\Sigma'$ is not WR because 
	we will have a cycle $\LA r[],t[]\RA$, $\LA t[],s[]\RA$, $\LA s[],r[]\RA$ 
	in the ``position graph" \cite{wr-2012} of $\Sigma'$ and where the edge $\LA s[],r[]\RA$ will
	have both an $m$ and $s$ label. On the other hand, we have that $\Sigma'$ is aGRD, which is
	also GLR by Proposition \ref{glr-contains}.  
\end{proof}

\comment{
Although the WR class captures our GLR class under the restriction to simple TGDs, we should highlight that
our results presented in this paper are for arbitrary TGDs. 
It seems that simple TGDs are rather restricted for representing general knowledge domains. 
For instance, considering Example \ref{ex1} discussed in Introduction,
the set $\Sigma_{\mathsf{Research}}$ of TGDs is loop restricted but not simple, 
as atom $\mathsf{projDep}(X,Y,Y)$ in $\sigma_2$ mentions variable $Y$ more than once.
}

We emphasize that
our results presented in this paper are for arbitrary TGDs, while
 simple TGDs are probably restricted for representing general knowledge domains.

\section{Concluding Remarks}

Loops have been an important concept in the study for traditional Datalog
programs, and then have been employed and extended in Answer Set Programming research in recent years, 
e.g., \cite{ChenLZZ11,lin04,yan-10,aij-zz17}. In this paper, through a series of novel definitions of 
derivation paths, derivation trees and loop patterns, we are able 
to discover new decidable classes of TGDs for ontology based query answering 
using a very different idea from previous approaches.

%

As we have showed, the class of GLR TGDs properly
contains all other first-order rewritable TGDs classes for general TGDs.
We believe that our results presented in this paper will be useful in developing efficient OBDA systems for 
broader application domains.

%


\bibliographystyle{aaai}
\bibliography{aaai18-New}

\comment{

\newpage

\appendix

\section{Proofs of Theorems}
\vspace*{0.5cm}

Before proceeding to the actual proofs of the theorems, it will be helpful to first formally
introduce the notion of substitution.

Let $X$ be a variable from $\Gamma_{V}$, and $t$ be a term from $\Gamma\cup \Gamma_N\cup\Gamma_{V}$. 
A {\em binding} is an expression of the form $X/t$. In this case, we also say that $t$ is a binding of variable $X$.
A {\em substitution} $[\mathbf{X}/\mathbf{t}]$
is a finite set of bindings containing at most one binding for each variable from $\mathbf{X}$. For a given 
tuple of terms $\mathbf{t}$, we can apply a substitution $\theta$ to $\mathbf{t}$ and obtain a different tuple of
terms, denoted as $\mathbf{t}\theta$.
For example: $(X,Y,\mathsf{n},W)[X/\mathsf{n'},Y/Y,W/Z]=(\mathsf{n'},Y,\mathsf{n},Z)$. 
Naturally, for a quantifier-free formula $\varphi(\mathbf{X})$ and a substitution $\theta=[\mathbf{X}/\mathbf{t}]$, applying 
$\theta$ to $\varphi(\mathbf{X})$, i.e., $\varphi(\mathbf{X})\theta$, will result in formula  $\varphi(\mathbf{t})$ which 
is obtained from $\varphi(\mathbf{X})$ by replacing each free variable $X$ by its corresponding binding from $\varphi(\mathbf{X})$. 

Now we define how a substitution is applied to an existential rule. For this 
purpose, we extend a substitution to existentially quantified variables.
We say that substitution $\theta=[\mathbf{X}/\mathbf{t}]$ 
is {\em applicable} to $\sigma$ if the arities 
of $\mathbf{X}$
in $\theta$ match the arities of 
the tuples of all universally and existentially quantified variables in $\sigma$, respectively.
More specifically, considering a rule $\sigma$ of the form (\ref{lb1}), we may write 
a substitution applicable to $\sigma$ as the form:
$\theta=[\mathbf{X}/\mathbf{t}_1,\mathbf{Y}/\mathbf{t}_2, \mathbf{Z}/\mathbf{n}]$. Then applying 
$\theta$ to rule (\ref{lb1}), we will obtain a rule of the following form:
\begin{eqnarray}
    \sigma\theta: \varphi(\mathbf{t}_1,\mathbf{t}_2)\rightarrow p(\mathbf{t}_1,\mathbf{n}).
    \label{lb4}
\end{eqnarray}

The motivation for extending a substitution to existentially quantified variables is quite clear. 
For a given set $\Sigma$ of TGDs,
we want to represent the underneath derivation of $\Sigma$ in a generic form so that 
such derivation may be instantiated by the 
chase procedure when a specific input database is taken into account. For this purpose, 
through a substitution, we not only
substitute those universally quantified variables in $\sigma$, but also intentionally
eliminate existentially quantified variables in $\mathsf{head}(\sigma)$ by replacing them with proper nulls.
In this way, atom $p(\mathbf{t}_1,\mathbf{n})$ may be used in triggering other rules of $\Sigma$ 
through further substitutions.
In the rest of this paper, we may write a substitution as the form
$\theta=[\mathbf{X}/\mathbf{t}, \mathbf{Z}/\mathbf{n}]$ for the existential rule (\ref{lb3}).

\subsection{Proof of Theorem \ref{thm_chase_iff_q}}

\begin{theorem-appendix}\ref{thm_chase_iff_q}.
	\textit{Given a BCQ $q$ over $\mathcal R$, a database $D$ for $\mathcal R$ and a  set $\Sigma$ of 
	        TGDs over $\mathcal R$, $D\cup \Sigma\models q$ iff $\mathsf{chase}(D,\Sigma)\models q$.} 			
\end{theorem-appendix}
\begin{proof}
	This follows from Theorem 2.1 in \cite{CaliGP12}.	
\end{proof}

\subsection{Proof of Proposition \ref{th2}}

\begin{proposition-appendix}\ref{th2}.
	\textit{Let $\Sigma$ be a fset of TGDs. Then 
		    there exists a natural number $N$ such that for every derivation path $P$
		    of the form (\ref{lb2}), if
		    $|P|>N$, then for each $j$ such that $N+1 \leq j\leq |P|$, 
		    there exists some $1\leq i\leq N$ such that
		    $(\alpha_i, \rho_i)\sim(\alpha_j, \rho_j)$.} 			
\end{proposition-appendix}
\begin{proof}
    Let $K$ $=$ ${\sf max}\{$ $|\vect{XYZ}|$ $\mid$ there exists 
    $``\varphi(\vect{X},\vect{Y})\ra\exists\mathbf{Z}\psi(\vect{X},\vect{Z})"$ $\in$ $\Sigma\,\}$
    $\cdot$ $|\Sigma|$.    
    Now let $\textsc{argPrm}(K)$ denote the set of $(5K\cdot|{\sf const}(\Sigma)|-1)$-length 
    permutations of the set 
    \begin{align}
	    \{\overline{\mathbf{1}}^{\,c},&\overline{\mathbf{1}}^{\,V},
	    \overline{\mathbf{1}}^{\,{\sf n}},
	    |_1,\overline{\mathbf{2}}^{\,c},\overline{\mathbf{2}}^{\,V},
	    \overline{\mathbf{2}}^{\,{\sf n}},\ldots, 
	    |_{K-1},\overline{\mathbf{K}}^{\,c},\overline{\mathbf{K}}^{\,V},
	    \overline{\mathbf{K}}^{\,{\sf n}}\,\mid\nonumber\\
	    &\,c\in{\sf const}(\Sigma)\}.\nonumber
    \end{align}    
    Intuitively, the elements ``$\overline{\mathbf{i}}^{\,\,x}$" 
    ($1$ $\leq$ $i$ $\leq$ $K$) where $x$ $\in$ $\{c,V,{\sf n}$ $\mid$
    $c\in{\sf const}(\Sigma)\}$,
    are the argument positions of the tuple of constants, variables and nulls of a TGD in $\Sigma$.
    Here: (1) $x$ $\in$ ${\sf const}(\Sigma)$ denotes that the position $i$ contains a constant;
    (2) $x$ $=$ $V$ denotes it contains a variable; and (3)
    $x$ $=$ ${\sf n}$ denotes it contains a labeled null. We say that ``$x$" is the \textit{type}
    of the argument $\overline{\mathbf{i}}^{\,\,x}$.         
    The elements ``$|_i$" act as a kind of ``separator" such that if a
    tuple 
    $$\overline{\mathbf{i_{1}}}^{\,{\sf n}}\,\,|_{i_2}|_{i_3}\ldots 
	    |_{i_j}\,\,\overline{\mathbf{i_{j+1}}}^{\,V}\,\,
	    \overline{\mathbf{i_{j+2}}}^{\,V}\,\,
	    \overline{\mathbf{i_{j+3}}}^{\,V}\,\,|_{i_{j+4}}\ldots|_{i_{2k+1}}$$
    is in $\textsc{argPrm}(K)$, then we view the consecutive series of arguments 
    ``$\overline{\mathbf{i_{j+1}}}^{\,V}\,\,\overline{\mathbf{i_{j+2}}}^{\,V}\,\,
    \overline{\mathbf{i_{j+3}}}^{\,V}$" 
    as one group, and where we view arguments within a group
    as being ``equal." Then by $\textsc{propArgPrm}(K)$, denote the following set of tuples:
    \begin{align}
	    \big\{\,\vect{e}\,\mid\,&\mbox{there exists some }\vect{e'}\in\textsc{argPrm}(K)
	    \mbox{ such that }\vect{e}\subseteq\vect{e'}\nonumber\\
	    &\mbox{and}:\nonumber\\
	    &\mbox{(1) }|\vect{e}|=2K-1;\label{length}\\
	    &\mbox{(2) }\vect{e}[0]\mbox{ and }\vect{e}[|\vect{e}|]\mbox{ is not equal to }``|_k"\nonumber\\
	    	     &\hspace{0.5cm}\,(\mbox{ for }k\in\{1,\ldots,K-1\}\,);\label{ends_not_delimeted}\\
	    &\mbox{(3) If }\vect{e}[i]=|_{j}\,(\mbox{ for }j\in\{1,\ldots,K-1\}\,)\mbox{ then}:\label{set_def_1}\\
	    	    &\hspace{0.5cm}\mbox{(a) }i>0\mbox{ implies }\vect{e}[i-1]=\overline{\vect{k}}^{\,x}\nonumber\\
	    	    &\hspace{0.9cm}\,(\mbox{for }k\in\{1,\ldots,K\}\mbox{ and }x\in\{c,V,{\sf n}\}\,);\nonumber\\
	    	    &\hspace{0.5cm}\mbox{(b) }i<K\mbox{ implies }\vect{e}[i+1]=\overline{\vect{k}}^{\,x}\nonumber\\
	    	    &\hspace{0.9cm}(\mbox{for }k\in\{1,\ldots,K\}\mbox{ and }x\in\{c,V,{\sf n}\}\,);\nonumber
	\end{align}	 
    \begin{align}
	    &\mbox{(4) If }\vect{e}[i]=\overline{\vect{j}}^{\,x}\,
	    (\mbox{ for }j\in\{1,\ldots,K\}\mbox{ and }x\in\{c,V,{\sf n}\}\,)\nonumber\\
	    &\hspace{0.5cm}\mbox{then}:\label{set_def_2}\\
	    &\hspace{0.5cm}\mbox{(a) }i>0\mbox{ and }\vect{e}[i-1]=\overline{\vect{k}}^{\,y}
	    (\mbox{ for }k\in\{1,\ldots,K\}\nonumber\\
	    &\hspace{1.0cm}\mbox{and }y\in\{c,V,{\sf n}\}\,)\mbox{ implies }x=y;\nonumber\\
	    &\hspace{0.5cm}\mbox{(b) }i<K\mbox{ and }\vect{e}[i+1]=\overline{\vect{k}}^{\,y}
	    (\mbox{ for }k\in\{1,\ldots,K\}\nonumber\\
	    &\hspace{1.0cm}\mbox{and }y\in\{c,V,{\sf n}\}\,)\mbox{ implies }x=y;\nonumber\\
	    &\mbox{(5) }\mbox{For each }i\in\{1,\ldots,K\},\mbox{ there exists some}\nonumber\\
	    &\hspace{0.5cm}j\in\{1,\ldots,|\vect{e}|\}\mbox{ s.t. }
	     \vect{e}[j]=\overline{\vect{i}}^{\,x}\mbox{ and }
	    x\in\{c,V,{\sf n}\}\,\big\}.\label{set_def_3}
    \end{align}
    Intuitively the {\em proper argument permutation tuples}, as denoted ``$\textsc{propArgPrm}(K)$, "
    captures the intended meaning of equivalence $\vect{t}_1$ $\sim$ $\vect{t}_2$ assuming
    that $\vect{t}_1$ $\cap$ $\vect{t}_2$ $=$ $\emptyset$. Indeed, we have that
    (\ref{length}) specifies that no arguments are repeated on different groups;    
    (\ref{ends_not_delimeted}) specifies that the ends of the tuple are not delimited by ``$|_k$";  
    (\ref{set_def_1}) specifies
    that only one separator (i.e., the ``$|_i$" element) acts for each group;
    (\ref{set_def_2}) specifies that each group are of the same types; and lastly, 
    (\ref{set_def_3}) specifies that each position $i$ $\in$ $\{1,\ldots,K\}$ is mentioned
    in at least some group. Clearly, we have that 
    $|\textsc{propArgPrm}(K)|$ $\leq$ $|\textsc{argPrm}(K)|$ $\leq$ $(5K\cdot|{\sf const}(\Sigma)|-1)!$.
    
    With a slight abuse of notation, given some element $\vect{e}$ $\in$ $\textsc{propArgPrm}(K)$ 
    and some $K$-length tuple
    $\vect{t}$, we say that $\vect{t}$ is in the equivalence class of $\vect{e}$,
    denoted $\vect{t}\sim\vect{e}$, if for each $i$, $j$ $\in$ $\{1,\ldots,K\}$, we have 
    that $\vect{t}[i]$ $=$ $\vect{t}[j]$ iff $\overline{\vect{i}}^{\,x}$ and 
    $\overline{\vect{j}}^{\,y}$ belongs to the same group in $\vect{e}$. 
    Thus, to extend to the case where $\vect{t}_1$ $\cap$ $\vect{t}_2$ $\neq$ $\emptyset$,
    we define the mapping $f:$ $\textsc{propArgPrm}(K)$ $\longrightarrow$ $\mathbb{N}$
    such that for each $\vect{e}$ $\in$ $\textsc{propArgPrm}(K)$, $\iota(\vect{e})$ denotes
    the size of the following set: 
    \begin{align}
	    S_{\vect{e}}=\big\{(&\vect{t}_1,\vect{t}_2)\,\mid\,\vect{t}_1,\vect{t}_2\in T^{K},\,
	    \vect{t}_1\sim\vect{e},\,
	    \vect{t}_2\sim\vect{e},\,
	    \vect{t}_1\cap\vect{t}_2\neq\emptyset\nonumber\\
	    &\mbox{and }\vect{t}_1\not\sim\vect{t}_2\big\}
	    \label{the_set_S_e}
    \end{align}
    (i.e., $f(\vect{e})$ $=$ $|S_{\vect{e}}|$), and where $T$ is the following set of 
    distinct constants, variables and labeled nulls:
    $\textsc{const}(\Sigma)$ $\cup$ $\{X_1,\ldots,X_{2K}\}$ $\cup$ 
    $\{\mathsf{n}_1,\ldots,\mathsf{n}_{2K}\}$. Clearly, we have that $|S_{\vect{e}}|$ is defined
    since $S_{\vect{e}}$ is finite for each $\vect{e}$ $\in$ $\textsc{propArgPrm}(K)$. 
    Then finally, we define $N$ $=$ $\sum_{\vect{e}\in\textsc{propArgPrm}(K)}$ $f(\vect{e})$.
    
    Now on the contrary, assume that $P$ is a derivation path 
    $(\alpha_1,\rho_1)$, $\ldots$, $(\alpha_{N},\rho_{N})$, $(\alpha_{N+1},\rho_{N+1})$,
    $\ldots$, $(\alpha_{N+k},\rho_{N+k})$ such that $k$ $>$ $0$ 
    (i.e., $|P|$ $>$ $N$) and $(\alpha_i,\rho_i)$
    $\not\sim$ $(\alpha_j,\rho_j)$ for $1$ $\leq$ $i$ $<$ $j$ $\leq$ $N+k$.
    Now consider $(\alpha_{N+i},\rho_{N+i})$ for some $i$ $\in$ $\{1,\ldots,k\}$.
    Then since $N$ $>$ $|\textsc{propArgPrm}(K)|$, we have that for some 
    $\vect{e}$ $\in$ $|\textsc{propArgPrm}(K)|$ and $j$ $\in$ $\{1,\ldots,N\}$,
    $(\alpha_j,\rho_j)$ $\sim_{\vect{e}}$ $(\alpha_{N+i},\rho_{N+i})$, where
    assuming that $\rho_j$ $=$ $\sigma[\vect{XYZ}/\vect{T_1T_2T_3}]$ and
    $\rho_{N+i}$ $=$ $\sigma[\vect{XYZ}/\vect{T'_1T'_2T'_3}]$ (for some $\sigma$ $\in$ $\Sigma$),
    $(\alpha_j,\rho_j)$ $\sim_{\vect{e}}$ $(\alpha_{N+i},\rho_{N+i})$ denotes that
    $\vect{T_1T_2T_3}$ $\sim$ $\vect{e}$ and $\vect{T'_1T'_2T'_3}$ $\sim$ $\vect{e}$. 
    (Note that by the assumption that $(\alpha_j,\rho_j)$ $\not\sim$ $(\alpha_{N+i},\rho_{N+i})$,
    we have that $\vect{T_1T_2T_3}$ $\not\sim$ $\vect{T'_1T'_2T'_3}$ as well.)
    Now there can only be one of the two possibilities, either (1) $\vect{T_1T_2T_3}$ $\cap$  
    $\vect{T'_1T'_2T'_3}$ $=$ $\emptyset$, or (2) $\vect{T_1T_2T_3}$ $\cap$  
    $\vect{T'_1T'_2T'_3}$ $\neq$ $\emptyset$. 
    If we assume the first case (1), then it contradicts the assumption that
    $\vect{T_1T_2T_3}$ $\not\sim$ $\vect{T'_1T'_2T'_3}$ because 
    $(\alpha_j,\rho_j)$ $\sim_{\vect{e}}$ $(\alpha_{N+i},\rho_{N+i})$. On the other hand,
    if we consider the latter case (2), then since we have that 
    $N$ $=$ $\sum_{\vect{e}\in\textsc{propArgPrm}(K)}$ $f(\vect{e})$ with $f(\vect{e})$
    the size of the finite set (\ref{the_set_S_e}), then this will be a contradiction as well. 
\end{proof}

\subsection{Proof of Theorem \ref{c1}}

\begin{theorem-appendix}\ref{c1}.
	\textit{Let $\Sigma$ be a set of TGDs, $D$ a database over schema ${\mathcal R}$, and $q$ a BCQ query
		    $\exists \mathbf{Z} p(\mathbf{c},\mathbf{Z})$.
		    Then $\mathsf{chase}(D,\Sigma)\models q$ iff there exists an instantiation $T(D,\Sigma)$ for some derivation tree $T(\Sigma)$ such that
		    $T(D,\Sigma)\models p(\mathbf{c},\mathbf{n})$, where $\mathbf{n}$ is a tuple of distinct nulls from $\Gamma_N$
		    of the same length as $\mathbf{Z}$.} 			
\end{theorem-appendix}
\begin{proof}
    (``$\Longrightarrow$") We prove this direction by first providing the following lemma.
    \begin{lemma}\label{tree_induction}
        Given an instantiated derivation tree $T(D,\Sigma)$ of $\Sigma$ under a database $D$,
        there exists a homomorphism $\lambda:$ $\mathsf{nodes}(T(D,\Sigma))$ $\longrightarrow$ $\mathsf{chase}^{[N]}(D,\Sigma)$, 
        where $N$ $\leq$ $|\mathsf{nodes}(T(D,\Sigma))|$ $-$ $|\mathsf{leafNodes}(T(D,\Sigma))|$,
        such that the following conditions are satisfied:
        \begin{align}
	        \mbox{1. }&\,\mbox{For each }v\in\mathsf{leafNodes}(T(D,\Sigma))
	        \mbox{ such that }v=(\alpha,\alpha),\,\nonumber\\
	        &\lambda(v)=\alpha\in D;\label{homo_cond_1}
	    \end{align}
	    \begin{align}
	        \mbox{2. }&\,\mbox{For each }v\in\mathsf{nodes}(T(D,\Sigma))
	        \mbox{ such that }v=(\alpha,\rho),\,\nonumber\\
	        &\mathsf{child}(v)=\{v_1,\ldots,v_1\},\,\rho=\sigma\theta\mbox{ for some substitution }\theta,
	         \nonumber\\
	        &\mbox{and }\sigma=\varphi(\vect{X},\vect{Y})
	        \ra\exists\vect{Z}p(\vect{X},\vect{Z})\in\Sigma,\,
	        \mbox{ there exists a}\nonumber\\
	        &\mbox{homorphism }h\mbox{ such that }h(\varphi(\vect{X},\vect{Y}))\subseteq
	        \{\lambda(v_1),\ldots,\lambda(v_n)\}\nonumber\\
	        &\mbox{and extension }h'\mbox{ of }h\REST_{\vect{X}}
	         \mbox{ such that }\lambda(v)=h'(p(\vect{X},\vect{Z}));\label{homo_cond_2}
	    \end{align}
	    \begin{align}
	        \mbox{3. }&\,\mbox{For each }v\in\mathsf{nodes}(T(D,\Sigma))\mbox{ such that }
	        v=(\alpha,\rho)\mbox{ and}\nonumber\\
	        &\alpha=p(\vect{t}),\mbox{ if }\lambda(v)=q(\vect{t'})
	        \mbox{ then we have that }p(\vect{t})\theta=q(\vect{t'})\nonumber\\
	        &\mbox{for some substitution }\theta.\label{homo_cond_3}                                 
        \end{align}
    \end{lemma}   
    \begin{proof}  
        We show the existence of such a homomorphism $\lambda$ by induction on the 
        depth of the tree $T(D,\Sigma)$ starting from the leaf nodes 
        (i.e., the database facts) going up to the root node $(\alpha,\rho)$.         
        So towards
        this purpose, for $i$ $\in$ $\{1,\ldots,\mathsf{depth}(T(D,\Sigma))\}$, denote
        by $T^{i}(D,\Sigma)$ as the {\em forest} made up of the subtrees 
        $T'$ of $T(D,\Sigma)$ that are rooted on some node 
        $(\alpha',\rho')$ $\in$ $\mathsf{nodes}(T(D,\Sigma))$
        such that $\mathsf{depth}(T')$ $=$ $i$. In particular, we note that
        $T^{i}(D,\Sigma)$ will be exactly $T(D,\Sigma)$ when 
        $i$ $=$ $\mathsf{depth}(T(D,\Sigma))$. Lastly, for some node 
        $v$ $\in$ $\mathsf{nodes}(T(D,\Sigma))$, denote by $T_v$ as the 
        subtree of $T(D,\Sigma)$ that is rooted in $v$.
        \begin{description}
            \item[Basis:] When $i$ $=$ $1$, then each 
            $(\alpha,\rho)$ $\in$ $\mathsf{nodes}(T^{1}(D,\Sigma))$
            are such that $\rho$ $=$ $\alpha$ and $\alpha$ $\in$ $D$, i.e.,
            $\alpha$ is database fact. Therefore, we simply define 
            $\lambda:$ $\mathsf{nodes}(T^{1}(D,\Sigma))$ $\longrightarrow$ 
            $\mathsf{chase}(D,\Sigma)$ by setting $\lambda(v)$ $=$ $\alpha$
            $\in$ $D$ $\subseteq$ $\mathsf{chase}(D,\Sigma)$,
            for each $v$ $\in$ $\mathsf{nodes}(T^{1}(D,\Sigma))$. In particular,
            we note that Condition (\ref{homo_cond_3}) above is already satisfied.
            \item[Inductive step:] Assume that there exists a homomorphism 
            $\lambda:$ $\mathsf{nodes}(T^{k}(D,\Sigma))$ $\longrightarrow$ 
            $\mathsf{chase}^{[N]}(D,\Sigma)$, for some $k$ $\geq$ $1$ 
            and $N$ $\leq$ 
            $|\mathsf{nodes}(T^{k}(D,\Sigma))$$\setminus$$\mathsf{leafNodes}(T^{k}(D,\Sigma))|$,
            that satisfies
            Conditions (\ref{homo_cond_1}), (\ref{homo_cond_2}) and (\ref{homo_cond_3}) 
            above.
            
            Now consider a node $v$ $\in$ $\mathsf{nodes}(T^{k+1}(D,\Sigma))$
            $\setminus$ $\mathsf{nodes}(T^{k}(D,\Sigma))$ such that
            $v$ $=$ $(\alpha,\rho)$, $\mathsf{body}(\rho)$ $=$ 
            $\{\alpha_1$, $\ldots$, $\alpha_n\}$, 
            $\rho$ $=$ $\sigma\theta$ and 
            $\sigma$ $=$ $\varphi(\vect{X},\vect{Y})$ $\ra$ $\exists Zp(\vect{X},\vect{Z})$.
            Then by the definition of the instantiated derivation tree 
            $T(D,\Sigma)$, assuming that $\mathsf{child}(v)$ $=$ 
            $\{v_1=(\alpha_1,\rho_1)$, $\ldots$, $v_n=(\alpha_n,\rho_n)\}$, 
            then we have that $\mathsf{null}(T_{v_i})$ $\cap$
            $\mathsf{null}(T_{v_j})$ $=$ $\emptyset$ 
            for $i$, $j$ $\in$ $\{1,\ldots,n\}$ and
            $i$ $\neq$ $j$ (i.e., from Item 3 of Definition \ref{def-tree}). 
            Then further assuming that $\alpha_1$ $=$ $p_1(\vect{t}_1)$,
            $\ldots$, $\alpha_n$ $=$ $p_n(\vect{t}_n)$ and 
            $\lambda(v_1)$ $=$ $q_1(\vect{t'}_1)$, $\ldots$, 
            $\lambda(v_n)$ $=$ $q_n(\vect{t'}_n)$, 
            $p_i(\vect{t}_i)\theta_i$ $=$ $q_i(\vect{t'}_i)$ for some
            substitution $\theta_i$ (ind. hyp.), then we have that
            $\{\lambda(v_1),\ldots,\lambda(v_n)\}$ $\subseteq$ 
            $\mathsf{chase}^{[N]}(D,\Sigma)$ (ind. hyp.). 
            Therefore, from the 
            fact that $\mathsf{null}(T_{v_i})$ $\cap$ $\mathsf{null}(T_{v_j})$ 
            $=$ $\emptyset$ for $i$, $j$ $\in$ $\{1,\ldots,n\}$ and $i$ $\neq$ $j$,
            it follows that we can define a homomorphism $\mu$ by setting                 
            $\mu$ $=$ $\theta_1$ $\cup$ $\ldots$ $\cup$ $\theta_n$,
            such that
            $\mu(\mathsf{body}(\rho))$ $\subseteq$ 
            $\{\lambda(v_1)$, $\ldots$, $\lambda(v_n)\}$ $\subseteq$ 
            $\mathsf{chase}^{[N]}(D,\Sigma)$.  
            In fact, because $\rho$ $=$ $\sigma\theta$,
            then we can ``directly" define a homomorphism $h$ for $\sigma$ by setting
            $h$ $=$ $\mu\circ\theta$ so that  
            $h(\mathsf{body}(\sigma))$ $\subseteq$ $\{\lambda(v_1)$, $\ldots$, $\lambda(v_n)\}$ 
            $\subseteq$ $\mathsf{chase}^{[N]}(D,\Sigma)$. Then from the definition of
            $\mathsf{chase}^{[N]}(D,\Sigma)$, it follows that $\sigma$ will be applicable to 
            $\mathsf{chase}^{[N]}(D,\Sigma)$ under the homomorphism $h$,
            i.e., there exists some chase step $I_i$ $\xrightarrow{\sigma,h}$ $I_{i+1}$ 
            such that $\{\lambda(v_1)$, $\ldots$, $\lambda(v_n)\}$ $\subseteq$ $I_i$,
            for some $0$ $\leq$ $i$ $\leq$ $N$.
            Then based on this fact, there will exists some $q(\vect{t'})$ $\in$
            $\mathsf{chase}^{[N+1]}(D,\Sigma)$ such that for some extension $h'$ of 
            $h\REST_{\vect{X}}$, we have that $q(\vect{t'})$ $=$ 
            $h'(\mathsf{head}(\sigma))$. Therefore, we can define $\lambda$ for the
            node $v$ $\in$ $\mathsf{nodes}(T^{k+1}(D,\Sigma))$
            $\setminus$ $\mathsf{nodes}(T^{k}(D,\Sigma))$ by setting 
            $\lambda(v)$ $=$ $q(\vect{t'})$. In particular, assuming that 
            $\alpha$ $=$ $p(\vect{t})$ (i.e., recall that $v$ $=$ $(\alpha,\rho)$), 
            then we note from the 
            definition of the extension $h'$ of $h\REST_{\vect{X}}$ that                
            $p(\vect{t})\theta$ $=$ $q(\vect{t'})$ for some substitution $\theta$.
            Therefore, it follows that $\lambda$ is a homomorphism that can be 
            extended from $\mathsf{nodes}(T^{k+1}(D,\Sigma))$ to 
            $\mathsf{chase}^{[N+M]}(D,\Sigma)$, where 
            $M$ $=$ $|\mathsf{nodes}(T^{k+1}(D,\Sigma))$ $\setminus$ 
            $\mathsf{nodes}(T^{k}(D,\Sigma))|$ $-$ $|\mathsf{leafNodes}(T(D,\Sigma))|$.                           
        \end{description}
    \end{proof}
    Now, since $T(D,\Sigma)$ $\models$ $p(\vect{t})$, then assuming that
    $\mathsf{root}(T(D,\Sigma))$ $=$ $(\alpha,\rho)$ such that $\alpha$ $=$ $r(\vect{s})$, 
    we have from the definition of ``instantiated tree supportedness" of an atom 
    that $h(p(\vect{t}))$ $=$ $r(\vect{s})$ for 
    some homomorphism $h:$ $\vect{t}$ $\longrightarrow$ $\vect{s}$. Then because we have
    that $\lambda(r(\vect{s}))$ $=$ $q(\vect{t'})$
    for some atom $q(\vect{t'})$ $\in$ $\mathsf{chase}^{[N]}(D,\Sigma)$, 
    where $N$ $\leq$ $|\mathsf{nodes}(T(D,\Sigma))|$ $-$ $|\mathsf{leafNodes}(T(D,\Sigma))|$
    and  
    $\lambda:$ $\mathsf{nodes}(T(D,\Sigma))$ $\longrightarrow$ $\mathsf{chase}(D,\Sigma)$
    the ``bounding number" and  
    homomorphism defined in Lemma \ref{tree_induction}, respectively, then we also have
    from Lemma \ref{tree_induction} that
    $r(\vect{s})\theta$ $=$ $q(\vect{t'})$ for some substitution $\theta$.
    Therefore, with $h'$ $=$ $\theta\circ h$, then we have that 
    $h'(p(\vect{t}))$ $=$ $q(\vect{t'})$ $\in$ $\mathsf{chase}^{[N]}(D,\Sigma)$,
    which implies that $\mathsf{chase}^{[N]}(D,\Sigma)$ $\models$ $p(\vect{t})$.

    (``$\Longleftarrow$") Assume $\mathsf{chase}^{[N]}(D,\Sigma)$ $\models$ $p(\vect{t})$
    for some atom $p(\vect{t})$ and $N$ $\geq$ $1$. Then by the definition of 
    $\mathsf{chase}^{[N]}(D,\Sigma)$ $\models$ $p(\vect{t})$, there exists some atom
    $p(\vect{t'})$ $\in$ $\mathsf{chase}^{[N]}(D,\Sigma)$
    and homomorphism $h:$ $\vect{t'}$ $\longrightarrow$ $\vect{t}$  
    such that $h(p(\vect{t}))$ $=$ $p(\vect{t'})$. 
    Thus, there exists some finite chase sequence 
    $I_{0}$ $\xrightarrow{\sigma_0,h_0}$ $I_{1}$, $\ldots$, 
    $I_{N-1}$ $\xrightarrow{\sigma_N,h_N}$ $I_{N}$
    such that $p(\vect{t'})$ $\in$ $I_{N}$. Let us assume without loss of generality
    that for $i$ $\in$ $\{1,\ldots,N-1\}$, there does not exists another atom
    $p(\vect{t''})$ $\in$ $I_i$ such that $h(p(\vect{t}))$ $=$ $p(\vect{t''})$. Then based
    on the sequences of TGDs $\sigma_i$ and homomorphisms $h_i$ that made $\sigma_i$
    applicable to $I_i$, we can construct an instantiated derivation tree 
    $T(D,\Sigma)$ as follows: 
    \begin{align}
	    \mbox{1. }&\mbox{Let }\mathsf{root}(T(D,\Sigma))=(p(\vect{t'}),\sigma_{N}\theta_{N}),
	    \mbox{ where }\theta_{N}\mbox{ is the}\nonumber
	\end{align}
	\begin{align}
	    &\mbox{corresponding substitution for }h_{N}\mbox{ and its extension }h'_{N};\nonumber\\
	    \mbox{2. }&\mbox{For each atom }\alpha\in
	    \mathsf{chase}^{[N]}(D,\Sigma)
	    \mbox{ either}:\nonumber\\                           
	    &\bullet\,\,\mbox{add a node }v=(\alpha,\alpha)\mbox{ if }\alpha\in D,
	    \mbox{ otherwise}\nonumber\\                 
	    &\bullet\,\,\mbox{add a node }v=(\alpha,\rho)\mbox{ where }
	    \alpha=\mathsf{head}(\rho),\,\rho=\sigma_i\theta_i\nonumber\\
	    &\hspace{0.5cm}\mbox{and }\theta_i\mbox{ the corresponding substitution for }h_i
	     \mbox{ and its}\nonumber\\
	    &\hspace{0.5cm}\mbox{``extension" }h'_i.\nonumber
	 \end{align}
	 \begin{align}                 
	    \mbox{3. }&\mbox{For each node }v\mbox{ of the form }(\alpha,\rho)\mbox{ such that }
	    \rho=\sigma\theta,\nonumber\\
	    &\mbox{for some }\sigma\in\Sigma\mbox{ and substitution }
	    \theta,\mbox{ and }\mathsf{body}(\rho)=\nonumber\\
	    &\{\alpha_1,\ldots,\alpha_n\}\mbox{ then }\mbox{for }i\in\{1,\ldots,n\},\mbox{ add an edge }
	     (v,v_i)\nonumber\\
	    &\mbox{ such that either:}\nonumber
	 \end{align}
	 \begin{align}
	    &\hspace{0.5cm}\bullet\,\,v_i=(\alpha_i,\alpha_i)\mbox{ if }\alpha_i\in D,
	    \mbox{ otherwise}\nonumber\\                 
	    &\hspace{0.5cm}\bullet\,\,v_i=(\alpha_i,\rho_i)\mbox{ such that }
	    \alpha_i=\mathsf{head}(\rho_i),\,\rho_{i}=\sigma_{j}\theta_{j},\,\nonumber\\
	    &\hspace{0.9cm}\theta_{j}\mbox{ the corresponding subtitution }
	     \mbox{ for }h_{j}\nonumber\\
	    &\hspace{0.9cm}\mbox{ (and corresponding extension }h'_{j}\mbox{) and }
	    I_{j}\xrightarrow{\sigma_j,h_j}I_{j+1}\nonumber\\
	    &\hspace{0.9cm}\mbox{is the first chase step that derived }\alpha_i.\nonumber                                   
    \end{align}
    Then it is not too difficult to see that the above construction for
    $T(D,\Sigma)$ is in fact an instantiated derivation tree and where
    $N$ $\leq$ $|\mathsf{nodes}(T(D,\Sigma))|$ $-$ $|\mathsf{leafNodes}(T(D,\Sigma))|$.
    (i.e., recall that $p(\vect{t'})$ $\in$ $I_{N}$ such that 
    $I_{N-1}$ $\xrightarrow{\sigma_N,h_N}$ $I_{N}$ is the first chase step
    that derived $p(\vect{t'})$).
    Therefore,
    because $h(p(\vect{t}))$ $=$ $p(\vect{t'})$ for some homomorphism
    $h:$ $\vect{t}$ $\longrightarrow$ $\vect{t'}$
    \big(i.e., recall that $\mathsf{chase}(D,\Sigma)$ $\models$ $p(\vect{t})$
    and $p(\vect{t'})$ $\in$ $\mathsf{chase}$ such that $h(p(\vect{t}))$ $=$ $p(\vect{t'})$\big)
    and since, assuming that
    $(\alpha,\rho)$ $=$ $\mathsf{root}(T(D,\Sigma))$, we have that 
    $p(\vect{t'})$ $=$ $\alpha$ from the construction of $T(D,\Sigma)$, 
    then we clearly have that $T(D,\Sigma)$ $\models$ $p(\vect{t})$
    through the same ``witnessing" homomorphism $h$.           
\end{proof}

\subsection{Proof of Proposition \ref{pro-loop}}

\begin{proposition-appendix}\ref{pro-loop}.
	\textit{Given a finite set $\Sigma$ of TGDs, $\Sigma$ only has a finite number of 
	        loop patterns under the equivalence relation $\sim$.} 			
\end{proposition-appendix}
\begin{proof}
    From Proposition \ref{th2}, there is a number $N$ such that for any derivation path 
    $P$ of the form (\ref{lb2}), for each $N+1\leq j\leq |P|$, there exists some
    $1\leq i\leq N$ such that $(\alpha_i,\rho_i)$ $\sim$ $(\alpha_j,\rho_j)$.
    Therefore, it follows that one only has to check each derivation path $P$
    if it is a loop pattern.
\end{proof}

\subsection{Proof of Theorem \ref{main}}

\begin{theorem-appendix}\ref{main}.
	\textit{The class of loop restricted TGDs satisfies the BDTDP.}\\\\ 			
\end{theorem-appendix}
Before proving Theorem \ref{main}, we first introduce the notion of
subsumation between two derivation trees.
\begin{definition}[{\bf Derivation tree subsumption}]\label{def-subsumed}
    Let $\Sigma$ be a set of TGDs, and
    $T_1(\Sigma)$ and $T_2(\Sigma)$ be two derivation trees of $\Sigma$.
    Then we say that $T_2(\Sigma)$ \emph{subsumes} $T_1(\Sigma)$ 
    if the following conditions are satisfied: 
    (1) $\mathsf{root}(T_2(\Sigma))$ $=$ $\mathsf{root}(T_1(\Sigma))$; and
    (2) $\mathsf{leafNodes}(T_2(\Sigma))$ $\subset$ $\mathsf{leafNodes}(T_1(\Sigma))$.            
\end{definition} 
\begin{proof} 
    Given a set $\Sigma$ of LR TGDs.   
    Let $\mathcal{T}(\Sigma)$ be the set of all derivation trees of $\Sigma$. We consider
    the set $\mathbb{T}(\Sigma)$ of all derivation trees 
    that are 
    distinct under $\sim$
    and their tree depths are not larger than $N$, where
    $N$ is the integer mentioned in Proposition \ref{th2}\footnote{A 
    complete proof of Proposition \ref{th2} is given in the full version of this paper, in which $N$ is presented.}.
    Then it is clear that $\mathbb{T}(\Sigma)\subseteq \mathcal{T}(\Sigma)$ and is a finite
    set.       %
	Now we can prove the following important result:
	
	\vspace*{-.05in}
	
	\begin{lemma}
	\label{l1}
	Let $T(\Sigma)\in \mathcal{T}(\Sigma)$ (note $\Sigma$ is LR). Then for every database $D$ and every atom
	$p(\mathbf{t})$, 
	$T(D,\Sigma)\models p(\mathbf{t})$ iff there exists some
	$T'(\Sigma)\in \mathbb{T}(\Sigma)$ such that $T'(D,\Sigma)\models p(\mathbf{t})$.
	\end{lemma}
	
	\vspace*{-.05in}
	
	Then the theorem follows directly from Lemma \ref{l1}, by setting the 
	bound to be the maximal depth of trees in $\mathbb{T}(\Sigma)$. 
	The key idea of proving Lemma \ref{l1} is 
	based on the fact that for any tree $T(\Sigma)$
	in $\mathcal{T}(\Sigma)$, there is a corresponding 
	tree $T'(\Sigma)$ in $\mathbb{T}(\Sigma)$ which can replace 
	$T(\Sigma)$ without affecting $T(\Sigma)$'s derivations. 
	Without loss of generality, consider a tree $T(\Sigma)$ in $\mathcal{T}(\Sigma)$, where
	a path $P$ in $T(\Sigma)$ is longer than $N$.
	Then from Proposition \ref{th2}, there must exist a loop pattern
	$L=(w_i, \cdots, w_j)$ in path $P$, such that the depth of node $w_i$ 
	is within the bound $N$, and the depth of node $w_j$ 
	is beyond $N$.
	Since $w_i\sim w_j$ and $L$ is loop restricted and from the conditions presented in Definition \ref{def-restricted},
	then using similar ideas from \cite{ChenLZZ11}, we can prove that 
	the subtree underneath the node $body_{b}(\rho_i)$ in $T(\Sigma)$ can be replaced by
	the subtree underneath the node $body_{b}(\rho_j)$.  That is, the 
	loop pattern fragment $(w_i,\cdots, w_j)$ in path $P$ is replaced by
	a new node $w_i^{*}: (\alpha_i,[body_{b}(\rho_j), body_{h}(\rho_i)\ra \alpha_i])$. 
	According to Proposition \ref{pro-loop}, $\Sigma$ only
	has a finite number of loop patterns under $\sim$. So by
	doing this {\em folding} for all
	paths in $T(\Sigma)$, we eventually transform $T(\Sigma)$ into a $T'(\Sigma)$ whose depth is bounded by $N$, that is, 
	$T'(\Sigma)\in \mathbb{T}(\Sigma)$.
\comment{  
    %
    \begin{lemma}\label{loop_restricted_can_be_folded}
        Given a database $D$ and an atom $p(\vect{t})$, if for some derivation tree
        $T(\Sigma)$ $\in$ ${\cal T}(\Sigma)$ we have that $T(D,\Sigma)$ $\models$
        $p(\vect{t})$, then there exists some derivation tree 
        $T'(\Sigma)$ $\in$ $\mathbb{T}(\Sigma)$ such that
        $T'(D,\Sigma)$ $\models$ $p(\vect{t})$ and $T'(\Sigma)$ subsumes $T(\Sigma)$\footnote{Due to a space limit,
we ommit the formal definition of derivation tree subsumption here, but its intuitive meaning has been given
 in Example \ref{ex4.3.2}.}.
    \end{lemma}
    %

Now consider
        a derivation tree $T(\Sigma)$ $\in$ ${\cal T}(\Sigma)$ and a derivation
        path $P$ of $T(\Sigma)$. We assume that there is a
        loop pattern $L$ $\in$ ${\cal L}$, which satisfies the conditions of Definition \ref{def-restricted},
        mentioned in $P$. Without loss of generality,
let $L=(w_i, \cdots, w_j)$, where $w_i$: $(\alpha_i, [body_{b}(\rho_i),body_{h}(\rho_i)\ra \alpha_i])$,
$w_j$: $(\alpha_i, [body_{b}(\rho_j),body_{h}(\rho_j)\ra \alpha_j])$.
Since $w_i\sim w_j$ and $L$ is loop restricted, we can show that 
the subtree underneath the node $body_{b}(\rho_i)$ in $T(\Sigma)$ can be replaced by
the subtree with root $body_{b}(\rho_j)\theta$, where $\theta$ is a substitution for $w_i = w_j\theta$. That is, the 
loop pattern fragment $(w_i,\cdots, w_j)$ in path $P$ is collapsed and replaced by
a new single node $w_i^{*}: (\alpha_i,[body_{b}(\rho_j)\theta, body_{h}(\rho_i)\ra \alpha_i])$.
We continue this process 
        until all the loop patterns in all paths in $T(\Sigma)$ been processed. By the end,
        we eventually transform $T(\Sigma)$ into a new derivation tree $T'(\Sigma)$ without containing any loop
        patterns. This implies that 
        for any input database $D$, the depth of $T'(D,\Sigma)$ is the same as the depth of $T'(\Sigma)$. 
    So 

     From the above result, it concludes that 
    the depth of any derivation tree $T(\Sigma)$ $\in$ ${\cal T}(\Sigma)$ can be bounded
    by the maximum depth of a derivation tree $T'(\Sigma)$ $\in$ $\mathbb{T}(\Sigma)$.
    } 
\end{proof}

\subsection{Proof of Theorem \ref{th4}}

\begin{theorem-appendix}\ref{th4}.
	\textit{The class of LR TGDs satisfies BDDP.}			
\end{theorem-appendix}
\begin{proof}
    If a set of TGDs $\Sigma$ satisfies the BDTDP property, then there exists some number
    $K$ such that for every database $D$, we have that
    $D$ $\cup$ $\Sigma$ $\models$ $\exists\vect{Z}p(\vect{c},\vect{Z})$
    iff there exists some atom $p(\vect{c},\vect{n})$, 
    where $\vect{c}$ $\in$ $\Gamma^{|\vect{c}|}$ and
    $\vect{n}$ $\in$ $\Gamma_N^{|\vect{n}|}$, such that
    $T(D,\Sigma)$ $\models$ $p(\vect{c},\vect{n})$ and where
    $\mathsf{depth}(T(D,\Sigma))$ $\leq$ $K$. Then since by Theorem
    \ref{c1}, we have that $\mathsf{chase}^{|N|}(D,\Sigma)$ $\models$ $p(\vect{c},\vect{n})$,
    where $N$ is bounded by $|\mathsf{nodes}(T(D,\Sigma))|$,
    then it follows that $\Sigma$ also satisfies the BDDP property. 
\end{proof}

\subsection{Proof of Theorem \ref{comb_complexity}}

\begin{theorem-appendix}\ref{comb_complexity}.
	\textit{Consider the BCQA problem for a given set of LR TDGs.  Its data complexity is in $\textsc{AC}^{0}$, 
	        and its combined complexity is \textsc{ExpTime}-complete.}			
\end{theorem-appendix}
\begin{proof} We only prove here the \textsc{ExpTime}-complete combined complexity since the 
    $\textsc{AC}^{0}$ data complexity directly follows from Theorem \ref{th4} and 
    from the fact that first-order rewritable implies $\textsc{AC}^{0}$ in data complexity
    \cite{CaliGP12}.
     
    (\textit{Membership}) This follows from the construction of the finite set of 
        trees $\mathbb{T}(\Sigma)$ as described in the proof of Theorem \ref{main} because
        for a given database $D$, a set of TGDs $\Sigma$ and an atom $p(\vect{c},\vect{n})$, 
        we have that checking if $T'(D,\Sigma)$ $\models$ $p(\vect{c},\vect{n})$ for all the 
        (finite) trees $T'(\Sigma)$ $\in$ $\mathbb{T}(\Sigma)$ is exponential in time to the sizes of 
        $D$ and $\Sigma$.
         
    (\textit{Hardness}) Using similar ideas from \cite{CaliGP12},
        we reduce the \emph{fact inference problem} in Datalog for the domain
        $\{\overline{0},\overline{1}\}$ into a ``corresponding" query answering problem under LR TGDs.
        The main difficulty in our case is that our resulting reduction should be LR TGDs. 
        Fortunately, the ``loop separable pattern" as described in Definition \ref{def-restricted}
        provides the key to our reduction. In fact, the main intuition behind our reduction
        is that although we also reduce the fact inference problem in Datalog into CQA under TGDs,
        our resulting TGDs has the property that all the variables mentioned in the rules (i.e., TGD)
        are also mentioned in each atom within the rules. Thus, as we will show in Lemma 
        \ref{is_LR_type_II}, it follows that the resulting TGDs are LR.
         
        Thus, let $\LA{\cal R},D,\Pi,p(\vect{t})\RA$ be an instance of the fact inference problem such
        that: (1) ${\cal R}$ is the relational schema (signature); (2) $D$ is a database under schema
        ${\cal R}$ and of domain $\{\overline{0},\overline{1}\}$; (3) $\Pi$ is a Datalog program of 
        schema ${\cal R}$; and 
        lastly, (4) $p(\vect{t})$ is a ground atom (i.e., the ``fact") under the schema ${\cal R}$
        and domain $\{\overline{0},\overline{1}\}$ 
        (i.e., $\vect{t}$ $\in$ $\{\overline{0},\overline{1}\}^{|\vect{c}|}$).  
        Before proceeding with our actual reduction, it will first be convenient to introduce
        the following notions. For a given Datalog program $\Pi$ and some rule 
        $\pi$ $\in$ $\Pi$ in it, denote by $\mathsf{art}(\pi)$ as the arity of $\pi$, i.e., the 
        cardinality of the set comprising of all the distinct variables mentioned in $\pi$. 
        Then we denote by 
        $\mathsf{maxArt}(\Pi)$ as the number 
        $\mathsf{max}\Big(\big\{\,\mathsf{art}(\pi)\,\mid\,\pi\in\Pi\,\big\}\Big)$, i.e.,
        the maximum/largest arity considering all the rules in $\Pi$.
                       
        We now construct an instance of a query answering problem under LR TGDs        
        $\LA{\cal R'},D',\Sigma,p'(\vect{t'})\RA$ as follows:        
        \begin{enumerate}
            \item For each $r$ $\in$ ${\cal R}$,
                  add a relation $r'$ to the signature ${\cal R'}$ of $\Sigma$ such that
                  $\mathsf{art}(r')$ $=$ $\mathsf{art}(r)$ $+$ $\mathsf{maxArt}(\Pi)$ $+$ $1$.
            \item If $p(\vect{t})$ $=$ $r(c_1,\ldots,c_k)$, then set 
                  $p'(\vect{t'})$ $=$ $r'(c_1,\ldots,c_k,\overline{\vect{0}},\overline{0})$, where 
                  $\overline{\vect{0}}$ $=$ $\{\overline{0}\}^{\mathsf{maxArt}(\Pi)}$;
            \item For each $r(c_1,\ldots,c_k)$ $\in$ $D$, add the atom 
                  $r'(c_1,\ldots,c_k,\overline{\vect{0}},\overline{0})$, where 
                  $|\overline{\vect{0}}|$ $=$ $\mathsf{maxArt}(\Pi)$, into $D'$;                                                               
            \item For each rule 
                  $\pi(\vect{X})$ $=$ $r_0(\vect{Y}_0)$ $\la$ $r_1(\vect{Y}_1)$, $\ldots$, $r_m(\vect{Y}_m)$ 
                  $\in$ $\Pi$ such that $\vect{X}$ is the tuple of distinct variables mentioned in
                  $\pi$ and each $r_i(\vect{Y}_i)$ ($i$ $\in$ $\{0,\ldots,m\}$) are atoms, 
                  add to $\Sigma$ the following two TGD rules:
                  \begin{align}
                      &\forall\vect{X}\vect{X'}
                      (r'_1(\vect{Y}_1,\vect{X}\vect{X'},\overline{0}),\ldots,
                          r'_m(\vect{Y}_m,\vect{X}\vect{X'},\overline{0})\nonumber\\ 
                      &\hspace{3.7cm}\ra r'_0(\vect{Y}_0,\vect{X}\vect{X'},\overline{0})), \label{TGD_rule_transform_1}
                  \end{align}
                  where $\vect{X'}$ is a tuple of distinct variables such that 
                  $|\vect{X}\vect{X'}|$ $=$ $\mathsf{maxArt}(\Pi)$, 
                  $|\vect{X}\vect{X'}\overline{0}|$ $=$ $\mathsf{maxArt}(\Pi)$ $+$ $1$ and 
                  $\mathsf{art}(r'_i)$ $=$ $\mathsf{art}(r_i)$ $+$ $\mathsf{maxArt}(\Pi)$ $+$ $1$
                  (for $i$ $\in$ $\{0,\ldots,m\}$).
                  Then, since $\{\vect{X}\}$ $=$ $\bigcup_{i\in\{0,\ldots,m\}}\{\vect{Y}_i\}$,
                  it is not difficult to see that for each $i$ $\in$ $\{0,\ldots,m\}$, we have 
                  that the following holds: $\{\vect{Y}_i\vect{X}\vect{X'}\}$ $=$
                  $\bigcup_{j\in\{1,\ldots,m\}}\{\vect{Y}_j\vect{X}\vect{X'}\}$ $=$
                  $\vect{X}\vect{X'}$. Intuitively, this aforementioned property
                  allows the rules in $\Sigma$ to have loop patterns that are of the type 
                  ``loop separable pattern II" (see Definition \ref{def-restricted}). 
                  Indeed, in regards to the ``loop separable pattern II" in Definition 
                  \ref{def-restricted}, we have that
                  \begin{itemize}
                      \item $\mathsf{head}(\sigma_{\pi})$ $=$ 
                            $\{r'_0(\vect{Y}_0,\vect{X}\vect{X'},\overline{0})\}$;
                      \item $\mathsf{body_h}(\sigma_{\pi})$ $=$ $\emptyset$;
                      \item $\mathsf{body_b}(\sigma_{\pi})$\\
                            \hspace*{1cm}$=$ $\{r'_1(\vect{Y}_1,\vect{X}\vect{X'},\overline{0})$,$\ldots$,
                            $r'_m(\vect{Y}_m,\vect{X}\vect{X'},\overline{0})\}$.
                    \end{itemize}
            \item For each relation $r'$ $\in$ ${\cal R'}$, for each $i$ $\in$ 
                  $\{1,\ldots,\mathsf{maxArt}(\Pi)\}$ and $c_1$, $c_2$ $\in$ 
                  $\{\overline{0},\overline{1}\}$, add the following rules to $\Sigma$:
                  \begin{align}
                      r'(\vect{X},&Y_1,\ldots,Y_{i-1},c_1,Y_{i+1},\ldots,Y_k,Y_i)\nonumber\\ 
                      &\ra r'(\vect{X},Y_1,\ldots,Y_{i-1},Y_i,Y_{i+1},\ldots,Y_k,c_2),\nonumber\\
                      r'(\vect{X},&Y_1,\ldots,Y_{i-1},Y_i,Y_{i+1},\ldots,Y_k,c_1)\nonumber\\ 
                      &\ra r'(\vect{X},Y_1,\ldots,Y_{i-1},c_2,Y_{i+1},\ldots,Y_k,Y_i),
                      \label{make_sure_rule_1}\\\nonumber\\
                      r'(\vect{X},&Y_1,\ldots,Y_{i-1},Y_i,Y_{i+1},\ldots,Y_k,c_1)\nonumber\\ 
                      &\ra r'(\vect{X},Y_1,\ldots,Y_{i-1},Y_i,Y_{i+1},\ldots,Y_k,c_2),\nonumber\\
                      r'(\vect{X},&Y_1,\ldots,Y_{i-1},c_1,Y_{i+1},\ldots,Y_k,Y_i)\nonumber\\
                      &\ra r'(\vect{X},Y_1,\ldots,Y_{i-1},c_2,Y_{i+1},\ldots,Y_k,Y_i),\label{make_sure_rule_2}                      
                  \end{align}
                  where: 
                  (1) For the $r$ $\in$ ${\cal R}$ such that $r'$ is the corresponding
                  $r'$ $\in$ ${\cal R'}$, we have that $\vect{X}$ is a tuple of distinct
                  variables such that $|\vect{X}|$ $=$ $\mathsf{art}(r)$; 
                  (2) $Y_1\ldots Y_{i-1}Y_iY_{i+1}\ldots Y_k$, where $k$ $=$ $\mathsf{maxArt}(\Pi)$, 
                  is a tuple of distinct variables. Intuitively speaking, the rules in 
                  (\ref{make_sure_rule_1}) and (\ref{make_sure_rule_2}) ensures that
                  if some ground atom $r(\vect{c})$ is derived under $D\cup\Pi$, then
                  for any tuple $\vect{d}$ $\in$ 
                  $\{\overline{0},\overline{1}\}^{\mathsf{maxArt}(\Pi)+1}$, we have that
                  $r'(\vect{c},\vect{d})$ will be derived under $D'\cup\Sigma$ as well.                                                     
        \end{enumerate}

        \begin{lemma}\label{is_LR_type_II}
            Let $\Sigma$ be the set of TGDs as specified via (\ref{TGD_rule_transform_1}),
            (\ref{make_sure_rule_1}) and (\ref{make_sure_rule_2})
            above. Then we have that $\Sigma$ is a LR set of TGDs. 
        \end{lemma}
        \begin{proof}
            More precisely, we show that each loop pattern of $\Sigma$ are of the loop separable 
            pattern of Definition \ref{def-restricted}. Indeed, let $P$ $=$ 
            $(\LA\alpha_1,\rho_1\RA$, $\ldots$, $\LA\alpha_n,\rho_n\RA)$ be a loop pattern of
            $\Sigma$. We will show by induction from $k$ $=$ $1$ to $k$ $=$ $n$ that the 
            following property holds: (1) 
            $\mathsf{var}(\alpha_{k})$
            $=$ $\bigcup_{i\in\{1,\ldots,k\}}\mathsf{var}(\alpha_i)$; and (2)
            for each $i$ $\in$ $\{1,\ldots,k\}$, we have that 
            $\mathsf{var}(\{\alpha_i\}\cup\mathsf{body_h}(\rho_i))$ $\cap$
            $\mathsf{var}(\mathsf{body_b}(\rho_i))$ $=$ 
            $\bigcap_{i\in\{1,\ldots,k\}}\mathsf{var}(\alpha_i)$ such that
            $\mathsf{body_h}(\rho_i)$ $=$ $\emptyset$
            in the case that
            $\rho_i$ $=$ $\sigma\theta$, where $\sigma$ is from (\ref{TGD_rule_transform_1}),
            (\ref{make_sure_rule_1}) and (\ref{make_sure_rule_2}), respectively.
            \begin{description}
                \item[Basis:] Let us assume without loss of generality that $\rho_1$ $=$ $\sigma_1\theta_1$.
                     Then from the construction of $\Sigma$, we have that $\sigma_1$ is 
                     a rule of either forms (\ref{TGD_rule_transform_1}), (\ref{make_sure_rule_1})
                     or (\ref{make_sure_rule_2}).
                     Then from the fact that each atoms mentioned in those rules mentions
                     all the variables within the rules, we have that the desired property follows.
                \item[Inductive step:] Assume that the property holds for $i$ $\in$ $\{1,\ldots,k\}$.
                          
                     Now consider the pair $\LA\alpha_{k+1},\rho_{k+1}\RA$ and assume that $\rho_{k+1}$ 
                     $=$ $\sigma\theta$. Then we have from the definition of loop patterns that 
                     $\alpha_{k+1}$ $\in$ $\mathsf{body_b}(\rho_k)$. Then assuming that
                     $\rho_k$ $=$ $\sigma'\theta'$, then since $\sigma'$ is either of the forms (\ref{TGD_rule_transform_1}), (\ref{make_sure_rule_1})
                     or (\ref{make_sure_rule_2}), we have from the construction of these rules that $\mathsf{var}(\alpha_k)$ $=$ $\mathsf{var}(\alpha_{k+1})$ since
                     $\alpha_{k+1}$ $\in$ $\mathsf{body_b}(\rho_k)$. Therefore, since we have that 
                     $\mathsf{var}(\alpha_{k})$
                     $=$  
                     $\bigcup_{i\in\{1,\ldots,k\}}\mathsf{var}(\alpha_i)$ (ind. hyp.), 
                     then we have that
                     $\mathsf{var}(\alpha_{k+1})$
                     $=$  
                     $\bigcup_{i\in\{1,\ldots,k+1\}}\mathsf{var}(\alpha_i)$ holds as well. 
                     Moreover, since we have from the construction of the rules  
                     (\ref{TGD_rule_transform_1}), (\ref{make_sure_rule_1}) and (\ref{make_sure_rule_2}) 
                     that $\mathsf{var}(\mathsf{body_h}(\rho_{k+1}))$ $=$ $\mathsf{var}(\alpha_{k+1})$, 
                     then it follows that 
                     $\mathsf{var}(\{\alpha_{k+1}\}\cup\mathsf{body_h}(\rho_{k+1}))$ $\cap$
                     $\mathsf{var}(\mathsf{body_b}(\rho_{k+1}))$ $=$ 
                     $\bigcap_{i\in\{1,\ldots,k+1\}}\mathsf{var}(\alpha_i)$ $=$ $\mathsf{var}(\alpha_{k+1})$.                            
             \end{description}                        
        \end{proof}
        Thus, we have from Lemma \ref{is_LR_type_II} that the set of TGDs $\Sigma$ as produced
        via rules of the forms (\ref{TGD_rule_transform_1}), (\ref{make_sure_rule_1}) and 
        (\ref{make_sure_rule_2}) is indeed LR. 
        
        \begin{lemma}\label{in_fact_a_reducation}
            Given an instance $\LA{\cal R},D,\Pi,p(\vect{t})\RA$ of the fact inference
            problem for Datalog on the domain $\{\overline{0},\overline{1}\}$ and 
            the corresponding CQA problem for TGDs $\LA{\cal R'},D',\Sigma,p'(\vect{t'})\RA$
            constructed as above, we have that $D$ $\cup$ $\Pi$ $\models$ $p(\vect{t})$
            iff $D'$ $\cup$ $\Sigma$ $\models$ $p'(\vect{t'})$.  
        \end{lemma}
        For the following proof, we assume for convenience that the Datalog program $\Pi$
        can also be viewed as TGD rules in the natural way.
        \begin{proof} 
            (``$\Longrightarrow$") We show by induction for $k$ $\geq$ $0$ 
            that $r(\vect{c})$ $\in$ $\mathsf{chase}^{[k]}(D,\Pi)$ implies
            there exists some $n$ $\geq$ $0$ such that 
            $\big\{r'(\vect{c},\vect{c'})$ $\mid$ $\vect{c'}$ $\in$ 
            $\{\overline{0},\overline{1}\}^{\mathsf{maxArt}(\Pi)+1}\big\}$ 
            $\subseteq$ $\mathsf{chase}^{[n]}(D',\Sigma)$
            \begin{description}
                \item[Basis:] Let $r(\vect{c})$ $\in$ $\mathsf{chase}^{[0]}(D,\Pi)$. Then
                              since $\mathsf{chase}^{[0]}(D,\Pi)$ $=$ $D$, it follows that
                              because we also have that $D'$ $=$ 
                              $\big\{r'(\vect{c},\vect{\overline{0}},\overline{0})$ $\mid$ 
                              $r(\vect{c})\in D$ $\mbox{and}$ $|\vect{\overline{0}}$ $\overline{0}|$ 
                              $=$ $\mathsf{maxArt}(\Pi)$ $+$ $1\,\big\}$
                              \big(from the definition of $D'$\big),
                              then it follows from the rules (\ref{make_sure_rule_1}) and 
                              (\ref{make_sure_rule_2}) that there will exists some sequence of chase
                              steps $I'_0\xrightarrow{\sigma_0,\,h'_0}I'_1$, $\ldots$, $I'_{n-1}\xrightarrow{\sigma_{n-1},\,h'_{n-1}}I'_{n}$ such that:\\ 
                              (1) $I'_0$ $=$ $D'$; \\
                              (2) $\sigma_i$ $\in$ $\Sigma$ \big(for $i$ $\in$ $\{0,\ldots,n-1\}$\big);\\
                              (3) $I'_{n}$ $=$ $\mathsf{chase}^{[n]}(D',\Sigma)$; and\\
                              (4) $\big\{r'(\vect{c},\vect{c'})$ $\mid$ $\vect{c'}$ $\in$ 
                              $\{\overline{0},\overline{1}\}^{\mathsf{maxArt}(\Pi)+1}$ $\big\}$ 
                              $\subseteq$ $\mathsf{chase}^{[n]}(D',\Sigma)$.
                \item[Inductive step:] Assume that for $i$ $\in$ $\{0,\ldots,k\}$, we have that
                              $r(\vect{c})$ $\in$ $\mathsf{chase}^{[i]}(D,\Pi)$ implies
                              there exists some $n$ $\geq$ $0$ such that 
                              $\big\{r'(\vect{c},\vect{c'})$ $\mid$ $\vect{c'}$ $\in$ 
                              $\{\overline{0},\overline{1}\}^{\mathsf{maxArt}(\Pi)+1}$ 
                              $\big\}$ 
                              $\subseteq$ $\mathsf{chase}^{[n]}(D',\Sigma)$.
                              
                              Now let $r(\vect{c})$ $\in$ $\mathsf{chase}^{[k+1]}(D,\Pi)$
                              $\setminus$ $\mathsf{chase}^{[k]}(D,\Pi)$.
                              Then we have a sequence of chase steps  
                              $I_0\xrightarrow{\pi_0,\,h_0}I_1$, $\ldots$,
                              $I_{k-1}\xrightarrow{\pi_{k-1},\,h_{k-1}}I_{k}$, 
                              $I_{k}\xrightarrow{\pi_{k},\,h_{k}}I_{k+1}$ such that:\\
                              (1) $I_0$ $=$ $D$;\\ 
                              (2) $\pi_i$ $\in$ $\Pi$ \big(for $i$ $\in$ $\{0,\ldots,k\}$\big);\\
                              (3) $I_{k+1}$ $=$ $\mathsf{chase}^{[k+1]}(D,\Pi)$;\\
                              (4) $r(\vect{c})$ $\in$ $I_{k+1}\setminus I_{k}$.\\
                              Then for the case steps $I_0\xrightarrow{\pi_0,\,h_0}I_1$, $\ldots$, $I_{k-1}\xrightarrow{\pi_{k-1},\,h_{k-1}}I_{k}$
                              \big(i.e., the aforementioned chase steps but not including $k+1$\big), 
                              we have from the ind. hyp. that there exists some 
                              corresponding chase steps 
                              \begin{align}
                                  I'_0\xrightarrow{\sigma_0,\,h'_0}I'_1, 
                                  \ldots, I'_{n-1}\xrightarrow{\sigma_{n-1},\,h'_{n-1}}I'_{n},
                                  \label{chase_seq_prime_1}    
                              \end{align}
                              such that:\\
                              (1) $I'_0$ $=$ $D'$;\\
                              (2) $\sigma_i$ $\in$ $\Sigma$ \big(for $i$ $\in$ $\{0,\ldots,n-1\}$\big);\\
                              (3) $I'_{n}$ $=$ $\mathsf{chase}^{[n]}(D',\Sigma)$;\\
                              (4) $\big\{p'(\vect{d},\vect{d'})$ $\mid$ $\vect{d'}$ $\in$ 
                              $\{\overline{0},\overline{1}\}^{\mathsf{maxArt}(\Pi)+1}$ $\mbox{and}$ 
                              $p(\vect{d})$ $\in$ $\mathsf{chase}^{[k]}(D,\Pi)\big\}$ 
                              $\subseteq$ $\mathsf{chase}^{[n]}(D',\Sigma)$. Then given that
                              $r(\vect{c})$ $\in$ $\mathsf{chase}^{[k+1]}(D,\Pi)$, then it 
                              follows from the rules (\ref{TGD_rule_transform_1}), 
                              (\ref{make_sure_rule_1}) and (\ref{make_sure_rule_2}) of 
                              $\Sigma$ that we can extend the chase steps 
                              (\ref{chase_seq_prime_1}) into the chase steps:
                              \begin{align}
                                  I'_0\xrightarrow{\sigma_0,\,h'_0}&I'_1, 
                                  \ldots,I'_{n-1}\xrightarrow{\sigma_{n-1},\,h'_{n-1}}I'_{n},
                                  \ldots,\nonumber\\ 
                                  &\ldots,I'_{n+(m-1)}\xrightarrow{\sigma_{n+(m-1)},\,h'_{n+(m-1)}}I'_{n+m},
                                  \label{chase_seq_prime_2}    
                              \end{align}
                              for some $m$ $\geq$ $1$ such that
                              $\big\{r'(\vect{c},\vect{c'})$ $\mid$ $\vect{c'}$ $\in$ 
                              $\{\overline{0},\overline{1}\}^{\mathsf{maxArt}(\Pi)+1}$ 
                              $\big\}$ 
                              $\subseteq$ $\mathsf{chase}^{[n+m]}(D',\Sigma)$ $=$ $I'_{n+m}$.                             
            \end{description}
            (``$\Longleftarrow$") For this direction, we show by induction for $k$ $\geq$ $0$ 
            that $r'(\vect{c},\vect{c'})$ $\in$ $\mathsf{chase}^{[k]}(D',\Sigma)$ implies 
            there exists some $n$ $\geq$ $0$ such that $r(\vect{c})$ $\in$
            $\mathsf{chase}^{[n]}(D,\Pi)$.
            \begin{description}
                \item[Basis:] Let $r'(\vect{c},\vect{c'})$ $\in$ $\mathsf{chase}^{[0]}(D',\Sigma)$.
                              Then since $\mathsf{chase}^{[0]}(D',\Sigma)$ $=$ $D'$ and 
                              because from the definition of $D'$ we have that
                              $D'$ $=$ 
                              $\big\{r'(\vect{c},\vect{\overline{0}},\overline{0})$ $\mid$ 
                              $r(\vect{c})\in D$ $\mbox{and}$ $|\vect{\overline{0}}$ $\overline{0}|$ 
                              $=$ $\mathsf{maxArt}(\Pi)$ $+$ $1\,\big\}$,
                              then we clearly have that $r(\vect{c})$ $\in$
                              $\mathsf{chase}^{[0]}(D,\Pi)$ $=$ $D$.
                \item[Inductive step:] Assume that for $i$ $\in$ $\{0,\ldots,k\}$ we have that
                              $r'(\vect{c},\vect{c'})$ $\in$ $\mathsf{chase}^{[i]}(D',\Sigma)$ implies 
                              there exists some $n$ $\geq$ $0$ such that $r(\vect{c})$ $\in$
                              $\mathsf{chase}^{[n]}(D,\Pi)$.
                              
                              Now let 
                              $r'(\vect{c},\vect{c'})$ $\in$ $\mathsf{chase}^{[k+1]}(D',\Sigma)$
                              $\setminus$
                              $r'(\vect{c},\vect{c'})$ $\in$ $\mathsf{chase}^{[k]}(D',\Sigma)$.
                              Then from the inductive hypothesis, we have that for some $n$
                              $\geq$ $0$, $\{p(\vect{d})$ $\mid$ 
                              $\{p'(\vect{d},\vect{d'})$ $\in$ $\mathsf{chase}^{[k]}(D',\Sigma)\}$
                              $\subseteq$ $\mathsf{chase}^{[n]}(D,\Pi)$. Then it follows from the
                              rules in $\Pi$ \big(i.e., from where the rules (\ref{TGD_rule_transform_1}) 
                              of $\Sigma$ are derived from $\Pi$\big) that we can construct
                              a chase sequence
                              \begin{align}
                                 I_0\xrightarrow{\pi_0,\,h_0}&I_1, 
                                 \ldots, I_{n-1}\xrightarrow{\pi_{n-1},\,h_{n-1}}I_{n},
                                 \ldots,\nonumber\\ 
                                 &\ldots,I_{n+(m-1)}\xrightarrow{\pi_{n+(m-1)},\,h_{n+(m-1)}}I'_{n+m}
                                 \label{chase_seq_1}    
                              \end{align}
                              such that $r(\vect{c})$ $\in$ $I'_{n+m}$, and where the subsequence
                              \begin{align}
                                  I_0\xrightarrow{\pi_0,\,h_0}I_1, 
                                  \ldots, I_{n-1}\xrightarrow{\pi_{n-1},\,h_{n-1}}I_{n}                                  
                                  \nonumber   
                              \end{align}
                              of (\ref{chase_seq_1}) is the sequence corresponding to 
                              $\mathsf{chase}^{[n]}(D,\Pi)$, i.e.,
                              $I_{n}$ $=$ $\mathsf{chase}^{[n]}(D,\Pi)$.
            \end{description}
        \end{proof}
        This completes the proof of Lemma \ref{in_fact_a_reducation}.
\end{proof}

\subsection{Proof of Theorem \ref{membership-complexity}}

\begin{theorem-appendix}\ref{membership-complexity}.
	\textit{Deciding whether a set of TGDs is loop restricted is \textsc{Pspace}-complete.}			
\end{theorem-appendix}
\begin{proof}
    (\textit{Membership}) To show membership, we provide a non-deterministic PSPACE 
    algorithm called $\SC{notLR}$, as we will described in Algorithm \ref{non-det-LR}. 
    Therefore, since the complexity class PSPACE is closed under nondeterminism
    and complementation
    (i.e., PSPACE, NPSPACE and coPSPACE and are all equivalent complexity classes) then the result follows.
    %
    Firstly, for the following Algorithm \ref{non-det-LR}, we assume without loss of generality that 
    all the existentially quantified variables in each head of a TGD in $\Sigma$ has already
    been eliminated via appropriate substitutions of labeled nulls from $\Gamma_N$. Thus, we further
    assume that $\sigma_1$, $\sigma_2$ $\in$ $\Sigma$ and $\sigma_1$ $\neq$ $\sigma_2$
    implies $\mathsf{varNull}(\sigma_1)$ $\cap$ $\mathsf{varNull}(\sigma_2)$ $=$ $\emptyset$, i.e., 
    all the universally quantified variables and labeled nulls in each pair of unique TGDs are 
    pairwise disjoint.

    Now we further define some necessary notions.
    Given an atom $\alpha$, a TGD $\sigma$ such that $\mathsf{varNull}(\alpha)$ $\cap$ 
    $\mathsf{varNull}(\sigma)$ $=$ $\emptyset$
    and some atom $\beta$ $\in$ $\mathsf{body}(\sigma)$ 
    (and thus, $\mathsf{varNull}(\alpha)$ $\cap$ $\mathsf{varNull}(\beta)$ $=$ $\emptyset$ as well),
    we denote by $\theta_{\LA\beta/\alpha,\sigma\RA}$ (as will be used in Line 19 of Algorithm \ref{non-det-LR}) 
    as a substitution on $\sigma$ that satisfies the following properties:
    \begin{itemize}        
        \item $\beta\theta_{\LA\beta/\alpha,\sigma\RA}$ $=$ $\alpha$;
        \item for each $t$ $\in$ $(\mathsf{varNull}(\sigma)$ 
              $\setminus$ $\mathsf{varNull}(\beta))$, we have that $\theta_{\LA\beta/\alpha,\sigma\RA}(t)$
              $=$ $t$, i.e., identity on terms not in $\mathsf{varNull}(\beta)$.
    \end{itemize}
    Intuitively, given that $\mathsf{varNull}(\alpha)$ $\cap$ 
    $\mathsf{varNull}(\sigma)$ $=$ $\emptyset$, we have that 
    $\theta_{\LA\beta/\alpha,\sigma\RA}$ is the ``minimal substitution" that is required
    to unify the atom $\alpha$ into the TGD $\sigma$ through the atom 
    $\beta$ $\in$ $\mathsf{body}(\sigma)$. Given again a TGD $\sigma$ and two finite sets 
    $V_1$, $V_2$ $\subset$ $(\Gamma_V\cup\Gamma_N)$, 
    we denote by $\vartheta_{[V_1\mapsto V_2]}$ (which will be used in Line 31 of Algorithm \ref{non-det-LR}) as 
    a renaming function $\vartheta_{[V_1\mapsto V_2]}:$ 
    $\mathsf{varNull}(\sigma)$ $\longrightarrow$ $V_2$ such that the following properties
    are satisfied:
    \begin{itemize}
        \item $t$ $\in$ $(V_1\cap\mathsf{varNull}(\sigma))$ implies 
              $\vartheta_{[V_1\mapsto V_2]}(t)$ $\in$ $V_2$;
        \item $t_1$, $t_2$ $\in$ $\mathsf{varNull}(\sigma)$ implies $t_1$ $=$ $t_2$
              iff $\vartheta_{[V_1\mapsto V_2]}(t_1)$ $=$ 
              $\vartheta_{[V_1\mapsto V_2]}(t_2)$;
        \item $t$ $\in$ $(\mathsf{varNull}(\sigma)\setminus V_1)$ or
              $t$ $\in$ $(V_1$ $\cap$ $V_2)$ implies 
              $\vartheta_{[V_1\mapsto V_2]}(t)$ $=$ $t$.      
    \end{itemize}
    Intuitively, $\vartheta_{[V_1\mapsto V_2]}$ is an injective renaming function from the
    variables and labeled nulls of $\sigma$ mentioned in $V_1$ into those in $V_2$.  
    Lastly, we also assume the two finite sets $S_V$ and $S_N$, where:
    (1) $S_V$ $\subset$ $\Gamma_V$ and $S_N$ $\subset$ $\Gamma_N$; 
    (2) $|S_V|$ $=$ $|S_N|$ $=$ $\mathsf{maxArt}(\Sigma)$; and 
    (4) $(S_V$ $\cup$ $S_N)$ $\cap$ $\mathsf{varNull}(\Sigma)$ $=$ $\emptyset$.

    \begin{algorithm}\label{non-det-LR}
            \caption{$\SC{notLR}$}
            \KwData{A set of TGDs $\Sigma$ 
                   (\textit{with existentially quantified variables already eliminated as mentioned above}), 
                   number $N$ from Proposition \ref{th2} and 
                   the two sets $S_V$ and $S_N$ previously mentioned above.}
            \KwResult{``$accept$" if not loop restricted and ``$reject$" otherwise}
            \textbf{let} $(\alpha_0,\sigma_0,\theta_0)$ be a triple and 
                         $V$ and $V^*$ be two finite sets such that:\label{alg_start}\\
            \hspace*{0.5cm}(1)\,\,$\sigma_0$ $\in$ $\Sigma$\; 
            \hspace*{0.5cm}(2)\,\,$\theta_0$ an applicable substitution for $\sigma_0$
                               that satisfies the following conditions:\\
                \hspace*{1.1cm}(a)\,\,for each $t$ $\in$ $(\mathsf{var}(\sigma_0)\setminus\mathsf{var}(\mathsf{head}(\sigma_0)))$, 
                                 we have that $\theta_0(t)$ $\in$ $(S_V\cup S_N)$\; 
                \hspace*{1.1cm}(b)\,\,for each $t$ $\in$ $\mathsf{varNull}(\mathsf{head}(\sigma_0))$, 
                                 we have that $\theta_0(t)$ $\in$ $V^*$\;
            \hspace*{0.5cm}(2)\,\,$\alpha_0$ $=$ $\mathsf{head}(\sigma_0\theta_0)$\;                               
            \hspace*{0.5cm}(3)\,\,$V^*$ $=$ $\mathsf{varNull}(\sigma_0\theta_0)$\;
            \hspace*{0.5cm}(4)\,\,$V$ $\subseteq$ $\mathsf{var}(V^*)$\;
            \hspace*{0.5cm}(5)\,\,$V^*$ $\cap$ 
                               $(\mathsf{varNull}(\Sigma)\cup S_V\cup S_N)$ $=$ $\emptyset$.\\                
            $lp$ $\longleftarrow$ $\mathbf{false}$\;
            %
            %
            $lp\mbox{\_II}$ $\longleftarrow$ $\mathbf{true}$\;
            $\rho$ $\longleftarrow$ $\sigma_0\theta_0$\;        
            \textbf{let} $n$ $\in$ $\{1,\ldots,N+1\}$ \textbf{and} $i$ $\longleftarrow$ $1$\;
            \While{$i$ $\leq$ $n$ and $lp$ $=$ $\mathbf{false}$}
            {
                $\alpha$ $\longleftarrow$ $\mathsf{head}(\rho)$\;
                \textbf{let} $(\alpha',\sigma',\theta')$ be a triple and $\beta'$ $\in$ $\mathsf{body}(\sigma')$ 
                             s.t. the following are satisfied:\\
                \hspace*{0.5cm}(1)\,\,$\sigma'$ $\in$ $\Sigma$\;             
                \hspace*{0.5cm}(2)\,\,$\theta'$ $=$ $\theta_{\LA\beta'/\alpha,\sigma'\RA}$
                                      a substitution\; 
                \hspace*{0.5cm}(3)\,\,$\alpha'$ $=$ $\mathsf{head}(\sigma'\theta')$.\\
                \If{there exists two sets $\mathsf{body_h}$ and $\mathsf{body_b}$ such that:\\
                    \hspace*{0.5cm}(1)\,\,$\mathsf{body_h}\subseteq\mathsf{body}(\sigma'\theta')$ \textbf{and}
                    $\mathsf{body_b}\subseteq\mathsf{body}(\sigma'\theta')$;\\
                    \hspace*{0.5cm}(2)\,\,$\mathsf{body_h}\cap\mathsf{body_b}=\emptyset$;\\
                    \hspace*{0.5cm}(3)\,\,$\alpha$ $\in$ $\mathsf{body_b}$.\\                
                    }
                {                
                    
                    \If{$lp\mbox{\_II}$ $=$ $\mathbf{true}$ and 
                        $\mathsf{var}(\{\alpha'\}\cup\mathsf{body_h})\cap
                        \mathsf{var}(\mathsf{body_b})\neq V$}
                    {
                        $lp\mbox{\_II}$ $\longleftarrow$ $\mathbf{false}$\;
                    }
                }
                $V^*$ $\longleftarrow$ $V^*$ $\cap$ $\mathsf{varNull}(\alpha')$\;                         
                $\rho$ $\longleftarrow$ 
                $\vartheta_{[(\mathsf{varNull}(\sigma'\theta')
                             \setminus V^*)\mapsto (S_V\cup S_N)]}(\sigma'\theta')$\;            
                \eIf{$\rho$ $\sim$ $\sigma_0\theta_0$}
                {
                    $lp$ $\longleftarrow$ $\mathbf{true}$\;
                }
                {                               
                    $i$ $\longleftarrow$ $i$ $+$ $1$\;
                }                                 
            }
            \If{$\rho$ $\sim$ $\sigma_0\theta_0$}
            {
                \If{the following condition:\\
                    \hspace*{0.5cm}$lp\mbox{\_II}$ $=$ $\mathbf{false}$ \textbf{and} 
                    $\mathsf{var}(V^*)$ $=$ $V$,\\
                    holds}
                {
                    \Return{$accept$}\;                
                }
            }        
            \Return{$reject$}\;
    \end{algorithm}        
    In Algorithm \ref{non-det-LR}, we have that
    Line \ref{alg_start} nondeterministically guesses a triple $(\alpha_0,\sigma_0,\theta_0)$
    and two finite subsets $V$ $\subset$ $\Gamma_V$
    and $V^*$ $\subset$ $(\Gamma_V$ $\cup$ $\Gamma_N)$ and where $V$ $\subseteq$ $V^*$. 
    Here, $\sigma_0\theta_0$, as obtained from the tuple $(\alpha_0,\sigma_0,\theta_0)$,
    denotes the ``initial" part of our loop pattern, i.e., informally, if we assume a loop
    pattern $P$ $=$ $(\alpha_1,\rho_1)$, $\ldots$, $(\alpha_n,\rho_n)$ then
    $\rho_n$ would correspond to our initial $\sigma_0\theta_0$. The (boolean) variables
    ``$lp$, " and ``$lp\mbox{\_II}$, " as first mentioned in Lines 10-11,
    lets us know if a derivation path (as will be produce via the \textbf{while} loop
    in Lines 14-32, which we will explain later) is a \textit{loop pattern} or 
    \textit{loop separable pattern}, respectively. 
    The following Lines 12 and 13 is for the initialization of the
    \textbf{while} loop of Lines 14-32. In particular, the number ``$n$" in Line 13
    is nondeterministically a guess of a length of a possible loop pattern that will be derived 
    in the \textbf{while} loop and where the value of $N$ is as specified in Proposition
    \ref{th2}. Then finally in regards to the \textbf{while} loop (Lines 14-32), Line 16 
    nondeterministically
    guesses a triple $(\alpha',\sigma',\theta')$ that satisfies the three Conditions (1), (2)
    and (3) as specified in Lines 17, 18 and 19, respectively. Intuitively,
    each iteration of the \textbf{while} loop  
    extends our initial (one element) derivation path 
    $(\alpha_N,\sigma_N\theta_N)$ (corresponding to the initial triple 
    ``$(\alpha_0,\sigma_0,\theta_0)$" as mentioned earlier) to add one more element corresponding
    to the triple $(\alpha',\sigma',\theta')$ with properties as mentioned in Lines 17-19. 
    Indeed, we note in particular the condition in Line 18 which guarantees that 
    $\alpha$ $\in$ $\mathsf{body}(\sigma'\theta')$ so that each added new (guessed) elements
    $(\alpha',\sigma'\theta')$ into the derivation path is indeed still a ``derivation path." Thus,
    after (say) $k$ iterations, this would correspond to a derivation path (say)
    $P'$ $=$ $(\alpha_j,\rho_j)$, $\ldots$, $(\alpha_N,\rho_N)$ such that $|P'|$ $=$ $k$
    and where $\rho_N$ $=$ $\sigma_0\theta_0$.

    A key thing to observe in the \textbf{while} loop (Lines 14-32) is that we do not store each
    of the triples $(\alpha',\sigma',\theta')$ but are overwritten in the next iteration
    (see Lines 15 and 30). As such,
    the space needed by Algorithm \ref{non-det-LR} will never go beyond PSPACE
    although the actual length of the corresponding loop pattern is exponential. In fact, 
    the total space that will be used cannot be more than 
    $O(|\Sigma|\cdot|\mathsf{atoms}(\Sigma)|\cdot\mathsf{maxArt}(\Sigma))$. 
    This is actually the reason for the 
    application of the relabeling function in Line 28 so that assignments are restricted
    only to the variables and labeled nulls to the set $\mathsf{varNull}(\Sigma)$ $\cup$
    $S_V$ $\cup$ $S_N$ $\cup$ $V^*$. 
    Most importantly, the crucial information that we keep from each iteration are the values 
    of the (boolean) variables (or flags) ``$lp$ and ``$lp\mbox{\_II}$ which,
    as set in Lines 26 and 30, tells us when the derivation path is already a loop pattern (Line 30)
    or loop separable pattern (Line 26). In regards to the loop separable pattern,
    we note that our 
    initial nondeterministic guess of the set (of variables) 
    ``$V$" in Line 1 is to 
    denote the final intersection of the variables of all the $\alpha_i$'s
    in the derivation path and so that a {\emph violation} of loop separable pattern condition
    is detected in Line 35 only if it is the case that $\mathsf{var}(V^*)$ $=$ $V$, and where $V^*$
    is the actual computed intersection of the $\alpha_i$'s variables/nulls as computed 
    on Line 27 of each iteration of the \textbf{while} loop. Finally, we have in Line
    33 that if the termination of the \textbf{while} loop  of Lines 14-32 corresponded 
    to a loop pattern, then further satisfaction of the condition in Lines 35
    tells us that the loop pattern is not ``loop separable," in which case Algorithm 
    \ref{non-det-LR} returns ``$accept$. "

    \begin{lemma}\label{notLR_is_correct}
        For a set of TGDs $\Sigma$, we have that $\Sigma$ is not LR iff 
        there exists some computation of $\SC{notLR}(\Sigma)$ such that
        $\SC{notLR}(\Sigma)$ returns ``$accept$."    
    \end{lemma}
    \begin{proof}        
        (``$\Longrightarrow$") Then let $P'$ $=$ $(\alpha'_1,\rho'_1)$, $\ldots$, 
            $(\alpha'_n,\rho'_n)$ be the loop pattern that is not loop restricted. Then
            we can make each of the pair $(\alpha'_{n-i},\rho'_{n-i})$ 
            (for $i$ $\in$ $\{1,\ldots,n-1\}$)
            correspond to each iterations of the \textbf{while} loop in Lines 14-32 of 
            Algorithm \ref{non-det-LR}. Moreover, the fact that $P'$ is not loop restricted
            and the fact that $\bigcap_{i\in\{1,\ldots,n\}}\mathsf{var}(\alpha'_i)$ $=$ 
            $\mathsf{var}(V^*)$
            further implies that we can make the entire computation to be an accepting
            computation of $\SC{notLR}(\Sigma)$.
        
        (``$\Longleftarrow$") Assume without loss of generality that the (nondeterministic)
            choice for $n$ $\in$ $\{1,\ldots,N\}$ in Line 13 of Algorithm \ref{non-det-LR} is
            $k$. Then we can construct a loop pattern 
            $P'$ $=$ $(\alpha'_1,\rho'_1)$, $\ldots$, $(\alpha'_n,\rho'_n)$ such that
            $|P'|$ $=$ $k+1$ inductively as follows
            (note that our induction will be from $i$ $=$ $n$ to $i$ $=$ $1$):
            \begin{description}
                \item[Basis:] Let $(\alpha'_n,\rho'_n)$ be the pair such that $\rho'_n$ $=$ 
                              $\sigma_0\theta_0$ with $\sigma_0\theta_0$ the TGD and assignment 
                              corresponding to the triple $(\alpha_0,\sigma_0,\theta_0)$ in 
                              Line 1 of Algorithm \ref{non-det-LR}. In addition, also let
                              $(\alpha'_{n-1},\rho'_{n-1})$ be the pair such that 
                              $\rho'_{n-1}$ $=$ $\sigma'\theta'$ with $\sigma'\theta'$ 
                              corresponding to the triple 
                              $(\alpha',\sigma',\theta')$ of the first iteration (i.e., 
                              when $i$ $=$ $2$ in the \textbf{while} loop of Lines 14-32) in 
                              Line 16 of Algorithm \ref{non-det-LR}. Moreover, about the 
                              assignment $\theta'$ $=$ $\theta_{\LA\beta'/\alpha,\sigma'\RA}$ 
                              as set in Line 18, we further assume that 
                              $\theta'(t)$ $\notin$ $\mathsf{varNull}(\sigma_0\theta_0)$
                              for each $t$ $\in$ 
                              $(\mathsf{var}(\sigma')\setminus\mathsf{var}(\beta'))$.
                              Intuitively speaking, the extra assertions about the assignment
                              $\theta'$ simply enforces the condition that the other
                              variables not mentioned in $\beta'$ is mapped by $\theta'$
                              to a set disjoint from $\mathsf{varNull}(\sigma_0\theta_0)$. Note
                              that the aforementioned condition about $\theta'$ can possibly introduce
                              an exponential number of variables which we allow since we are
                              now constructing a ``particular" loop pattern and not a membership
                              problem. Clearly, we have that $\alpha'_{n}$ $\in$ 
                              $\mathsf{body}(\rho'_{n-1})$.                                                          
                                                            
                \item[Inductive step:] Assume that we had already defined $(\alpha'_{n-i},\rho'_{n-i})$, 
                              $\ldots$, $(\alpha'_{n},\rho'_{n})$ for $i$ $\in$ $\{1,\ldots,k\}$ 
                              which corresponds to the first $i$-iterations of the \textbf{while} 
                              loop of Lines 14-32 of Algorithm \ref{non-det-LR}. In addition,
                              also assume that $\alpha'_{n-j}$ $\in$ 
                              $\mathsf{body}(\rho'_{n-(j+1)})$ for $j$ $\in$ $\{1,\ldots,i-1\}$.    
                              
                              Then we set $(\alpha'_{n-(i+1)},\rho'_{n-(i+1)})$ as the pair such
                              that:
                              \begin{enumerate}
                                  \item $\rho'_{n-(i+1)}$ $=$ $\sigma'\theta^*$, where:
                                        \begin{enumerate}
                                            \item $\sigma'$ is the TGD corresponding to the 
                                                  $(i+1)^{\mbox{th}}$-iteration of the \textbf{while}
                                                  loop (Lines 14-32) as mentioned in Line 16;
                                            \item $\theta^*$ is similar to
                                                  $\theta'$ $=$ $\theta_{\LA\beta'/\alpha,\sigma'\RA}$
                                                  as in Line 18 but where this time, we also assume that
                                                  $\theta^*(t)$ $\notin$ 
                                                  $\mathsf{varNull}(\rho'_{n-i})$
                                                  for each $t$ $\in$ 
                                                  $(\mathsf{var}(\sigma')\setminus\mathsf{var}(\beta'))$
                                                  (similarly to the \textbf{basis} above) and where
                                                  $\alpha$ $=$ $\alpha'_{n-i}$ and $\beta'$
                                                  $\in$ $\mathsf{body}(\sigma')$ (as in Line 16).
                                        \end{enumerate}
                                  \item $\alpha'_{n-(i+1)}$ $=$ $\mathsf{head}(\rho'_{n-(i+1)})$.      
                              \end{enumerate}
                              \begin{lemma}\label{not_loop_in_alg_not_loop_in_pattern}
                                  With $\rho$ as defined in Line 29 of Algorithm \ref{non-det-LR},                                  
                                  we have that
                                  $\rho$ $\sim$ $\sigma_0\theta_0$ iff 
                                  $\rho'_{n-(i+1)}$ $\sim$ $\rho'_{n}$.
                              \end{lemma} 
                              \begin{proof}
                                  (``$\Longrightarrow$") 
                                  First we note from the ``renaming function" 
                                  $\vartheta_{[(\mathsf{varNull}(\sigma'\theta')
                                      \setminus V^*)\mapsto (S_V\cup S_N)]}$ as invoked in Line 28
                                  and the computation of the set $V^*$ in Line 27 that this
                                  enforces $\mathsf{varNull}(\rho)$ $\cap$ 
                                  $\mathsf{varNull}(\sigma_0\theta_0)$ $=$ $V^*$. Therefore,
                                  we have that $\rho\REST_{V^*}$ $\sim$ $\sigma_0\theta_0\REST_{V^*}$.
                                  Therefore, since this is congruent with the fact that
                                  $\rho'_{n-(i+1)}\REST_{V'}$ $\sim$ $\rho'_{n}\REST_{V'}$, 
                                  where $V'$ $=$ $\mathsf{varNull}(\rho'_{n-(i+1)})$ $\cap$ 
                                  $\mathsf{varNull}(\rho'_{n})$, then it follows that
                                  $\rho'_{n-(i+1)}$ $\sim$ $\rho'_{n}$ as well.
                                  
                                  (``$\Longleftarrow$") This direction is similar to the previous one 
                                  and follows from the
                                  fact that $\rho'_{n-(i+1)}\REST_{V'}$ $\sim$ $\rho'_{n}\REST_{V'}$
                                  (where $V'$ $=$ $\mathsf{varNull}(\rho'_{n-(i+1)})$ $\cap$ 
                                  $\mathsf{varNull}(\rho'_{n})$) is congruent to the fact that
                                  $\rho\REST_{V^*}$ $\sim$ $\sigma_0\theta_0\REST_{V^*}$ with
                                  $V^*$ as computed in Line 27 of Algorithm \ref{non-det-LR}.  
                              \end{proof}
            \end{description}
            Therefore, it follows from Lemma \ref{not_loop_in_alg_not_loop_in_pattern}
            that $P'$ is indeed a loop pattern. Moreover, since $\SC{notLR}(\Sigma)$ is an accepting
            computation and thus, satisfies the condition in Lines 35 of Algorithm 
            \ref{non-det-LR}, then it follows that $P'$ is a loop pattern of $\Sigma$ that is
            not loop restricted. This completes the proof of Lemma \ref{notLR_is_correct}.              
    \end{proof}
    
    (\textit{Hardness}) 
    We prove hardness from ``first principles." 
    Thus, let $L$ be an arbitrary decision problem in PSPACE. Then from the definition
    of complexity class PSPACE \cite{Papadimitriou:complexity}, there exists some
    \emph{deterministic Turing machine}
    $M$ such that for any string $\vect{s}$, $\vect{s}$ $\in$ $L$ iff $M$ accepts
    $\vect{s}$ in at most $p(|\vect{s}|)$-space.
    Thus, assume the Turing machine $M$ to be the tuple
    $\LA Q$, $\Gamma$, $\Box$, $\Sigma$, $\delta$, $q_0$, $F\RA$
    such that: (1) $Q$ $\neq$ $\emptyset$ is a finite
    set of states; (2) $\Gamma$ $\neq$ $\emptyset$ is a finite set of alphabet symbols;
    (3) $\Box$ $\in$ $\Gamma$ is the ``blank" symbol; (4) $\Sigma$ $\subseteq$ $\Gamma$
    $\setminus$ $\{\Box\}$ is the set of input symbols; (5)
    $\delta$ $:$ $(Q\setminus F)$ $\times$ $\Gamma$ $\longrightarrow$ $Q$ $\times$
    $\Gamma$ $\times$ $\{L,R\}$ is the transition function; (6) $q_0$ $\in$ $Q$ is the initial
    state; and lastly, (7) $F$ $\subseteq$ $Q$ is the set of final/accepting states. 
    
    Now before proceeding to our actual reduction, we first introduce the following
    helpful notions, let
    $\vect{V}_0$ $=$ $XXY$ and $\vect{V}_1$ $=$ $YXX$. 
    Intuitively, $\vect{V}_0$ encodes the digit $\overline{0}$ and 
    $\vect{V}_1$ the digit $\overline{1}$. 
    For example, under the said notions, we have that 
    $\vect{V}_0\vect{V}_0\vect{V}_1$ stands for the tuple 
    $XXY XXY YXX$ since 
    $\underbrace{XXY}_{\vect{V}_0}\underbrace{XXY}_{\vect{V}_0}\underbrace{YXX}_{\vect{V}_1}$. 
    Intuitively, such an encoding scheme will allow
     us to encode bit-patterns (e.g., as in ``$00100011$," ``$00100010$,"
                                ``$00100110$," etc.) to represent both the linear
    ordering and the current tape cell contents of the Turing machine $M$ and where
    it also enjoys the propert $\vect{V}_0$ $\not\sim$ $\vect{V}_1$.
    By $\vect{Z}$, we denote the three times repetition of the variable $Z$, i.e.,
    $\vect{Z}$ $=$ $ZZZ$. As will be seen later on, this will allow us to determine
    when a final state is reached.
    
    Let $\vect{s}$ $=$ $a_0\ldots a_{|\vect{s}|}$ $\in$ $\Gamma^{|\vect{s}|}$ 
    be the input string to the machine $M$. Then define $\Sigma^{\textsc{move}}_{M(\vect{s})}$
    as the set of TGDs such that:  

    \begin{align}
        &\Sigma^{\textsc{move}}_{M(\vect{s})}\,=\,\nonumber\\
        &\big\{\,cf\big(\vect{T}_{\LA i\RA},\mathsf{stt}(q),\mathsf{num}(k),
                     \vect{X}_1,\ldots,\vect{X}_{k-1},
                        \mathsf{alp}(a),\nonumber\\
        &\hspace{3.85cm}\vect{X}_{k+1},\ldots,\vect{X}_{p(|\vect{s}|)}\big)
        \nonumber\\
        &\hspace{0.5cm}\ra cf\big(\vect{T}_{\LA i\RA}+1,\mathsf{stt}(q'),\mathsf{num}(k+1),\nonumber\\
        &\hspace{1.65cm}\vect{X}_1,
                  \ldots,\vect{X}_{k-1},\mathsf{alp}(b),\vect{X}_{k+1},\ldots,\vect{X}_{p(|\vect{s}|)}\big)\nonumber\\
        &\,\,\,\mid\,\delta(q,a)=(q',b,R)\mbox{ and }
         i\in\{1,\ldots,n-1\},\mbox{ where }\nonumber\\
         &\hspace{0.5cm}n=\big\lceil p(|\vect{s}|)\cdot\log(|\Gamma|)\big\rceil
         \mbox{ and }k\in\{1,\ldots,p(|\vect{s}|)\}\,\big\}\,\label{rule_R}\\
        &\cup\nonumber
    \end{align}
    \begin{align}
        &\big\{\,cf\big(\vect{T}_{\LA i\RA},\mathsf{stt}(q),\mathsf{num}(k),
                 \vect{X}_1,\ldots,\vect{X}_{k-1},
        \mathsf{alp}(a),\nonumber\\
        &\hspace{3.85cm}\vect{X}_{k+1},\ldots,\vect{X}_{p(|\vect{s}|)}\big)
        \nonumber\\
        &\hspace{0.5cm}\ra cf\big(\vect{T}_{\LA i\RA}+1,\mathsf{stt}(q'),\mathsf{num}(k-1),\nonumber\\
        &\hspace{1.65cm}\vect{X}_1,
        \ldots,\vect{X}_{k-1},\mathsf{alp}(b),\vect{X}_{k+1},\ldots,\vect{X}_{p(|\vect{s}|)}\big)\nonumber\\
        &\,\,\,\mid\,\delta(q,a)=(q',b,L)\mbox{ and }
        i\in\{1,\ldots,n-1\},\mbox{ where }\nonumber\\
        &\hspace{0.5cm}n=\big\lceil p(|\vect{s}|)\cdot\log(|\Gamma|)\big\rceil
         \mbox{ and }k\in\{1,\ldots,p(|\vect{s}|)\}\,\big\},\label{rule_L}        
    \end{align}
    where:
    \begin{itemize}
        \item $\vect{T}_{\LA i\RA}$ \big(for $i$ $\in$ $\{1,\ldots,n-1\}$ where
              $n$ $=$ 
              $\big\lceil p(|\vect{s}|)\cdot\log(|\Gamma|)\big\rceil$\big) is a tuple of variable
              \emph{triples} such that 
              $\vect{T}_{\LA i\RA}$ 
              $=$ 
              $\underbrace{\vect{T}_1\ldots\vect{T}_i\vect{V}_0\vect{V}_1\ldots\vect{V}_1}_{n\mbox{-times}}$
              and $\vect{T}_j$ $=$ $T_{j1}T_{j2}T_{j3}$ for $j$ $\in$ $\{1,\ldots,i\}$. In particular,
              we note that $|\vect{T}_{\LA i\RA}|$ $=$ $3\cdot n$
              for any $i$ $\in$ $\{1,\ldots,n-1\}$ (with $n$ as previously defined above);
        \item With $\vect{T}_{\LA i\RA}$ as previously defined above,
              $\vect{T}_{\LA i\RA}+1$ 
              $=$ $\underbrace{\vect{T}_1\ldots\vect{T}_i\vect{V}_1\vect{V}_0\ldots\vect{V}_0}_{n\mbox{-times}}$,
              (i.e., loosely speaking, $\vect{T}_{\LA i\RA}+1$ is the successor of $\vect{T}_{\LA i\RA}$
              under the binary representation); 
        \item $\mathsf{stt}:$ $Q$
              $\longrightarrow$ $\{\vect{V}_0,\vect{V}_1,\vect{Z}\}^n$, where $n$ $=$ 
              $\lceil\log(|Q|)\rceil$, such that assuming an ordering 
              $Q\setminus F$ $=$ $\{q_1,\ldots,q_{|Q|}\}$ of the states in $Q\setminus F$, then
              we have that for each $i$ $\in$ $\{1,\ldots,|Q|\}$,
              if $b_0\ldots b_n$ is the $n$-length binary string representation of the number
              $i$, then $\mathsf{stt}(s_i)$ $=$ $\vect{V}_{b_0}\ldots\vect{V}_{b_n}$. On the
              other hand, we have that $\mathsf{stt}(q)$ $=$ $\underbrace{\vect{Z}\ldots\vect{Z}}_{n\mbox{-times}}$
              for each $q$ $\in$ $F$;              
        \item $\mathsf{num}:$ $\{1,\ldots,2^{p(|\vect{s}|)}\}$ $\longrightarrow$ 
              $\{\vect{V}_0,\vect{V}_1\}^n$, where $n$ $=$ $p(|\vect{s}|)$, 
              and so that if the $n$-length binary string representation of some number 
              $k$ $\in$ $\{1,\ldots,2^{p(|\vect{s}|)}\}$ is $b_0\ldots b_n$
              \big(i.e., $b_i$ $\in$ $\{1,0\}$ for $i$ $\in$ $\{1,\ldots,n\}$\big), then
              $\mathsf{num}(k)$ $=$ $\vect{V}_{b_0}\ldots\vect{V}_{b_n}$;              
        \item $\mathsf{alp}:$ $\Gamma$
              $\longrightarrow$ $\{\vect{V}_0,\vect{V}_1\}^n$, where $n$ $=$ 
              $\lceil\log(|\Gamma|)\rceil$, such that assuming an ordering 
              $\Gamma$ $=$ $\{a'_1,\ldots,a'_{|\Gamma|}\}$ of the alphabets $\Gamma$,
              we have that for each $i$ $\in$ $\{1,\ldots,|\Gamma|\}$,
              if $b_0\ldots b_n$ is the $n$-length binary string representation of the number
              $i$, then $\mathsf{alp}(a'_i)$ $=$ $\vect{V}_{b_0}\ldots\vect{V}_{b_n}$;
        \item For $i$ $\in$ $\{1,\ldots,k-1,k+1,\ldots,p(|\vect{s}|)\}$, 
              $\vect{X}_i$ $=$ $X_{i1}\ldots X_{i\,n}$ is an $n$-length tuple of distinct variables, 
              where (as above) $n$ $=$ $\lceil\log(|\Gamma|)\rceil$.
               
    \end{itemize}
    Intuitively, the set of TGDs $\Sigma^{\textsc{move}}_{M(\vect{s})}$ as previously described 
    above simulates the 
    right and left movements of the head of the machine $M$. Here, the relational symbol $cf$
    (i.e., ``$cf$" for \emph{configuration}) as mentioned in (\ref{rule_R}) and (\ref{rule_L}), 
    is of arity: 
    \begin{align}
    	&3\cdot\big\lceil p(|\vect{s}|)\cdot\log(|\Gamma|)\big\rceil +
    	      \lceil\log(|Q|)\rceil +
    	      p(|\vect{s}|)
    	      \lceil\log(|\Gamma|)\rceil\nonumber\\
    	      &\hspace{4.9cm}+p(|\vect{s}|)\cdot\lceil\log(|\Gamma|)\rceil,\nonumber	
    \end{align}
    and where the arguments of $cf$ is explained via the following illustration:
    \begin{align}
        cf\big(\underbrace{\vect{T}_{\LA i\RA}}_{\mbox{time/step}},
           &\underbrace{\mathsf{stt}(q)}_{\mbox{current state}},
           \hspace{-1cm}\overbrace{\mathsf{num}(k),}^{\mbox{current head tape position}}\nonumber\\
        &\underbrace{\vect{X}_1,\ldots,\vect{X}_{k-1},
        \hspace{-0.5cm}\overbrace{\mathsf{alp}(a),}^{\mbox{character in cell \textit{k}}}
        \hspace{-0.5cm}\vect{X}_{k+1},\ldots,\vect{X}_{p(|\vect{s}|)}}_{\mbox{current tape configuration}}\big).
        \nonumber
    \end{align}

    Additionally, to also incorporate the starting and accepting configurations of the machine $M$ on
    the input string $\vect{s}$, we further define the set of TGDs 
    $\Sigma^{\textsc{init}}_{M(\vect{s})}$ as follows:
    \begin{align}
        &\Sigma^{\textsc{init}}_{M(\vect{s})}\,=\,\nonumber\\
        \big\{\,&r(X,Z)\nonumber\\
        &\ra cf\big(\vect{T}_{0},\mathsf{stt}(q_0),\mathsf{num}(0),
        \mathsf{alp}(a_0),\ldots,\mathsf{alp}(a_{|\vect{s}|}),\nonumber\\
        &\hspace{4.08cm}\mathsf{alp}(\Box),\ldots,\mathsf{alp}(\Box)\big),\label{loop_rule}\\
        &cf\big(\vect{T}_{0},\mathsf{stt}(q_0),\mathsf{num}(0),
        \mathsf{alp}(a_0),\ldots,\mathsf{alp}(a_{|\vect{s}|}),\nonumber\\
        &\hspace{3.55cm}\mathsf{alp}(\Box),\ldots,\mathsf{alp}(\Box)\big)
        \nonumber\\
        &\ra cf\big(\vect{T}_{0}+1,\mathsf{stt}(q),\mathsf{num}(1),
        \mathsf{alp}(b),\ldots,\mathsf{alp}(a_{|\vect{s}|}),\nonumber\\
        &\hspace{4.0cm}\mathsf{alp}(\Box),\ldots,\mathsf{alp}(\Box)
        \big)\label{init_rule}\\
        &\mid\,\delta(q_0,a_0)=(q,b,R)\,\big\},\nonumber\\\nonumber    
    \end{align}
    where $\vect{T}_0$ $=$ $\underbrace{\vect{V}_0\ldots \vect{V}_0\vect{V}_0}_{|\vect{T}|\mbox{-times}}$ and
    $\vect{T}_0+1$ $=$ $\underbrace{\vect{V}_0\ldots \vect{V}_0\vect{V}_1}_{|\vect{T}|\mbox{-times}}$.
    In particular, we note from the previous notions that
    all the variables mentioned in the rule (\ref{loop_rule}) will only be from the set $\{X,Y\}$
    since they are simply combinations of the tuples $\vect{V}_0$ $=$ $XXY$ and $\vect{V}_1$ $=$ $YXX$.
    Then finally to complete our encoding, further define $\Sigma^{\textsc{accept}}_{M(\vect{s})}$ as follows:
    \begin{align}
        &\Sigma^{\textsc{accept}}_{M(\vect{s})}\,=\,\nonumber\\
        &\big\{\, 
                 cf\big(\vect{T}_{\LA i\RA},\mathsf{stt}(q),\mathsf{num}(k),
        \vect{X}_1,\ldots,\vect{X}_{k-1},
        \mathsf{alp}(a),\nonumber\\
        &\hspace{3cm}\vect{X}_{k+1},\ldots,\vect{X}_{p(|\vect{s}|)}\big)\ra r(X,Z)\label{final}\\
        &\,\,\,\mid\,q\in F\mbox{ and }
        i\in\{1,\ldots,n-1\},\mbox{ where }\nonumber\\
        &\hspace{0.5cm}n=\big\lceil p(|\vect{s}|)\cdot\log(|\Gamma|)\big\rceil,\,
        k\in\{1,\ldots,p(|\vect{s}|)\}\mbox{ and}\nonumber\\
        &\hspace{0.5cm}a\in\Gamma\,\big\}.\nonumber        
    \end{align}
    Lastly, for the rest of the proof, we set $\Sigma_{M(\vect{s})}$ $=$ 
    $\Sigma^{\textsc{init}}_{M(\vect{s})}$
    $\cup$ $\Sigma^{\textsc{move}}_{M(\vect{s})}$ $\cup$ 
    $\Sigma^{\textsc{accept}}_{M(\vect{s})}$.
    
    \begin{lemma}\label{accept_iff_LR}
        $M$ accepts $\vect{s}$ iff $\Sigma_{M(\vect{s})}$ is not LR.
    \end{lemma}
    \begin{proof}
        (``$\Longrightarrow$") Then we can construct a loop pattern of $\Sigma_{M(\vect{s})}$ 
        that is not loop separable.
        Indeed, such a $k$-steps accepting computation of $M(\vect{s})$ corresponds to a sequence
        $cf(\mathsf{num}(0),\mathsf{stt}(q_0),\mathsf{num}(0),\vect{X}_0)$, $\ldots$, 
        $cf(\mathsf{num}(k),\mathsf{stt}(q),\mathsf{num}(n_k),\vect{X}_k)$, where $q$ $\in$ $F$ 
        (i.e., the set of accepting states) and each
        of the $\vect{X}_i$ (for $i$ $\in$ $\{1,\ldots,k\}$) corresponds to the tape configuration
        of the machine $M$ at the $i^{\mbox{th}}$-step. Then we can map such a sequence into 
        a derivation path $P$ $=$ $(\alpha_1,\rho_1)$, $(\alpha_2,\rho_2)$, 
        $\ldots$, $(\alpha_n,\rho_n)$ such that:
        \begin{itemize}
            \item $(\alpha_n,\rho_n)$ is a pair such that
                  $\alpha_n$ $=$ $cf(\mathsf{num}(0),\mathsf{stt}(q_0),\mathsf{num}(0),\vect{X}_0)$ 
                  and $\rho_n$ the rule (\ref{loop_rule}), i.e.,\\
                  \begin{align}
                  	\rho_n=\,&r(X,Z)\nonumber\\
                  	&\ra cf\big(\vect{T}_{0},\mathsf{stt}(q_0),\mathsf{num}(0),
                  	                    \mathsf{alp}(a_0),\ldots,\mathsf{alp}(a_{|\vect{s}|}),\nonumber\\
                  	&\hspace{4.1cm}\mathsf{alp}(\Box),\ldots,\mathsf{alp}(\Box)\big);\nonumber	
                  \end{align}
            \item $(\alpha_2,\rho_2)$ is a pair such that $\alpha_2$ $=$ $r(X,Z)$ 
                  and $\rho_2$ the rule (\ref{final}), i.e.,
                  \begin{align}
                   \rho_2=\,&cf\big(\vect{T}_{\LA i\RA},\mathsf{stt}(q),\mathsf{num}(k),
                           \vect{X}_1,\ldots,\vect{X}_{k-1},\mathsf{alp}(a),\nonumber\\
                           &\hspace{0.55cm}\vect{X}_{k+1},\ldots,\vect{X}_{p(|\vect{s}|)}\big)\ra r(X,Z),\nonumber
                  \end{align}
                  for some $i$ $\in$ $\{1,\ldots,\big\lceil p(|\vect{s}|)\cdot\log(|\Gamma|)\big\rceil\}$
                  and $q$ $\in$ $F$ (the accepting states);
            \item and lastly, $(\alpha_1,\rho_1)$ is a pair such that
                  $\alpha_1$ $=$ $cf(\mathsf{num}(0),\mathsf{stt}(q_0),\mathsf{num}(0),\vect{X}_0)$
                  and $\rho_1$ the rule (\ref{loop_rule}) again, i.e.,\\
                  \begin{align}
                  	\rho_1=r(X,Z)\ra cf\big(&\vect{T}_{0},\mathsf{stt}(q_0),\mathsf{num}(0),
      	                  \mathsf{alp}(a_0),\ldots,\nonumber\\
      	                  &\mathsf{alp}(a_{|\vect{s}|}),\mathsf{alp}(\Box),\ldots,\mathsf{alp}(\Box)\big).	
                  \end{align}
        \end{itemize}
        Then since we have that
        $(\mathsf{num}(\overline{\vect{i}}),\mathsf{num}(q),\mathsf{num}(\overline{\vect{i}}),
        \vect{X}_i)$ 
        $\not\sim$ 
        $(\mathsf{num}(\overline{\vect{j}}),\mathsf{num}(q'),\mathsf{num}(\overline{\vect{j}}),
        \vect{X}_j)$ for any $i$ $\neq$ $j$ since 
        $\mathsf{num}(\overline{\vect{i}})$ $\not\sim$ $\mathsf{num}(\overline{\vect{j}})$,
        then we also have that $(\alpha_i,\rho_i)$ $\not\sim$ $(\alpha_j,\rho_j)$. Therefore,
        since we clearly have that $(\alpha_1,\rho_1)$ $\sim$ $(\alpha_n,\rho_n)$, then it 
        follows that $P$ is in fact a loop pattern. Thus, it is only left for us to show
        that $P$ is in fact not loop separable. Indeed, since it follows from a simple 
        induction that $\bigcap_{i\in\{0,\ldots,n-1\}}\mathsf{var}(\alpha_{n-i})$ $=$ $\{X\}$
        while $\mathsf{var}(\rho_{n-i})$ $=$ $\mathsf{var}(\alpha_{n-i})$ $=$ $\{X,Y\}$ for
        $i$ $\in$ $\{1,\ldots,n-3\}$, then
        since for $\rho_2$ we have that $\mathsf{body_h}(\rho_2)$ $=$ $\emptyset$,
        $\mathsf{body_b}(\rho_2)$ $=$ $\{\alpha_3\}$,
        and $\alpha_2$ $=$ $r(X,Z)$, then since 
        $\mathsf{var}(\{\alpha_2\}\cup\mathsf{body_h}(\rho_2))$ $\cap$ $\mathsf{body_b}(\rho_2))$
        $=$ $\{X,Z\}$ $\neq$ $\bigcap_{i\in\{1,\ldots,n\}}\mathsf{var}(\alpha_i)$ $=$ $\{X\}$
        \big(where in particular, we note that since $q$ $\in$ $F$, then
        $Z$ $\in$ $\mathsf{var}(\mathsf{body_b}(\rho_2))$ 
         because $\mathsf{stt}(q)$ $=$\hspace{-0.45cm} 
         $\underbrace{\vect{Z}\ldots\vect{Z}}_{\lceil\log{|Q|}\rceil\mbox{-times}}$\hspace{-0.45cm}
         \big), 
        then it follows that $P$ is not loop separable.        
                                
        (``$\Longleftarrow$") Firstly, we observe from the construction of
        $\Sigma_{M(\vect{s})}$ $=$ 
        $\Sigma^{\textsc{init}}_{M(\vect{s})}$
        $\cup$ $\Sigma^{\textsc{move}}_{M(\vect{s})}$ $\cup$ 
        $\Sigma^{\textsc{accept}}_{M(\vect{s})}$ that due to the linear ordering
        as enforced by the time argument of $cf$ 
        (i.e., the first tuples $\vect{T}_{\LA i\RA}$ and $\vect{T}_{\LA i\RA}+1$
        mentioned in the rules (\ref{rule_R}), (\ref{rule_L}), (\ref{loop_rule}) 
        and (\ref{final}))                 
        that any loop pattern $P$ $=$ $\LA\alpha_1,\rho_1\RA$, $\ldots$, 
        $\LA\alpha_n,\rho_n\RA$ of $\Sigma_{M(\vect{s})}$, where
        $\rho_1$ $=$ $\sigma_1\theta_1$ and $\rho_n$ $=$ $\sigma_n\theta_n$,
        implies that both $\sigma_1$ and $\sigma_n$ are of the rule
        (\ref{loop_rule}). 
        This also follows from the fact that
        assuming $\alpha_i$ $=$ $cf(\vect{T}_i,\vect{s}_i,\vect{p}_i,\vect{tp}_i)$
        such that $\vect{T}_i$, $\vect{s}_i$, $\vect{p}_i$ and $\vect{tp}_i$
        are the ``time, " ``state, " ``tape position" and current ``tape configuration, " 
        respectively, then we will have that $\vect{T}_i$ $\not\sim$ $\vect{T}_{j}$
        for each $i$, $j$ $\in$ $\{1,\ldots,n-1\}$ and $i$ $<$ $j$, but where we have 
        that $\vect{T}_1$ $\sim$ $\vect{T}_n$. Thus, using similar arguments above, 
        if we assume that 
        $P$ $=$ $(\alpha_1,\rho_1)$, $\ldots$, $(\alpha_n,\rho_n)$ is a loop pattern
        of $\Sigma_{M(\vect{s})}$ that is not loop restricted, then it follows
        that the sequence $(\alpha_2,\rho_2)$, $\ldots$, $(\alpha_n,\rho_n)$ 
        corresponds to an accepting computation of $M(\vect{s})$.
    \end{proof}
\end{proof}

\subsection{Proof of Proposition \ref{glr-contains}}

\begin{proposition-appendix}\ref{glr-contains}.
	\textit{With GLR the class of \textit{generalized loop restricted} TGDs as defined in Definition
	        \ref{def-generalized-restricted}, we have that the following holds:
	    	\begin{itemize}     
	    		\item \emph{LP} $\subsetneq$ \emph{GLR};       
	    		\item \emph{aGRD} $\subsetneq$ \emph{GLR};
	    		\item \emph{ML} $\subsetneq$ \emph{GLR};
	    		\item \emph{SJ} $\subsetneq$ \emph{GLR}.
	    	\end{itemize}.
	    	}
\end{proposition-appendix}
\begin{proof}
	We prove by considering the individual cases as follows:
	\begin{description}
		\item[](``LP $\subsetneq$ GLR"):
		    This follows from the fact that the loop pattern Type I of Definition \ref{def-generalized-restricted}
		    is actually the loop pattern of Definition \ref{def-restricted}.
		\item[](``aGRD $\subsetneq$ GLR"): 
		    On the contrary, assume that there exists some 
			$\Sigma$ $\in$ aGRD such that $\Sigma$ $\notin$ GLR. Then by Definition \ref{def-generalized-restricted},
			there exists some loop pattern $L$ $=$ $(\alpha_1,\rho_1)\cdots(\alpha_n, \rho_n)$ such that
			it is neither of the Types I-IV as described in Definition \ref{def-generalized-restricted}.
			In particular, we have that $L$ is not of the Type II. Then this implies that for all 
			$(\alpha_i,\rho_i)$ ($1\leq i <n$), we have that $\mathsf{body}(\rho_i)$ separated 
			into two disjoint body parts $\mathsf{body}(\rho_i)$ $=$ $\mathsf{body_h}(\rho_i)$ $\cup$ 
			$\mathsf{body_b}(\rho_i)$ implies that one of the following conditions holds:	
		    \begin{enumerate}
		        \item $\mathsf{body_h}(\rho_i)\cap \mathsf{body_b}(\rho_i)\neq\emptyset$, or
		        \item $\alpha_{i+1}\in \mathsf{body_b}(\rho_i)$, or
		        \item $\mathsf{var}\big(\{\alpha_i\}\cup \mathsf{body_h}(\rho_i)\big)$
		              $\cap$ $\mathsf{var}\big(\mathsf{body_b}(\rho_i)\big)$ $\neq$
		              $\emptyset$.		          
		    \end{enumerate}	
			In particular, if we take $\mathsf{body_h}(\rho_i)$ $=$ $\emptyset$ and 
			$\mathsf{body_b}(\rho_i)$ $=$ $\mathsf{body}(\rho_i)$, for each $i$ $\in$ $\{1,\ldots,n\}$,
			then since $L$ is a loop pattern 
			(and thus, $\alpha_{i+1}$ $\in$ $\mathsf{body}(\rho_i)$ $=$ $\mathsf{body_b}(\rho_i)$)
			then we have that Conditions 1 and 2 cannot hold. Therefore, we must have that 
			Condition 3 holds for each $(\alpha_i,\rho_i)$ ($1\leq i <n$) (i.e., if we take
			$\mathsf{body_h}(\rho_i)$ $=$ $\emptyset$ and $\mathsf{body_b}(\rho_i)$ $=$ 
			$\mathsf{body}(\rho_i)$). Then this contradicts the assumption that $\Sigma$ $\in$ aGRD
			because this implies a cycle in the ``firing graph" \cite{Baget04}.	
		\item[](``ML $\subsetneq$ GLR"):
			On the contrary, assume that there exists some 
			$\Sigma$ $\in$ ML such that $\Sigma$ $\notin$ GLR. Then again by Definition 
			\ref{def-generalized-restricted}, there exists some loop pattern $L$ $=$ 
			$(\alpha_1,\rho_1)\cdots(\alpha_n, \rho_n)$ such that it is neither of the Types I-IV 
			as described in Definition \ref{def-generalized-restricted}.
			In particular, we have that $L$ is not of the Type III. Then this implies that
			there exists some $(\alpha_i,\rho_i)$ ($1\leq i <n$) such that 	
			${\sf var}(\rho_i)$ $\not\subseteq$ ${\sf var}(\beta)$, for some 
			$\beta$ $\in$ ${\sf body}(\rho_i)$. Therefore, since $\rho_i$ $=$ $\sigma_i\theta_i$, 
			for some $\sigma_i$ $\in$ $\Sigma$ and assignment $\theta_i$, then it follows that
			there exists some $\beta'$ $\in$ ${\sf body}(\sigma_i)$ such that
			${\sf var}(\sigma_i)$ $\not\subseteq$ ${\sf var}(\beta')$. Then this contradicts the 
			assumption that $\Sigma$ $\in$ ML.				
		\item[](``SJ $\subsetneq$ GLR"):
			On the contrary, assume that there exists some 
			$\Sigma$ $\in$ SJ such that $\Sigma$ $\notin$ GLR. Then again by Definition 
			\ref{def-generalized-restricted}, there exists some loop pattern $L$ $=$ 
			$(\alpha_1,\rho_1)\cdots(\alpha_n, \rho_n)$ such that it is neither of the Types I-IV 
			as described in Definition \ref{def-generalized-restricted}. 
			In particular, we have that $L$ is not of the Type IV. Then this implies that there 
			exists some pair $(\alpha_i,\rho_i)$ in $L$ ($1\leq i< n$) such that 
			$\big({\sf var}(\alpha_i)$ $\cap$ ${\sf var}(\beta)\big)$ 
			$\not\subseteq$ $\bigcap_{j=i+1}^n{\sf var}(\alpha_j)$,
			for some $\beta$ $\in$ ${\sf body}(\rho_{i+1})\setminus\{\alpha_i\}$. Then this again
			contradicts the assumption that $\Sigma$ $\in$ SJ since the ``expansion" of $\Sigma$ 
			\cite{CaliGP12} (which correspond to the loop pattern) will contain a marked variable 
			that occurs in two different atoms.					
	\end{description}
\end{proof}

\subsection{Proof of Theorem \ref{GLR_BDTDP_property}}

\begin{theorem-appendix}\ref{GLR_BDTDP_property}.
	\textit{The class of generalized loop restricted patterns satisfies BDTDP.}			
\end{theorem-appendix}
\begin{proof}
	\begin{description}
		\item[](``Types I and II"): The proof follows similarly to that as in 
			Proposition 11 and Proposition 12 of \cite{ChenLZZ11}. 
			
		\item[](``Type IV"): Consider a loop pattern $L$ $=$ $(w_1,\ldots,w_n)$ of
			Type IV (as defined in Definition \ref{def-generalized-restricted}). Then
			by the definition of a loop pattern (see Definition \ref{loop-p}),
			we have that $w_1$ $\sim$ $w_n$ and $w_i$ $\not\sim$ $w_j$ for any other 
			$i,j$ ($1 \leq i, j \leq  n$). Then assuming that $w_i$ $=$ $(\alpha\theta_i,\sigma\theta_i)$,
			for each $i$ $\in$ $\{1,\ldots,n\}$, let us consider the following two only
			possibilities:
			\begin{description}										
				\item[Case 1:] $\exists\beta_1\theta_1,\beta_2\theta_1$ $\in$ ${\sf body}(\sigma\theta_1)$, where
					$\beta_1\theta_1$ $\neq$ $\beta_2\theta_1$, s.t. 
					$\big({\sf var}(\beta_1\theta_1)\cap{\sf var}(\beta_2\theta_1)\big)$
					$\neq$ $\emptyset$:\\
					
					Then we can assume without loss of generality that the loop pattern 
					$L$ $=$ $(w_1,\ldots,w_n)$ is such that: 
					\begin{align}
						w_1&\,=\,\big(\alpha\theta_1,``\beta\theta_1,\widehat{{\sf B_b}\theta_1},\widehat{{\sf B_h}\theta_1}
						          \ra\alpha\theta_1"\big);\\
						w_2&\,=\,\big(\alpha_2\theta_2,\sigma_2\theta_2\big);\\
						   &\hspace{1.5cm}\vdots\nonumber\\	
						w_{n-1}&\,=\,\big(\alpha_{n-1}\theta_{n-1},\sigma_{n-1}\theta_{n-1}\big);\\						   					          
						w_n&\,=\,\big(\alpha\theta_n,``\beta\theta_n,\widehat{{\sf B_b}\theta_n},\widehat{{\sf B_h}\theta_n}\ra\alpha\theta_n"\big),						               
					\end{align}					
					where:
					\begin{itemize}
						\item $\sigma$ $=$ ``$\beta,\widehat{\sf B_b},\widehat{\sf B_h}$ 
						      $\ra$ $\alpha$" is some TGD rule of $\Sigma$, where 
						      ${\sf body}(\sigma)$ $=$ $\{\beta\}$ $\cup$ ${\sf B_b}$ $\cup$ ${\sf B_h}$, 
						      $\beta$ $\notin$ $\big({\sf B_b}$ $\cup$ ${\sf B_h}\big)$ and
						      $\widehat{\sf B_b}$ and $\widehat{\sf B_h}$ denotes the conjunctions of the atoms
						      in ${\sf B_b}$ and ${\sf B_h}$, respectively;
						\item $\theta_1$ and $\theta_n$ are assignments on the rule $\sigma$ $\in$ $\Sigma$;
					   	\item $\beta'\theta_n$ $\in$ ${\sf B_b}\theta_n$ implies $\big({\sf var}(\beta\theta_n)$
					   	      $\cap$ ${\sf var}(\beta'\theta_n)\big)$ $\neq$ $\emptyset$;
					   	\item $\beta'\theta_n$ $\in$ ${\sf B_h}\theta_n$ implies $\big({\sf var}(\beta\theta_n)$
					   	      $\cap$ ${\sf var}(\beta'\theta_n)\big)$ $=$ $\emptyset$;
					   	\item $\alpha_2\theta_2$ $=$ $\beta\theta_1$ $\in$ ${\sf body}(\sigma\theta_1)$.
					\end{itemize}
					Then since the loop pattern $L$ $=$ $(w_1,\ldots,w_n)$ is of Type IV, then we have that 
					\begin{align}
						\bigcup_{\beta'\theta_n\in{\sf B_b}\theta_n}\hspace*{-0.5cm}
						\big({\sf var}(\beta\theta_n)\cap{\sf var}(\beta'\theta_n)\big)\subseteq
						\bigcap_{i=1}^{i=n}{\sf var}(\alpha_i\theta_i).\label{type_IV_cond}
					\end{align}
					Now let us define the assignment 
					$\theta':$ ${\sf var}(\sigma_n\theta_n)$ $\longrightarrow$ ${\sf var}(\sigma_1\theta_1)$ as
					follows:
					\begin{align}
						\theta'\,=&\,\big\{\,\theta_n(X)\mapsto\theta_n(X)\,\mbox{\Large $\mid$}\,
						                   X\in{\sf var}(\sigma)\mbox{ and }\theta_n(X)
						                   \in\bigcap_{i=1}^{i=n}{\sf var}(\alpha_i\theta_i)\,\big\}\nonumber\\
						           \,\cup&\,\big\{\,\theta_n(X)\mapsto\theta_1(X)\,\mbox{\Large $\mid$}\,
		                                   X\in{\sf var}(\sigma)\mbox{ and }\theta_n(X)
		                                   \notin\bigcap_{i=1}^{i=n}{\sf var}(\alpha_i\theta_i)\,\big\}.
					\end{align}
					Then since $\sigma\theta_1$ $\sim$ $\sigma\theta_n$ and by Condition (\ref{type_IV_cond}),
					then it follows that with 
					$w^*_n$ $=$ $\big(\alpha\theta^*,``\beta\theta^*,\widehat{{\sf B_b}\theta^*},\widehat{{\sf B_h}\theta^*}\ra\alpha\theta^*\,"\big)$, where $\theta^*$ $=$ $\theta'\circ\theta_n$,
		            we have that $w^*_n$ $=$ $w_1$. Therefore, using the same argument as to the proof of 
		            Proposition 12 in \cite{ChenLZZ11}, we have that any occurrence of the loop pattern  
		            $L$ $=$ $(w_1,\ldots,w_n)$ in a derivation tree $T(\Sigma)$ implies another derivation
		            tree $T'(\Sigma)$ such that $T(\Sigma)$ subsumes $T'(\Sigma)$ by ``replacing" the 
		            derivation path $L$ $=$ $(w_1,\ldots,w_n)$ by the singleton $w^*$ $=$ 
		            $\big(\alpha\theta_1,``\beta\theta^*,\widehat{{\sf B_b}\theta_1},\widehat{{\sf B_h}\theta_1}
		            						          \ra\alpha\theta_1"\big)$
		            (see Fig. 2 in the proof of Proposition 12 of \cite{ChenLZZ11}).\\
		            
				\item[Case 2:] $\forall\beta_1\theta_1,\beta_2\theta_1$ $\in$ ${\sf body}(\sigma\theta_1)$ such that
					$\beta_1\theta_1$ $\neq$ $\beta_2\theta_1$ implies that 
					$\big({\sf var}(\beta_1\theta_1)\cap{\sf var}(\beta_2\theta_1)\big)$
					$=$ $\emptyset$:\\

					The key to proving this case is similar to the ideas of the previous case above.
					Indeed, let us assume again a loop pattern $L$ $=$ $(w_1,\ldots,w_n)$ such that:
					\begin{align}
						w_1&\,=\,\big(\alpha\theta_1,``\beta\theta_1,\widehat{{\sf B}\theta_1}
						          \ra\alpha\theta_1"\big);\\
						w_2&\,=\,\big(\alpha_2\theta_2,\sigma_2\theta_2\big);\\
						   &\hspace{1.5cm}\vdots\nonumber\\	
						w_{n-1}&\,=\,\big(\alpha_{n-1}\theta_{n-1},\sigma_{n-1}\theta_{n-1}\big);\\						   					          
						w_n&\,=\,\big(\alpha\theta_n,``\beta\theta_n,\widehat{{\sf B}\theta_n}
						          \ra\alpha\theta_n"\big),						               
					\end{align}					
					where:
					\begin{itemize}
						\item $\sigma$ $=$ ``$\beta,\widehat{\sf B}$ 
						      $\ra$ $\alpha$" is some TGD rule of $\Sigma$, where 
						      ${\sf body}(\sigma)$ $=$ $\{\beta\}$ $\cup$ ${\sf B}$, 
						      $\beta$ $\notin$ ${\sf B}$ and
						      $\widehat{\sf B}$ denotes the conjunctions of the atoms
						      in ${\sf B}$;
						\item $\theta_1$ and $\theta_n$ are assignments on the rule $\sigma$ $\in$ $\Sigma$;
					   	\item $\beta'\theta_n$ $\in$ ${\sf B}\theta_n$ implies $\big({\sf var}(\beta\theta_n)$
					   	      $\cap$ ${\sf var}(\beta'\theta_n)\big)$ $=$ $\emptyset$;
					   	\item $\alpha_2\theta_2$ $=$ $\beta\theta_1$ $\in$ ${\sf body}(\sigma\theta_1)$.
					\end{itemize}
					Then we also define the assignment 
					$\theta':$ ${\sf var}(\sigma_n\theta_n)$ $\longrightarrow$ ${\sf var}(\sigma_1\theta_1)$ as
					follows:
					\begin{align}
						\theta'\,=&\,\big\{\,\theta_n(X)\mapsto\theta_n(X)\,\mbox{\Large $\mid$}\,
						                   X\in{\sf var}(\sigma)\mbox{ and }\theta_n(X)
						                   \in\bigcap_{i=1}^{i=n}{\sf var}(\alpha_i\theta_i)\,\big\}\nonumber\\
						           \,\cup&\,\big\{\,\theta_n(X)\mapsto\theta_1(X)\,\mbox{\Large $\mid$}\,
		                                   X\in{\sf var}(\sigma)\mbox{ and }\theta_n(X)
		                                   \notin\bigcap_{i=1}^{i=n}{\sf var}(\alpha_i\theta_i)\,\big\}.
					\end{align}
					Then similarly to the previous case above, because 
					$\sigma\theta_1$ $\sim$ $\sigma\theta_n$, then it follows that with 
					$w^*_n$ $=$ $\big(\alpha\theta^*,``\beta\theta^*,\widehat{{\sf B}\theta^*}
							   \ra\alpha\theta^*\,"\big)$, where $\theta^*$ $=$ $\theta'\circ\theta_n$,
		            we have that $w^*_n$ $=$ $w_1$. Therefore, using again the same argument as to the proof of 
		            Proposition 12 in \cite{ChenLZZ11}, we have that any occurrence of the loop pattern  
		            $L$ $=$ $(w_1,\ldots,w_n)$ in a derivation tree $T(\Sigma)$ implies another derivation
		            tree $T'(\Sigma)$ such that $T(\Sigma)$ subsumes $T'(\Sigma)$ by ``replacing" the 
		            derivation path $L$ $=$ $(w_1,\ldots,w_n)$ by the singleton $w^*$ $=$ 
		            $\big(\alpha\theta_1,``\beta\theta^*,\widehat{{\sf B}\theta_1}\ra\alpha\theta_1"\big)$
		            (see Fig. 2 in the proof of Proposition 12 of \cite{ChenLZZ11}).\\				
			\end{description}						
		\item[](``Type III"): The key to proving this case is also similar to the ideas of the previous two 
                cases above. Let us now assume a loop pattern $L$ $=$ $(w_1,\ldots,w_n)$ of Type III 
				such that:
				\begin{align}
					w_1&\,=\,\big(\alpha\theta_1,``\beta\theta_1,\widehat{{\sf B}\theta_1}
		                   \ra\alpha\theta_1"\big);\\
					w_2&\,=\,\big(\alpha_2\theta_2,\sigma_2\theta_2\big);\\
		                    &\hspace{1.5cm}\vdots\nonumber\\	
					w_{n-1}&\,=\,\big(\alpha_{n-1}\theta_{n-1},\sigma_{n-1}\theta_{n-1}\big);\\
					w_n&\,=\,\big(\alpha\theta_n,``\beta\theta_n,\widehat{{\sf B}\theta_n}
		                   \ra\alpha\theta_n"\big),						               
				\end{align}					
				where:
				\begin{itemize}
					\item $\sigma$ $=$ ``$\beta,\widehat{\sf B}$ 
		                  $\ra$ $\alpha$" is some TGD rule of $\Sigma$, where 
					      ${\sf body}(\sigma)$ $=$ $\{\beta\}$ $\cup$ ${\sf B}$, 
					      $\beta$ $\notin$ ${\sf B}$ and
					      $\widehat{\sf B}$ denotes the conjunctions of the atoms
					      in ${\sf B}$;
					\item $\theta_1$ and $\theta_n$ are assignments on the rule $\sigma$ $\in$ $\Sigma$;
				   	\item $\alpha_2\theta_2$ $=$ $\beta\theta_1$ $\in$ ${\sf body}(\sigma\theta_1)$.
				\end{itemize}
				Then we again define the assignment 
				$\theta':$ ${\sf var}(\sigma_n\theta_n)$ $\longrightarrow$ ${\sf var}(\sigma_1\theta_1)$ as
				follows:
				\begin{align}
					\theta'\,=&\,\big\{\,\theta_n(X)\mapsto\theta_n(X)\,\mbox{\Large $\mid$}\,
		                   X\in{\sf var}(\sigma)\mbox{ and }\theta_n(X)
		                   \in\bigcap_{i=1}^{i=n}{\sf var}(\alpha_i\theta_i)\,\big\}\nonumber\\
		           \,\cup&\,\big\{\,\theta_n(X)\mapsto\theta_1(X)\,\mbox{\Large $\mid$}\,
                                 X\in{\sf var}(\sigma)\mbox{ and }\theta_n(X)
                                 \notin\bigcap_{i=1}^{i=n}{\sf var}(\alpha_i\theta_i)\,\big\}.
				\end{align}
				Then similarly to the previous two case above, because 
				$\sigma\theta_1$ $\sim$ $\sigma\theta_n$, then it follows that with 
				$w^*_n$ $=$ $\big(\alpha\theta^*,``\beta\theta^*,\widehat{{\sf B}\theta^*}
			    \ra\alpha\theta^*\,"\big)$, where $\theta^*$ $=$ $\theta'\circ\theta_n$,
	            we have that $w^*_n$ $=$ $w_1$. Therefore, using again the same argument as to the proof of 
	            Proposition 12 in \cite{ChenLZZ11}, we have that any occurrence of the loop pattern  
	            $L$ $=$ $(w_1,\ldots,w_n)$ in a derivation tree $T(\Sigma)$ implies another derivation
	            tree $T'(\Sigma)$ such that $T(\Sigma)$ subsumes $T'(\Sigma)$ by ``replacing" the 
	            derivation path $L$ $=$ $(w_1,\ldots,w_n)$ by the singleton $w^*$ $=$ 	           
	            $\big(\alpha\theta_1,``\beta\theta^*,\widehat{{\sf B}\theta_1}\ra\alpha\theta_1"\big)$
	            (see Fig. 2 in the proof of Proposition 12 of \cite{ChenLZZ11}).\\
	\end{description}

	Finally, we conclude the proof by showing how a combination of loop pattern Types I-IV can be made to
	``contract" under some derivation path of a derivation tree. Indeed, using again similar ideas
	from \cite{ChenLZZ11}, let 
	\begin{align}
		P=(w_{11},\ldots,w_{1{k_1}},\underbrace{w'_{11},\ldots,w'_{1l}}_{L_1},w_{21},\ldots,w_{2{k_2}},
		                            \underbrace{w'_{21},\ldots,w'_{2l}}_{L_2},w_{31},\ldots,w_{3{k_3}})\nonumber	
	\end{align}		
	be a derivation path and assume that  
	$L_1$ $=$ $(w'_{11},\ldots,w'_{1l})$ and $L_2$ $=$ $(w'_{21},\ldots,w'_{2l})$ are loop patterns
	such that $L_1$ $\sim$ $L_2$ and both $L_1$ and $L_2$ are of 
	Type II, i.e., loop patterns $L_1$ and $L_2$ are of Type II occurring more than once. 
	Then there exists some $i$ $\in$ $\{1,\ldots,l\}$ , such that assuming
	$$w'_{li}=\big(\alpha_i\theta_{li},``\widehat{{\sf body_b}(\sigma_i\theta_{li})},
	                               \widehat{{\sf body_h}(\sigma_i\theta_{li})}\ra\alpha_i\theta_{li}"\big)$$
	\big(for $l$ $\in$ $\{1,2\}$\big), we have that 
    \begin{enumerate}
        \item $\mathsf{body_h}(\sigma_i\theta_{li})\cap \mathsf{body_b}(\sigma_i\theta_{li})=\emptyset$;
        \item $\alpha_{i+1}\theta_{li+1}\in \mathsf{body_b}(\sigma_i\theta_{li})$;
        \item $\mathsf{var}\big(\{\alpha_i\theta_{li}\}\cup \mathsf{body_h}(\sigma_i\theta_{li})\big)$
              $\cap$ $\mathsf{var}\big(\mathsf{body_b}(\sigma_i\theta_{li})\big)$ $=$
              $\emptyset$.		          
    \end{enumerate}
	Then we have that there exists some assignment $\theta^*$ such that
    $$w'_{1i}=\big(\alpha_{i}\theta_{1i},``\widehat{{\sf body_b}(\sigma_i\theta_{2i})}\theta^*,
    	                               \widehat{{\sf body_h}(\sigma_i\theta_{1i})}\ra\alpha_i\theta_{1i}"\big)$$
    i.e., ${\sf body_b}(\sigma_i\theta_{2i})\theta^*$ $=$ 
    ${\sf body_b}(\sigma_i\theta_{1i})$. Then it follows that we can replace the derivation path 
    $P$ with the derivation path 
    $$P^*=(w_{11},\ldots,w_{1{k_1}},w'_{11},\ldots,w'_{1i-1}w^*_i,w'_{2i+1},\ldots,w'_{2l},w_{31},\ldots,w_{3{k_3}}),$$ 
    where 
    $$w^*_{i}=\big(\alpha_{i}\theta_{1i},``\widehat{{\sf body_b}(\sigma_i\theta_{2i})}\theta^*,
     \widehat{{\sf body_h}(\sigma_i\theta_{1i})}\ra\alpha_i\theta_{1i}"\big).$$

    On the other hand, if we have a derivation path 
    $$P=(w_{11},\ldots,w_{1{k_1}},\underbrace{w'_1,\ldots,w'_l}_{L},w_1,\ldots,w_{2{k_2}})$$
    such that $L$ $=$ $(w'_1,\ldots,w'_l)$ is a loop pattern of either Types I, III or IV. 
    Then as we have seen for the cases of ``Type I, " ``Type IV, " and ``Type III" above,
    if follows that the derivation path $P$ can be replaced with a derivation path 
    $$P^*=(w_{11},\ldots,w_{1{k_1}},w^*,w_1,\ldots,w_{2{k_2}}),$$
    which is obtained from $P$ by replacing the loop $L$ $=$ $(w'_1,\ldots,w'_l)$
    with the singleton $w^*$.
    
    Finally, because we have from Proposition \ref{pro-loop} that there are only a finite 
    number of loop patterns (up to equivalence ``$\sim$"), then it follows that
    GLR TGDs can be characterized by only a finite number of derivation trees of bounded
    length.
\end{proof}

}
\end{document}